\documentclass{article}

%iclr 2021
\usepackage{iclr2021_conference,times}

% Optional math commands from https://github.com/goodfeli/dlbook_notation.
% \input{math_commands.tex}

\usepackage[utf8]{inputenc} % allow utf-8 input
\usepackage[T1]{fontenc}    % use 8-bit T1 fonts
\usepackage{hyperref}       % hyperlinks
\usepackage{url}            % simple URL typesetting
\usepackage{booktabs}       % professional-quality tables
\usepackage{amsfonts}       % blackboard math symbols
\usepackage{nicefrac}       % compact symbols for 1/2, etc.
\usepackage{microtype}      % microtypography

\usepackage{cite}
\usepackage{color}
\usepackage{latexsym}
\usepackage{amsmath}
\usepackage{amsthm}
\usepackage{amssymb}
\usepackage{algorithm}
\usepackage{algorithmic}
\usepackage{graphics} % for pdf, bitmapped graphics files
\usepackage{epsfig} % for postscript graphics files

\usepackage{enumitem}
\usepackage{multirow}
\usepackage{array}
\usepackage{bbding}
\usepackage{threeparttable}
\usepackage{thmtools, thm-restate}

\usepackage{subcaption}

\newcommand{\red}{\color{red}}

\usepackage{booktabs}
\usepackage{multirow}
\usepackage{makecell}
\usepackage{bigstrut,multirow,rotating}

% !TEX root = FedAvg.tex

%\renewcommand{\thesection}{\Roman{section}}
%\renewcommand{\thesubsection}{\Roman{section}-\Alph{subsection}}
%\renewcommand{\thesubsubsection}{\Alph{subsection}.\arabic{subsubsection}}

\newcommand{\g}{\mathbf{g}}

\renewcommand{\t}{\mathbf{t}}

\newcommand{\x}{\mathbf{x}}

\newcommand{\y}{\mathbf{y}}

\newcommand{\norm}[1]{\mbox{$\left\lVert #1 \right\rVert$}}

\newtheorem{lem}{Lemma}

\newtheorem{rem}{Remark}

\newtheorem{assum}{Assumption}
%\numberwithin{equation}[section]

%\renewcommand{\algorithmicrequire}{\textbf{Input:}}
%\renewcommand{\algorithmicensure}{\textbf{Output:}}
%\algtext*{EndWhile}% Remove "end while" text
%\algtext*{EndIf}% Remove "end if" text
%\algtext*{EndFor}% Remove "end while" text
%\algtext*{EndFunction}% Remove "end if" text
%\algtext*{EndProcedure}% Remove "end while" text
%%\algtext*{EndIf}% Remove "end if" text

%\newcommand{\algorithmicinitialize}{\textbf{Initialization:}}
%\newcommand{\algorithmicmainiterate}{\textbf{Main Iteration:}}
%\newcommand{\algorithmicprimal}{\textbf{Primal Newton Direction:}}
%\newcommand{\algorithmicdual}{\textbf{Dual Variables Update:}}

\title{Achieving \! Linear \! Speedup \! with \!\! Partial \!\! Worker \!\! Participation \!\! in \!\! Non-IID \!\! Federated \!\! Learning}

\author{Haibo Yang, \, Minghong Fang, \, and Jia Liu% <-this % stops a space
% \thanks{ Use footnote for providing further information
% about author (webpage, alternative address)---\emph{not} for acknowledging
% funding agencies.  Funding acknowledgements go at the end of the paper.} 
\\
Department of Electrical and Computer Engineering\\
The Ohio State University \\
Columbus, OH 43210 USA \\
\texttt{\{yang.5952, fang.841, liu.1736\}@osu.edu}%
}

\iclrfinalcopy
\begin{document}

\maketitle
%\thispagestyle{empty}
%\pagestyle{empty}

% !TEX root = FedAvg.tex

\begin{abstract}
Federated learning (FL) is a distributed machine learning architecture that leverages a large number of workers to jointly learn a model with decentralized data. 
FL has received increasing attention in recent years thanks to its data privacy protection, communication efficiency and a linear speedup for convergence in training (i.e., convergence performance increases linearly with respect to the number of workers).
However, existing studies on linear speedup for convergence are only limited to the assumptions of i.i.d. datasets across workers and/or full worker participation, both of which rarely hold in practice. So far, it remains an open question whether or not the linear speedup for convergence is achievable under non-i.i.d. datasets with partial worker participation in FL. 
In this paper, we show that the answer is affirmative. 
Specifically, we show that the federated averaging (FedAvg) algorithm (with two-sided learning rates) on non-i.i.d. datasets in non-convex settings achieves a convergence rate $\mathcal{O}(\frac{1}{\sqrt{mKT}} + \frac{1}{T})$ for full worker participation and a convergence rate $\mathcal{O}(\frac{\sqrt{K}}{\sqrt{nT}} + \frac{1}{T})$ for partial worker participation, where $K$ is the number of local steps, $T$ is the number of total communication rounds, $m$ is the total worker number and $n$ is the worker number in one communication round if for partial worker participation.
Our results also reveal that the local steps in FL could help the convergence and show that the maximum number of local steps can be improved to $T/m$ in full worker participation.
We conduct extensive experiments on MNIST and CIFAR-10 to verify our theoretical results.

\end{abstract}

%\begin{IEEEkeywords}
%Network Optimization, Decentralized Gradient Descent, Compression, Communication-efficiency.
%\end{IEEEkeywords}
% !TEX root = FedAvg.tex

\section{Introduction} \label{sec:intro}
Federated Learning (FL) is a distributed machine learning paradigm that leverages a large number of workers to collaboratively learn a model with decentralized data under the coordination of a centralized server.
Formally, the goal of FL is to solve an optimization problem, which can be decomposed as:
$$\min_{x \in \mathbb{R}^d} f(x) := \frac{1}{m} \sum_{i=1}^{m} F_i(x),$$
where $F_i(x) \triangleq \mathbb{E}_{\xi_i \sim D_i}[F_i(x, \xi_i)]$ is the local (non-convex) loss function associated with a local data distribution $D_i$ and $m$ is the number of workers.
FL allows a large number of workers (such as edge devices) to participate flexibly without sharing data, which helps protect data privacy.
However, it also introduces two unique challenges unseen in traditional distributed learning algorithms that are used typically for large data centers:

\begin{list}{\labelitemi}{\leftmargin=1em \itemindent=-0.0em \itemsep=.2em}
\item {\bf Non-independent-identically-distributed (non-i.i.d.) datasets across workers} (data heterogeneity):
In conventional distributed learning in data centers, the distribution for each worker's local dataset can usually be assumed to be i.i.d., i.e., $D_i = D, \forall i \in \{1, ..., m \}$.
Unfortunately, this assumption rarely holds for FL since data are generated locally at the workers based on their circumstances, i.e., $D_i \neq D_j$, for $i \neq j$.
It will be seen later that the non-i.i.d assumption imposes significant challenges in algorithm design for FL and their performance analysis.

\item {\bf Time-varying partial worker participation} (systems non-stationarity):
With the flexibility for workers' participation in many scenarios (particularly in mobile edge computing), workers may randomly join or leave the FL system at will, thus rendering the active worker set stochastic and time-varying across communication rounds.
Hence, it is often infeasible to wait for all workers' responses as in traditional distributed learning, since inactive workers or stragglers will significantly slow down the whole training process.
As a result, only a subset of the workers may be chosen by the server in each communication round, i.e., partial worker participation.
\end{list}

In recent years, the Federated Averaging method (FedAvg) and its variants ~\citep{mcmahan2016communication,li2018federated,hsu2019measuring,karimireddy2019scaffold,wang2019slowmo} have emerged as a prevailing approach for FL.
Similar to the traditional distributed learning, FedAvg leverages local computation at each worker and employs a centralized parameter server to aggregate and update the model parameters.
The unique feature of FedAvg is that each worker runs {\em multiple local stochastic gradient descent (SGD) steps} rather than just one step as in traditional distributed learning between two consecutive communication rounds.
For i.i.d. datasets and the full worker participation setting, \citet{stich2018local} and \citet{yu2019parallel} proposed two variants of FedAvg that achieve a convergence rate of $\mathcal{O}(\frac{mK}{T} + \frac{1}{\sqrt{mKT}})$ with a bounded gradient assumption for both strongly convex and non-convex problems, where $m$ is the number of workers, $K$ is the local update steps, and $T$ is the total communication rounds.
\citet{wang2018cooperative} and \citet{stich2019error} further proposed improved FedAvg algorithms to achieve an $\mathcal{O}(\frac{m}{T} + \frac{1}{\sqrt{mKT}})$ convergence rate without bounded gradient assumption.
Notably, for a sufficiently large $T$, the above rates become $\mathcal{O}(\frac{1}{\sqrt{mKT}})$\footnote{This rate also matches the convergence rate order of parallel SGD in conventional distributed learning.}, which implies a {\bf linear speedup} with respect to the number of workers.\footnote{To attain $\epsilon$ accuracy for an algorithm, it needs to take $\mathcal{O}(\frac{1}{\epsilon^2})$ steps with a convergence rate $\mathcal{O}(\frac{1}{\sqrt{T}})$, while needing $\mathcal{O}(\frac{1}{m \epsilon^2})$ steps if the convergence rate is $\mathcal{O}(\frac{1}{\sqrt{mT}})$ (the hidden constant in Big-O is the same). In this sense, one achieves a {\em linear speedup} with respect to the number of workers.} 
% The linear speedup indicates acceleration of training with respect to the number of workers, which enjoys the parallelism of large-scale paradigms.
This linear speedup is highly desirable for an FL algorithm because the algorithm is able to effectively leverage the massive parallelism in a large FL system.
However, with non-i.i.d. datasets and partial worker participation in FL, a fundamental open question arises:
\textit{Can we still achieve the same linear speedup for convergence, i.e., $\mathcal{O}(\frac{1}{\sqrt{mKT}})$, with non-i.i.d. datasets and under either full  or partial worker participation?}

In this paper, we show the answer to the above question is affirmative.
Specifically, we show that a {\em generalized FedAvg with two-sided learning rates} %derived from the framework in \citet{reddi2020adaptive}
achieves linear convergence speedup with non-i.i.d. datasets and under full/partial worker participation.
We highlight our contributions as follows:

\begin{list}{\labelitemi}{\leftmargin=1em \itemindent=0em \itemsep=.2em}
% \vspace{-0.7em}
\item For non-convex problems, we show that the convergence rate of the FedAvg algorithm on non-i.i.d. dataset are $\mathcal{O}(\frac{1}{\sqrt{mKT}} + \frac{1}{T})$ and $\mathcal{O}(\frac{\sqrt{K}}{\sqrt{nT}} + \frac{1}{T})$ for full and partial worker participation, respectively, where $n$ is the size of the partially participating worker set. 
This indicates that our proposed algorithm achieves a linear speedup for convergence rate for a sufficiently large $T$.
When reduced to the i.i.d. case, our convergence rate is $\mathcal{O}(\frac{1}{TK} + \frac{1}{\sqrt{mKT}})$, which is also better than previous works.
We summarize the convergence rate comparisons for both i.i.d. and non-i.i.d. cases in Table~\ref{tab: 1}.
It is worth noting that our proof does not require the bounded gradient assumption. 
We note that the SCAFFOLD algorithm \citep{karimireddy2019scaffold} also achieves the linear speedup but extra variance reduction operations are required, which lead to higher communication costs and implementation complexity. 
By contrast, we do not have such extra requirements in this paper.

\item In order to achieve a linear speedup, i.e., a convergence rate $\mathcal{O}(\frac{1}{\sqrt{mKT}})$, we show that the number of local updates $K$ can be as large as $T/m$, which improves the $T^{1/3}/m$ result previously shown in \citet{yu2019linear} and \citet{karimireddy2019scaffold}.
As shown later in the communication complexity comparison in Table~\ref{tab: 1}, a larger number of local steps implies relatively fewer communication rounds, thus less communication overhead.
Interestingly, our results also indicate that the number of local updates $K$ does not hurt but rather help the convergence with a proper learning rates choice in full worker participation.
This overcomes the limitation as suggested in \citet{li2019convergence} that local SGD steps might slow down the convergence ($\mathcal{O}(\frac{K}{T})$ for strongly convex case).
This result also reveals new insights on the relationship between the number of local steps and learning rate.
\end{list}

\textbf{Notation.}
In this paper,  we let $m$ be the total number of workers and $S_t$ be the set of active workers for the $t$-th communication round with size $|S_t| = n$ for some $n \in (0, m]$.
\footnote{
For simplicity and ease of presentation in this paper, we let $|S_t| =n$.
We note that this is not a restrictive condition and our proofs and results still hold for $|S_t| \geq n$, which can be easily satisfied in practice.
}
We use $K$ to denote the number of local steps per communication round at each worker.
We let $T$ be the number of total communication rounds.
In addition, we use boldface to denote matrices/vectors.
We let $[\cdot]_{t, k}^i$ represent the parameter of $k$-th local step in the $i$-th worker after the $t$-th communication.
We use $\norm{\cdot}_2$ to denote the $\ell^{2}$-norm.
For a natural number $m$, we use $[m]$ to represent the set $\{1, \cdots, m \}$. 

The rest of the paper is organized as follows.
In Section~\ref{sec:related_work}, we review the literature to put our work in comparative perspectives.
Section~\ref{sec:convergence} presents the convergence analysis for our proposed algorithm.
Section~\ref{sec:discussion} discusses the implication of the convergence rate analysis.
Section~\ref{sec:numerical} presents numerical results and Section~\ref{sec:conclusion} concludes this paper.
Due to space limitation, the details of all proofs and some experiments are provided in the supplementary material.
% !TEX TS-program = latex
% !TEX root = FedAvg.tex

\section{Related work} \label{sec:related_work}

\begin{table} 
\caption{Convergence rates of optimization methods for FL.}
\begin{threeparttable}[t] 
\begin{tabular}{ p{2em}<{\centering} p{9em}<{\centering} p{4em}<{\centering} p{3em}<{\centering} p{8em}<{\centering} p{8em}<{\centering}} 
\multirow{2}{*}{\bf Dataset} 
&\multirow{2}{*}{\bf Algorithm\tnote{6}}
&\multirow{2}{*}{\bf Convexity\tnote{7}}
&{\bf Partial}
&\multirow{1}{*}{\bf Convergence} & \multirow{1}{*}{\bf Communication} \\
& & & {\bf Worker} & {\bf Rate} & {\bf complexity}\\
\\ \hline 
 %  Dataset & Algorithm & Function Convexity & Bounded Gradient Assmuption & Partial Clients Participation & Convergence Rate \\ 
 % \hline
 \multirow{5}{1.5em}{\centering{IID}} & Stich1 & SC & $\times$ & $\mathcal{O}(\frac{mK}{T} + \frac{1}{\sqrt{mKT}})$ & $\mathcal{O}(\frac{mK}{\epsilon} + \frac{1}{mK \epsilon^2})$\\
 & Yu1 & NC & $\times$ & $\mathcal{O}(\frac{mK}{T} + \frac{1}{\sqrt{mKT}})$ & $\mathcal{O}(\frac{mK}{\epsilon} + \frac{1}{mK \epsilon^2})$\\
 & Wang & NC & $\times$ & $\mathcal{O}(\frac{m}{T} + \frac{1}{\sqrt{mKT}})$ & $\mathcal{O}(\frac{m}{\epsilon} + \frac{1}{mK \epsilon^2})$\\
 & Stich2 & NC & $\times$ & $\mathcal{O}(\frac{m}{T} + \frac{1}{\sqrt{mKT}})$ & $\mathcal{O}(\frac{m}{\epsilon} + \frac{1}{mK \epsilon^2})$\\
 & \textbf{This paper} & \textbf{NC} & \Checkmark & $ \mathcal{O}(\frac{1}{TK} + \frac{1}{\sqrt{mKT}})$ & $\mathcal{O}(\frac{1}{K \epsilon} + \frac{1}{mK \epsilon^2})$\\
 \hline
 \multirow{4}{1.5em}{\centering{NON-IID}} & Khaled \tnote{1} & C & $\times$ & $\mathcal{O}(\frac{m}{T} + \frac{1}{\sqrt{mT}})$ & $\mathcal{O}(\frac{m}{\epsilon} + \frac{1}{mK \epsilon^2})$\\
 & Yu2\tnote{2}  & NC & $\times$ & $\mathcal{O}(\frac{m}{TK} + \frac{1}{\sqrt{mKT}})$ & $\mathcal{O}(\frac{m}{K \epsilon} + \frac{1}{mK \epsilon^2})$\\
 & Li  & SC & \checkmark & $\mathcal{O}(\frac{K}{T})$ & $\mathcal{O}(\frac{K}{\epsilon})$\\
 & Karimireddy \tnote{3}  & NC & \checkmark & $\mathcal{O}(\frac{1}{T^{2/3}} + \frac{M}{\sqrt{SKT}})$ & $\mathcal{O}(\frac{1}{\epsilon^{3/2}} + \frac{M}{SK \epsilon^2})$\\
 & Karimireddy \tnote{4} & NC & \checkmark & $\mathcal{O}(\frac{1}{T} + \frac{1}{\sqrt{mKT}})$ & $\mathcal{O}(\frac{1}{\epsilon} + \frac{1}{mK \epsilon^2})$ \\
 & \textbf{This paper}\tnote{5}  & \textbf{NC} & \CheckmarkBold & $\mathcal{O}(\frac{1}{T} + \frac{1}{\sqrt{mKT}})$ & $\mathcal{O}(\frac{1}{\epsilon} + \frac{1}{mK \epsilon^2})$\\
 \hline
\end{tabular}
\begin{tablenotes}
     \item[1] Full gradients are used for each worker.
     \item[2] Local momentum is used at each worker.
     \item[3] A FedAvg algorithm with two-sided learning rates. $M^2 = \mathcal{O}(1) + \mathcal{O}(KS(1-\frac{S}{m}))$. $S = m$ ($S = n$) for full (partial) worker participation.
     \item[4] The SCAFFOLD algorithm in \citet{karimireddy2019scaffold} for non-convex case.
     \item[5] The convergence rate becomes $\mathcal{O}(\frac{1}{T} + \frac{\sqrt{K}}{\sqrt{nT}})$ under partial worker participation.
     \item[6] Shorthand notation for references: Stich1 := \citet{stich2018local}, Yu2 := \citet{yu2019parallel}, Wang:= \citet{wang2018cooperative}, Stich2:= \citet{stich2019error}; Khaled:= \citet{khaled2019first}, Yu2:=\citet{yu2019linear}, Li:= \citet{li2019convergence}, and Karimireddy:= \citet{karimireddy2019scaffold}.
     \item[7] Shorthand notation for convexity: SC: Strongly Convex, C: Convex, and NC: Non-Convex.
\end{tablenotes}
\end{threeparttable}%
\label{tab: 1} 
\end{table}

The federated averaging (FedAvg) algorithm was first proposed by \citet{mcmahan2016communication} for FL as a heuristic to improve communication efficiency and data privacy.
Since then, this work has sparked many follow-ups that focus on FL with i.i.d. datasets and full worker participation (also known as LocalSGD \citep{stich2018local,yu2019parallel,wang2018cooperative,stich2019error,lin2018don,khaled2019better,zhou2017convergence}).
Under these two assumptions, most of the theoretical works can achieve a linear speedup for convergence, i.e., $\mathcal{O}(\frac{1}{\sqrt{mKT}})$ for a sufficiently large $T$, matching the rate of the parallel SGD.
In addition, LocalSGD is empirically shown to be communication-efficient and enjoys better generalization performance ~\citep{lin2018don}.
For a comprehensive introduction to FL, we refer readers to \citet{li2019federated} and \citet{kairouz2019advances}.

For non-i.i.d. datasets, many works \citep{sattler2019robust,zhao2018federated,li2018federated,wang2019slowmo,karimireddy2019scaffold,huang2018loadaboost,jeong2018communication} heuristically demonstrated the performance of FedAvg and its variants.
On convergence rate with full worker participation, 
many works \citep{stich2018sparsified,yu2019linear,wang2018cooperative,karimireddy2019scaffold,reddi2020adaptive} can achieve linear speedup, but their convergence rate bounds could be improved as shown in this paper.
On convergence rate with partial worker participation,
\citet{li2019convergence} showed that the original FedAvg can achieve $\mathcal{O}(K/T)$ for strongly convex functions, which suggests that local SGD steps slow down the convergence in the original FedAvg.
\citet{karimireddy2019scaffold} analyzed a generalized FedAvg with two-sided learning rates under strongly convex, convex and non-convex cases.
However, as shown in Table~\ref{tab: 1}, none of them indicates that linear speedup is achievable with non-i.i.d. datasets under partial worker participation.
Note that the SCAFFOLD algorithm \citep{karimireddy2019scaffold} can achieve linear speedup but extra variance reduction operations are required, which lead to higher communication costs and implementation complexity.
% , but was unable to show the linear speedup for partial worker participation.
%($\mathcal{O}(\frac{M}{\sqrt{mKT}} + \frac{1}{T^{2/3}})$) case where $M^2 = \mathcal{O}(\sigma_L^2) + \mathcal{O}(Km(1-\frac{n}{m}))$. % and proposed a new algorithm SCAFFOLD using control variates as a variance reduction approach to attain better results.
% {\color{red} Then, they proposed SCAFFOLD using control variates as a variance reduction approach to achieve this rate.} 
% \citet{li2019convergence} showed %$\mathcal{O}(\frac{K}{T})$
% $\mathcal{O}(K/T)$ for strongly convex functions with full and partial worker participation,, which suggests that local steps slow down convergence.
In this paper, we show that this linear speedup can be achieved {\em without} any extra requirements.
% We summarize the convergence rate results in Table~\ref{tab: 1}.
% It is worth noting a linear speedup for convergence can be achieved with different settings.
% \citet{yu2019linear} theoretically proved a linear speedup for convergence based on the use of local momentum and full worker participation.
% \citet{reddi2020adaptive} showed a linear speedup for convergence in adaptive generalized FedAvg with full worker participation.
% \citet{karimireddy2019scaffold} proposed SCAFFOLD algorithm using control variates as a variance reduction approach to achieve the speedup.
% At the same time, \citet{wang2020tackling} provided a general framework to analyze federated optimization and proposed Fed-Nova using normalized averaging to attain fast convergence.
For more detailed comparisons and other algorithmic variants in FL and decentralized settings, we refer readers to \citet{kairouz2019advances}.

% !TEX root = FedAvg.tex

\section{Linear Speedup of the Generalized FedAvg with Two-Sided Learning Rates for Non-IID Datasets} \label{sec:convergence}

In this paper, we consider a FedAvg algorithm with two-sided learning rates as shown in Algorithm~\ref{alg:FedAVg}, which is generalized from previous works \citep{karimireddy2019scaffold,reddi2020adaptive}.
Here, workers perform multiple SGD steps using a worker optimizer to minimize the local loss on its own dataset, while the server aggregates and updates the global model using another gradient-based server optimizer based on the returned parameters.
Specifically, between two consecutive communication rounds, each worker performs $K$ SGD steps with the worker's local learning rate $\eta_L$.
We assume an unbiased estimator in each step, which is denoted by $\g_{t, k}^{i} = \nabla F_i(\x_{t, k}^i, \xi_{t, k}^i)$, where $\xi_{t, k}^i$ is a random {\em local} data sample for $k$-th steps after $t$-th communication round at worker $i$.
Then, each worker sends the accumulative parameter difference $\Delta_t^i$ to the server.
On the server side, the server aggregates all available $\Delta_t^i$-values and updates the model parameters with a {\em global} learning rate $\eta$.
The FedAvg algorithm with two-sided learning rates provides a natural way to decouple the learning of workers and server, thus utilizing different learning rate schedules for workers and the server.
The original FedAvg can be viewed as a special case of this framework with server-side learning rate being one.

%----------------------FedAvg algorithm
\begin{algorithm}[t!]
\caption{A Generalized FedAvg Algorithm with Two-Sided Learning Rates.} \label{alg:FedAVg} 
\begin{algorithmic}%[1]
\STATE 
Initialize $\x_0$ 
\FOR{$t = 0, \cdots, T - 1$} 
\STATE {The server samples a subset $S_t$ of workers with $|S_t| = n$.}
	\FOR{each worker $i \in S_t$ in parallel} 
		\STATE {$\x_{t, 0}^i = \x_{t}$}
		\FOR{$k = 0, \cdots, K - 1$}
			\STATE {Compute an unbiased estimate $\g_{t, k}^{i} = \nabla F_i(\x_{t, k}^i, \xi_{t, k}^i)$ of $\nabla F_i(\x_{t, k}^i)$}.
			\STATE {Local worker update: $\x_{t, k+1}^i = \x_{t, k}^i - \eta_L\g_{t, k}^{i}$}. 
		\ENDFOR \\
		\STATE {Let $\Delta_t^i = \x_{t, K}^i - \x_{t, 0}^i = - \eta_L \sum_{k=0}^{K-1} \g_{t, k}^i$. Send $\Delta_t^i$ to the server.}
	\ENDFOR
	\STATE{At Server:} \\
		\setlength\parindent{24pt} Receive $\Delta_t^i, i \in S$. \\
		Let $\Delta_t = \frac{1}{|S|} \sum_{i \in S} \Delta_t^i$. \\% \centerline{$ (\star) \Delta_t = \frac{1}{\sum_{i \in S} p_i} \sum_{i \in S} p_i \Delta_t^i$} \\
		Server Update: $\x_{t+1} = \x_t + \eta \Delta_t$. \\
		\setlength\parindent{24pt} Broadcasting $\x_{t+1}$ to workers.
	
\ENDFOR
\end{algorithmic}
\end{algorithm}
%----------------------
% The FedAvg algorithm runs $K$ local gradient updates in parallel on each worker and only averages the update at the central server after every $K$ local steps.
% By doing so, FedAvg runs more local steps on the worker side compared to conventional distributed learning and thus needing less communication rounds.

In what follows, we show that a linear speedup for convergence is achievable by the generalized FedAvg for non-convex functions on non-i.i.d. datasets.
% The key observation is that the local update in each worker should be sufficiently small to keep the variance manageable, which in turn depends on an intricate interaction between the number of local steps and learning rates.
We first state our assumptions as follows.

\begin{assum}($L$-Lipschitz Continuous Gradient) \label{a_smooth}
	There exists a constant $L > 0$, such that $ \| \nabla F_i(\x) - \nabla F_i(\y) \| \leq L \| \x - \y \|,
	\forall \x, \y \in \mathbb{R}^d, and \ i \in [m]$.
\end{assum}
\begin{assum}(Unbiased Local Gradient Estimator) \label{a_unbias}
	Let $\xi_t^i$ be a random local data sample in the $t$-th step at the $i$-th worker.
	The local gradient estimator is unbiased, i.e.,
	$\mathbb{E} [\nabla F_i(\x_t, \xi_t^i)] = \nabla F_i(\x_t)$, $\forall i \in [m]$, where the expectation is over all local datasets samples.
\end{assum}
\begin{assum}(Bounded Local and Global Variance) \label{a_variance}
	There exist two constants $\sigma_L > 0$ and $\sigma_G > 0$, such that  the variance of each local gradient estimator is bounded by
	$\mathbb{E} [\| \nabla F_i(\x_t, \xi_t^i) -  \nabla F_i(\x_t) \|^2] \leq \sigma_{L}^2$, $\forall i \in [m]$,
	and the global variability of the local gradient of the cost function is bounded by \\
	$\| \nabla F_i(\x_t) - \nabla f(\x_t) \|^2 \leq \sigma_{G}^2$, $\forall i \in [m], \forall t$.
\end{assum}
The first two assumptions are standard in non-convex optimization~\citep{ghadimi2013stochastic,bottou2018optimization}.
For Assumption~\ref{a_variance}, the bounded local variance is also a standard assumption.
We use a universal bound $\sigma_G$ to quantify the heterogeneity of the non-i.i.d. datasets among different workers.
In particular, $\sigma_G = 0$ corresponds to i.i.d. datasets.
This assumption is also used in other works for FL under non-i.i.d. datasets ~\citep{reddi2020adaptive, yu2019parallel,wang2019adaptive} as well as in decentralized optimization~\citep{kairouz2019advances}.
It is worth noting that we do {\em not} require a bounded gradient assumption, which is  often assumed in FL optimization analysis.

\subsection{Convergence analysis for full worker participation}
% \subsection{Convergence analysis of generalized FedAvg with full worker participation}
In this subsection,
we first analyze the convergence rate of the generalized FedAvg with two-sided learning rates under full worker participation, for which we have the following result:

\begin{restatable} {theorem}{FedAvg}
\label{thm:FedAvg}
Let constant local and global learning rates $\eta_L$ and $\eta$ be chosen as such that $\eta_L \leq \frac{1}{8LK}$ and $\eta \eta_L \leq \frac{1}{K L}$.
Under Assumptions~\ref{a_smooth}--\ref{a_variance} and with full worker participation, the sequence of outputs $\{ \x_k \}$ generated by Algorithm~\ref{alg:FedAVg} satisfies:
\begin{align*}
\min_{t \in [T]} \mathbb{E} [\|\nabla f(\x_t)\|_2^2] \leq \frac{f_0 - f_{*}}{c \eta \eta_L K T} + \Phi,
\end{align*}
where $\Phi \triangleq \frac{1}{c} [\frac{L \eta \eta_L}{2m} \sigma_L^2 + \frac{5K \eta_L^2 L^2}{2} (\sigma_L^2 + 6K \sigma_G^2)]$, $c$ is a constant, $f_0 \triangleq f(\x_0)$, $f_{*} \triangleq f(\x_{*})$ and the expectation is over the local dataset samples among workers.
\end{restatable}

\begin{rem}{\em
The convergence bound contains two parts: a vanishing term $\frac{f_0 - f_{*}}{c \eta \eta_L K T}$ as $T$ increases and a constant term $\Phi$ whose size depends on the problem instance parameters and is independent of $T$.
The vanishing term's decay rate matches that of the typical SGD methods.
}
\end{rem} 
% However, the statistical term has two parts, independent on $T$.
% This means that it has to decay the learning rate in order to converge just like SGD, otherwise only entering a error ball.
\begin{rem} {\em
The first part of $\Phi$ (i.e., $\frac{L \eta \eta_L}{2m} \sigma_L^2$) is due to the local stochastic gradients at each worker, which shrinks at rate $\frac{1}{m}$ as $m$ increases.
The cumulative variance of the $K$ local steps contributes to the second term in $\Phi$ (i.e., $\frac{5K \eta_L^2 L^2}{2} (\sigma_L^2 + 6K \sigma_G^2))$, which is independent of $m$ and largely affected by the data heterogeneity.
To make the second part small, an inverse relationship between the local learning rate and local steps should be satisfied, i.e., $\eta_L = \mathcal{O}(\frac{1}{K})$.
Specifically, note that the global and local variances are quadratically and linearly amplified by $K$.
This requires a sufficiently small $\eta_L$ to offset the variance between two successive communication rounds to make the second  term in $\Phi$ small.
This is consistent with the observation in strongly convex FL that a decaying learning rate is needed for FL to converge under non-i.i.d. datasets even if full gradients used in each worker~\citep{li2019convergence}.
However, we note that our explicit inverse relationship between $\eta_L$ and $K$ in the above is new.
Intuitively, the $K$ local steps with a sufficiently small $\eta_L$ can be viewed as one SGD step with a large learning rate.
}
\end{rem}

With Theorem~\ref{thm:FedAvg}, we immediately have the following convergence rate for the generalized FedAvg algorithm with a proper choice of two-sided learning rates:

% \begin{cor} \label{cor:FedAvg}
\begin{restatable}{corollary}{FedAvgC}
\label{cor:FedAvg}
Let $\eta_L = \frac{1}{\sqrt{T}KL}$ and $\eta = \sqrt{Km}$. 
The convergence rate of the generalized FedAvg algorithm under full worker participation is $\min_{t \in [T]} \mathbb{E}[\|\nabla f(\x_t)\|_2^2] = \mathcal{O} \bigg(\frac{1}{\sqrt{mKT}} + \frac{1}{T} \bigg)$.
\end{restatable}
% \end{cor}

\begin{rem}{\em
The generalized FedAvg algorithm with two-sided learning rates can achieve a linear speedup for non-i.i.d. datasets, i.e., a $\mathcal{O}(\frac{1}{\sqrt{mKT}})$ convergence rate as long as $T \geq mK$.
Although many works have achieved this convergence rate asymptotically, we improve the maximum number of local steps $K$ to $T/m$, which is significantly better than the state-of-art bounds such as $T^{1/3}/m$ shown in \citep{karimireddy2019scaffold,yu2019linear,kairouz2019advances}.
% Under the same convergence rate,
% more local steps indicates relatively fewer communication rounds, thus less communication overheads.
Note that a larger number of local steps implies relatively fewer communication rounds, thus less communication overhead.
See also the communication complexity comparison in Table~\ref{tab: 1}.
% For a fixed iteration $R = K \times T$, the total communication round in our analysis with maximum feasible local steps is $T \geq (mR)^{1/2}$ while these analyses imply $T \geq (mR)^{3/4}$ \citep{karimireddy2019scaffold,yu2019linear,kairouz2019advances} or $T \geq (m^{3/2}R^{1/2})$ \citep{wang2018cooperative,stich2019error}.
For example, when $T = 10^6$ and $m = 100$ (as used in \citep{kairouz2019advances}), the local steps in our algorithm is $K \leq T/m = 10^4$. However, $K \leq \frac{T^{1/3}}{m} = 1$ means that no extra local steps can be taken to reduce communication costs.
%This theoretically shows the advantage of our proposed FedAvg in term of the number of local steps that can be supported in the non-i.i.d. case.
%despite the fact that it has been shown useful in numerical results ~\citep{mcmahan2016communication} and i.i.d. case ~\citep{stich2018local,lin2018don}.
% Note that the SCAFFOLD algorithm ~\citep{karimireddy2019scaffold} achieves the same results, but in each communication round each worker needs to communicate both model parameters and an extra control variate, which would double the communication overhead.
}
\end{rem}

\begin{rem}{\em
	When degenerated to the i.i.d. case ($\sigma_G = 0$), the convergence rate becomes $\mathcal{O}(\frac{1}{TK} + \frac{1}{\sqrt{mKT}})$, which has a better first term in the bound compared with previous work as shown in Table~\ref{tab: 1}.
}
\end{rem}

% \begin{rem}{
% The generalized FedAvg algorithm with two-sided learning rates can achieve a linear speedup for non-i.i.d. datasets, i.e., a $\mathcal{O}(\frac{1}{\sqrt{mKT}})$ convergence rate as long as $T \geq mK$.
% In other words, the maximum number of local updates $K$ is improved  to $T/m$, which is significantly better than the $T^{1/3}/m$ result in FedAvg analysis in \citet{karimireddy2019scaffold} and other results~\citep{yu2019linear,kairouz2019advances}.
% More local steps indicates relatively fewer communication rounds, thus less communication overheads. 
% {\color{red} Note that the SCAFFOLD algorithm ~\citep{karimireddy2019scaffold} achieves the same results but with extra control variates which would double the communication overhead.} \kliu{Hard to understand.}
% {\color{red}Also, when degenerated to the i.i.d. case, the convergence rate is $\mathcal{O}(\frac{1}{TK} + \frac{1}{\sqrt{mKT}})$, which has a better first term in the bound compared with previous work as shown in Table~\ref{tab: 1}.}
% \kliu{Not clear how.}
% }
% \end{rem}

\subsection{Convergence analysis for partial worker participation}
% \subsection{Convergence analysis of the generalized FedAvg with partial worker participation}
Partial worker participation in each communication round may be more practical than full worker participation due to many physical limitations of FL in practice (e.g., excessive delays because of too many devices to poll, malfunctioning devices, etc.). 
Partial worker participation can also accelerate the training by neglecting stragglers.
We consider two sampling strategies proposed by \citet{li2018federated} and \citet{li2019convergence}. 
Let $S_t$ be the participating worker index set at communication round $t$ with $|S_t| = n$, $\forall t$, for some $n\in(0,m]$. 
$S_t$ is randomly and independently selected either with replacement (Strategy 1) or without replacement (Strategy 2) sequentially according to the sampling probabilities $p_i, \forall i \in [m]$.
% Each member of $S_t$ is sampled from $[m]$ independently either with replacement (Strategy 1) or without replacement (Strategy 2).
For each member in $S_t$, we pick a worker from the entire set $[m]$ uniformly at random with probability $p_i = \frac{1}{m}, \forall i \in [m]$. 
That is, selection likelihood for anyone worker $i \in S_t$ is $p = \frac{n}{m}$. 
Then we have the following results:

\begin{restatable} {theorem}{FedAvgPartial}
\label{thm:FedAvg_partial}
Under Assumptions~\ref{a_smooth}--\ref{a_variance} with partial worker participation, the sequence of outputs $\{ \x_k \}$ generated by Algorithm~\ref{alg:FedAVg} with constant learning rates $\eta$ and $\eta_L$ satisfies:\\
$$\min_{t \in [T]} \mathbb{E}[\|\nabla f(\x_t)\|_2^2] \leq \frac{f_0 - f_{*}}{c \eta \eta_L K T} + \Phi,$$
where $f_0 = f(\x_0)$, $f_{*} = f(\x_{*})$, and the expectation is over the local dataset samples among workers.

For sampling Strategy 1, let $\eta$ and $\eta_L$ be chosen as such that $\eta_L \leq \frac{1}{8LK}$, $\eta \eta_L K L < \frac{n-1}{n}$ and $30 K^2 \eta_L^2 L^2 - \frac{L \eta \eta_L}{n} (90 K^3 L^2 \eta_L^2 + 3K) < 1$. It then holds that:
$$\Phi \triangleq \frac{1}{c} \bigg[\frac{L \eta \eta_L}{2n} \sigma_L^2 +  \frac{3 L K \eta \eta_L}{2n} \sigma_G^2 + (\frac{5 K \eta_L^2 L^2}{2} + \frac{ 15 K^2 \eta \eta_L^3 L^3}{2n}) (\sigma_L^2 + 6K \sigma_G^2)  \bigg] .$$
For sampling Strategy 2, let $\eta$ and $\eta_L$ be chosen as such that $\eta_L \leq \frac{1}{8LK} $, $\eta \eta_L K L \leq \frac{n(m-1)}{m(n-1)}$ and $10 K^2 \eta_L^2 L^2 - L \eta \eta_L \frac{m-n}{n(m-1)} (90K^3 \eta_L^2L^2 + 3K) < 1$. 
It then holds that: 
$$\Phi \triangleq \frac{1}{c} \bigg[\frac{L \eta \eta_L}{2n} \sigma_L^2 + 3 L K \eta \eta_L \frac{m-n}{2n(m-1) } \sigma_G^2 + \bigg(\frac{5 K \eta_L^2 L^2}{2} + 15 K^2 \eta \eta_L^3 L^3 \frac{m-n}{2n(m-1)}\bigg) (\sigma_L^2 + 6K \sigma_G^2) \bigg].$$
\end{restatable}

% \begin{proof}[Proof Sketch]
% The proof for partial worker participation follows a similar approach as that for the full worker participation, with the key difference in taking the worker sampling randomness into consideration.
% Thanks to the independence in worker sampling, the extra variances arising from partial worker participation can be quantified by carefully examining the variance from each worker. 
% We relegate the full proof to Appendix~\ref{proof_partial} due to space limitation.
% \end{proof}

% % \begin{thm} \label{thm:FedAvg_partial_1}
% \begin{restatable} {theorem}{FedAvgPartialOne}
% \label{thm:FedAvg_partial_1}
% Under Assumptions~\ref{a_smooth}--\ref{a_variance} with sampling strategy 1, the sequence of output $\{ \x_k \}$ generated by Algorithm~\ref{alg:FedAVg} with constant learning rate $\eta$ and $\eta_L$ if $\eta_L \leq \frac{1}{8LK}$ satisfies:\\
% $$\min_{t \in [T]} \mathbb{E}\|\nabla f(\x_t)\|_2^2 \leq \frac{f_0 - f_{*}}{c \eta \eta_L K T} + \Phi$$
% where $\Phi = \frac{L \eta \eta_L}{2n} \sigma_L^2 + \frac{5K \eta_L^2 L^2}{2} (\sigma_L^2 + 2K \sigma_G^2)$ and $c$ is a constant.
% \end{restatable}
% % \end{thm}

% \begin{cor} \label{cor:FedAvg_partial_1}
From Theorem~\ref{thm:FedAvg_partial}, we immediately have the following convergence rate for the generalized FedAvg algorithm with a proper choice of two-sided learning rates:

\begin{restatable} {corollary}{FedAvgCpartial}
\label{cor:FedAvg_partial}
Let $\eta_L = \frac{1}{\sqrt{T}KL}$ and $\eta = \sqrt{Kn}$. 
The convergence rate of the generalized FedAvg algorithm under partial worker participation and both sampling strategies are:
$$\min_{t \in [T]} \mathbb{E}\|\nabla f(\x_t)\|_2^2 \leq \mathcal{O}\bigg(\frac{\sqrt{K}}{\sqrt{nT}} + \frac{1}{T} \bigg) .$$
\end{restatable}
% \end{cor}

% \begin{thm} \label{thm:FedAvg_partial_2}
% \begin{restatable} {theorem}{FedAvgPartialTwo}
% \label{thm:FedAvg_partial_2}
% Under Assumptions~\ref{a_smooth}--\ref{a_variance} with sampling strategy 2, the sequence of output $\{ \x_k \}$ generated by Algorithm~\ref{alg:FedAVg} with constant learning rate $\eta$ and $\eta_L$ if $\eta_L \leq \frac{1}{8LK} $ and $\eta \eta_L K L \leq \frac{n(m-1)}{m(n-1)}$ satisfies:\\
% $$
% 	\min_{t \in [T]} \mathbb{E}\|\nabla f(\x_t)\|_2^2 \leq \frac{f_0 - f_{*}}{c \eta \eta_L K T} + \Phi
% $$
% $\Phi = \frac{1}{c} [\frac{L \eta \eta_L}{2n} \sigma_L^2 + 3 L \eta \eta_L \frac{m-n}{2n(m-1) } \sigma_G^2 + (\frac{5 K \eta_L^2 L^2}{4} + 15 K \eta \eta_L^3 L^3 \frac{m-n}{2n(m-1)}) (\sigma_L^2 + 2K \sigma_G^2)]$ and $c$ is a constant.
% \end{restatable}
% \end{thm}

% \begin{cor} \label{cor:FedAvg_partial_2}
% \begin{restatable} {corollary}{FedAvgCpartialTwo}
% \label{cor:FedAvg_partial_2}
% Suppose $\eta_L = \frac{1}{\sqrt{T}KL}, \eta = \sqrt{Kn}$, the convergence of FedAvg with partial workers participation using sampling strategy 2 is
% $$\min_{t \in [T]} \mathbb{E}\|\nabla f(\x_t)\|_2^2 \leq \mathcal{O}(\frac{1}{\sqrt{nKT}} + \frac{1}{T} ) $$
% \end{restatable}
% \end{cor}

\begin{rem} {\em
The convergence rate bound for partial worker participation has the same structure but with a larger variance term.
This implies that the partial worker participation through the uniform sampling does not result in fundamental changes in convergence (in order sense) except for an amplified variance due to fewer workers participating and random sampling.
The intuition is that the uniform sampling (with/without replacement) for worker selection yields a good approximation of the entire worker distribution in expectation, which reduces the risk of distribution deviation due to the partial worker participation.
As shown in Section ~\ref{sec:numerical}, the distribution deviation due to fewer worker participation could render the training unstable, especially in highly non-i.i.d. cases.
}
\end{rem}

\begin{rem}{\em
The generalized FedAvg with partial worker participation under non-i.i.d. datasets can still achieve a linear speedup $\mathcal{O}(\frac{\sqrt{K}}{\sqrt{nT}})$ with proper learning rate settings as shown in Corollary~\ref{cor:FedAvg_partial}.
In addition, when degenerated to i.i.d. case ($\sigma_G = 0$), the convergence rate becomes $\mathcal{O}(\frac{1}{TK} + \frac{1}{\sqrt{nKT}})$.
}
\end{rem}

\begin{rem}{\em
Here, we let $|S_t| =n$ only for ease of presentation and better readability.
We note that this is not a restrictive condition.
We can show that $|S_t| =n$ can be relaxed to $|S_t| \geq n, \forall t \in [T]$ and the same convergence rate still holds.
In fact, our full proof in Appendix~\ref{proof_partial} is for $|S_t| \geq n$.
}
\end{rem}

\section{Discussion} \label{sec:discussion}
In light of above results, in what follows, we discuss several insights from the convergence analysis:

\textbf{Convergence Rate:}
We show that the generalized FedAvg algorithm with two-sided learning rates can achieve a linear speedup, i.e., an $\mathcal{O}(\frac{1}{\sqrt{mKT}})$ convergence rate with a proper choice of hyper-parameters.
Thus, it works well in large FL systems, where massive parallelism can be leveraged to accelerate training.
The key challenge in convergence analysis stems from the different local loss functions (also called ``model drift'' in the literature) among workers due to the non-i.i.d. datasets and local steps.
As shown above, we obtain a convergence bound for the generalized FedAvg method containing a vanishing term and a constant term (the constant term is similar to that of SGD).
In contrast, the constant term in SGD is only due to the local variance.
Note that, similar to SGD, the iterations do not diminish the constant term.
The local variance $\sigma_L^2$ (randomness of stochastic gradients), global variability $\sigma_G^2$ (non-i.i.d. datasets), and the number of local steps $K$ (amplification factor) all contribute to the constant term, but the total global variability in $K$ local steps dominates the term.
When the local learning rate $\eta_L$ is set to an inverse relationship with respect to the number of local steps $K$, the constant term is controllable.
An intuitive explanation is that the $K$ small local steps can be approximately viewed as one large step in conventional SGD.
So this speedup and the more allowed local steps can be largely attributed to the two-sided learning rates setting.

% This necessitates a decaying (local) learning rate for the generalized FedAvg method to converge to a singleton rather than a neighborhood.

% \textbf{Learning Rate.}
% We theoretically show that the local learning rate $\eta_L$ should have an inverse relationship with the number of local steps $K$.
% An intuitive explanation is that the $K$ local steps with a sufficiently small local learning rate can be approximately viewed as one SGD step with a relatively large learning rate.
% Thus, with a proper choice of the local learning rate, we can achieve a linear speedup for convergence, i.e., $\mathcal{O}(\frac{1}{\sqrt{mKT}})$ (or $\mathcal{O}(\frac{1}{\sqrt{nKT}})$) under non-i.i.d. datasets and full (or partial) worker participation.
% This speedup and the more local steps that are allowed can be partly attributed to the two-sided learning rates setting.
% It enjoys the same convergence rate as SCAFFOLD~\citep{karimireddy2019scaffold} {\color{red} as a two learning rates algorithm but without extra control variates that would double the communication overheads} \kliu{Not clear.}.
% Also, although we only show the results for constant learning rates, it is straightforward to extend our analysis to diminishing learning rates (e.g., $\mathcal{O}(t^{-\alpha})$, with $\alpha \leq 1$).

\textbf{Number of Local Steps:}
% To achieve a linear speedup for convergence, it follows from Corollary ~\ref{cor:FedAvg} that $T \geq mK$.
% That is, the number of local steps $K$ should be no larger than $T/m$ for a fixed $T$.
Besides the result that the maximum number of local steps is improved to $K \leq T/m$, we also show that the local steps could help the convergence with the proper hyper-parameter choices, which supports previous numerical results ~\citep{mcmahan2016communication,stich2018local,lin2018don} and is verified in different models with different non-i.i.d. degree datasets in Section~\ref{sec:numerical}.
However, there are other results showing the local steps slow down the convergence \citep{li2019convergence}.
% whose convergence rate is $\mathcal{O}(\frac{K}{T})$.
We believe that whether local steps help or hurt the convergence in FL worths further investigations.

\textbf{Number of Workers:}
We show that the convergence rate improves substantially as the the number of workers in each communication round increases.
This is consistent with the results for i.i.d. cases in \citet{stich2018local}.
For i.i.d. datasets, more workers means more data samples and thus less variance and better performance.
For non-i.i.d. datasets, having more workers implies that the distribution of the sampled workers is a better approximation for the distribution of all workers. 
This is also empirically observed in Section~\ref{sec:numerical}.
On the other hand, the sampling strategy plays an important role in non-i.i.d. case as well.
Here, we adopt the uniform sampling (with/without replacement) to enlist workers to participate in FL.
Intuitively, the distribution of the sampled workers' collective datasets under uniform sampling yields a good approximation of the overall data distribution in expectation.
%Thus, the number of workers and the sampling strategy are two key elements for partial worker participation in FL.
%We suggest that uniform sampling could be a good sampling strategy in practice.

Note that, in this paper, we assume that every worker is available to participate once being enlisted.
However, this may not always be feasible.
In practice, the workers need to be in certain states in order to be able to participate in FL (e.g., in charging or idle states, etc. \citep{eichner2019semi}).
Therefore, care must be taken in sampling and enlisting workers in practice.
We believe that the joint design of sampling schemes and the generalized FedAvg algorithm will have a significant impact on the convergence, which needs further investigations.

\section{Numerical Results} \label{sec:numerical}
We perform extensive experiments to verify our theoretical results.
We use three models: logistic regression (LR), a fully-connected neural network with two hidden layers (2NN) and a convolution neural network (CNN) with the non-i.i.d. version of MNIST \citep{lecun1998gradient} and one ResNet model with CIFAR-10 \citep{krizhevsky2009learning}.
% To obtain non-i.i.d. datasets, we take two different approaches to generate a federated MNIST dataset and one method to generate a federated CIFAR-10 dataset.
Due to space limitation, we relegate some experimental results in the supplementary material. % including model's description, generation method of non-i.i.d. datasets, and hyper-parameters tuning.
% The code is available online \footnote{\url{https://github.com/yang-hb/FedAvg_frame}}.

\begin{figure*}[!ht]
	\begin{minipage}[t]{0.32\linewidth}
	\centering
	{\includegraphics[width=1.\textwidth]{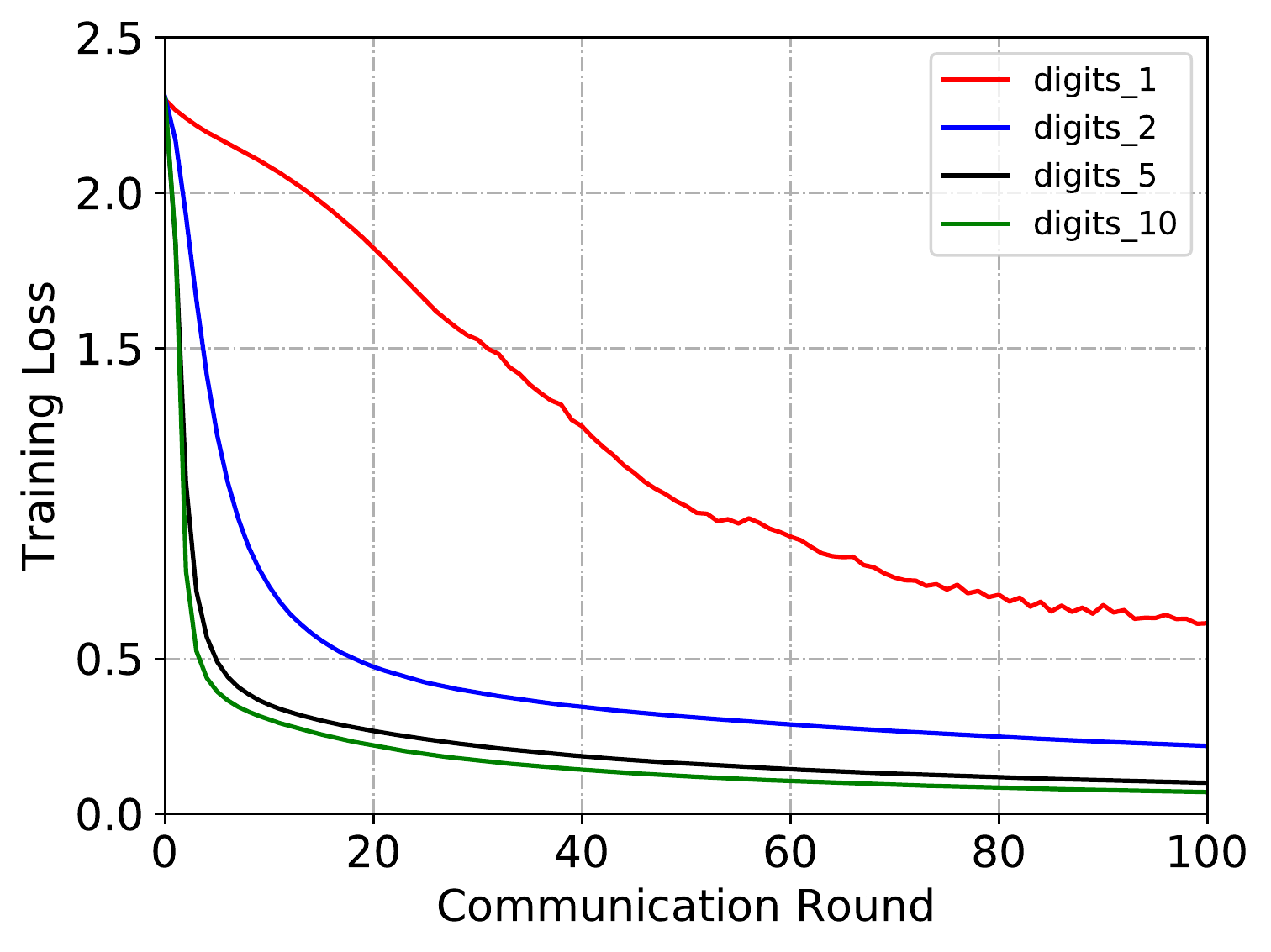}} 
	\end{minipage}
\hspace{0.009\textwidth}
	\begin{minipage}[t]{0.32\linewidth}
	\centering
	{\includegraphics[width=1.\textwidth]{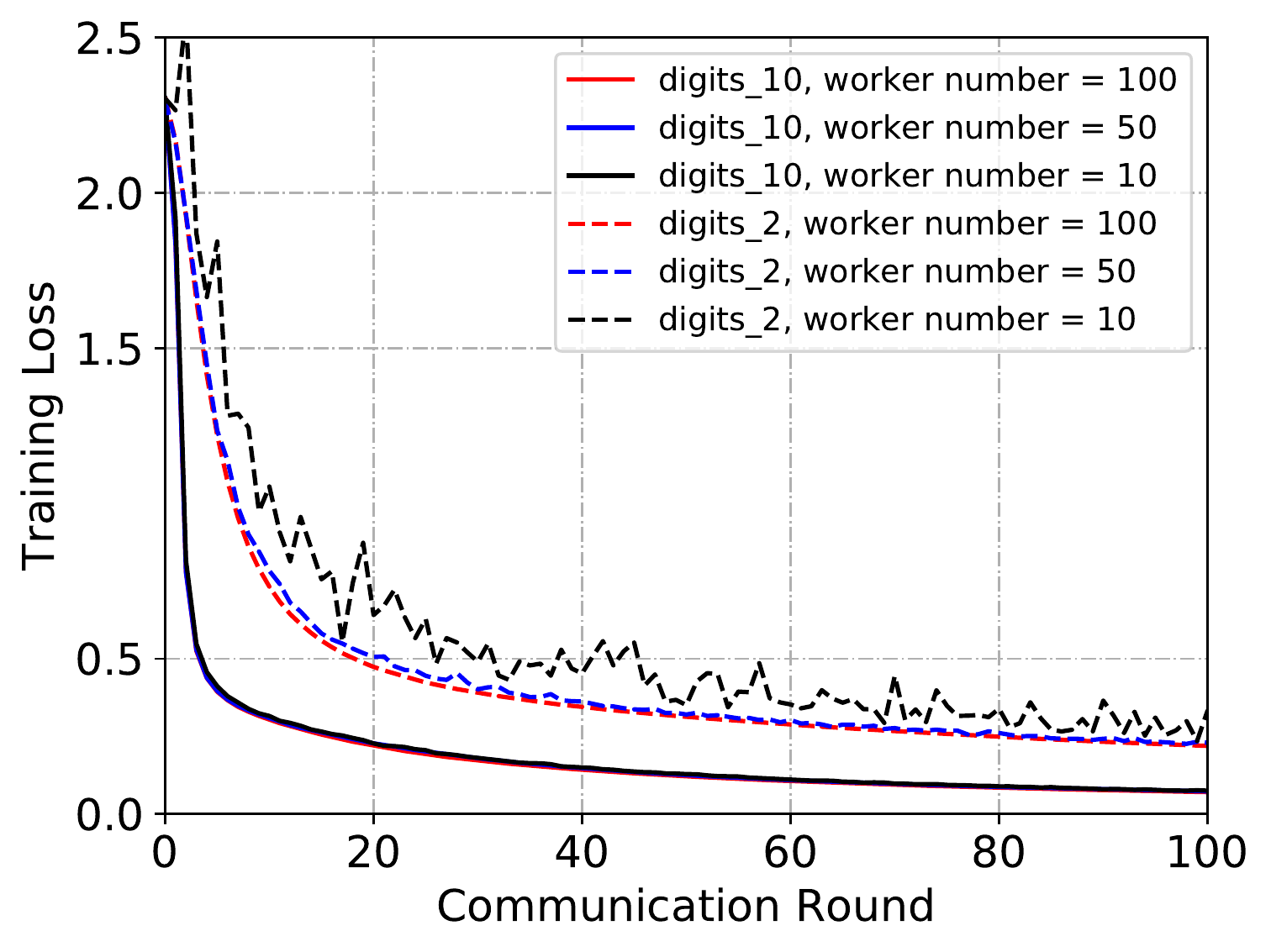}}
	\end{minipage}
\hspace{0.009\textwidth}
	\begin{minipage}[t]{0.32\linewidth}
	\centering
	{\includegraphics[width=1.\textwidth]{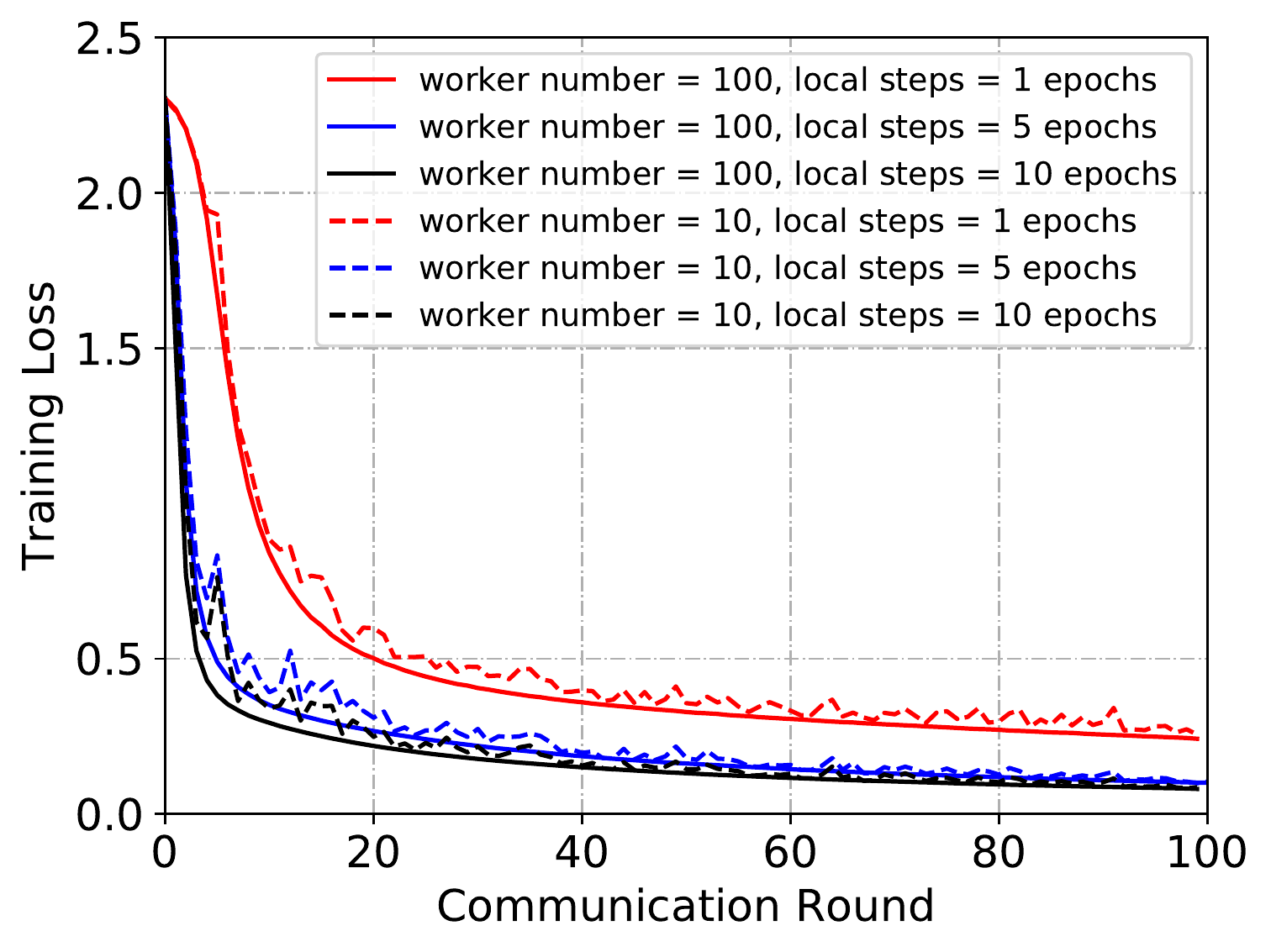}}
	\end{minipage}

	\begin{minipage}[t]{0.32\linewidth}
	\centering
	{\includegraphics[width=1.\textwidth]{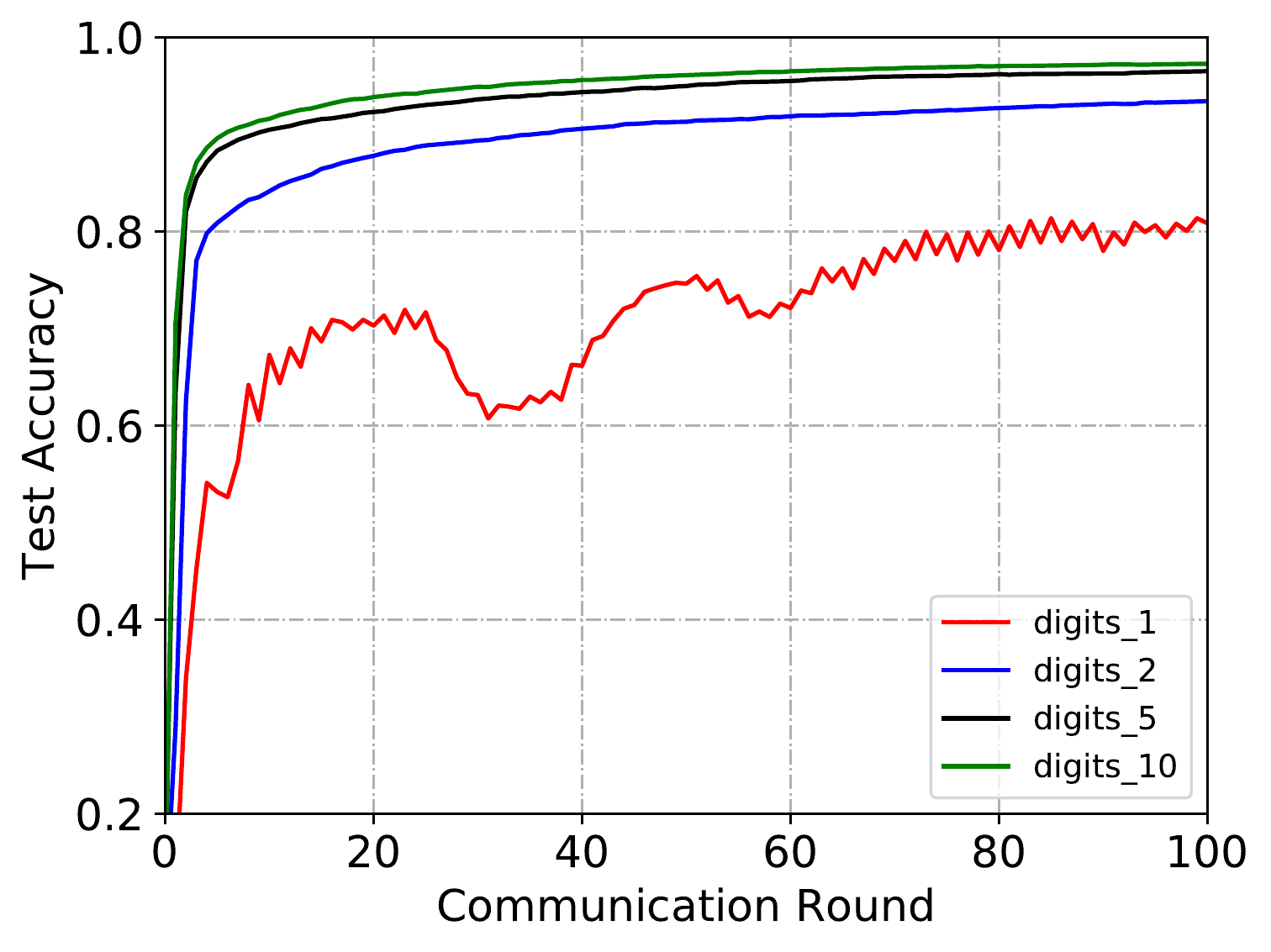}}
	\text{(a) Impact of non-i.i.d. datasets.} 
	\end{minipage}
\hspace{0.009\textwidth}
	\begin{minipage}[t]{0.32\linewidth}
	\centering
	{\includegraphics[width=1.\textwidth]{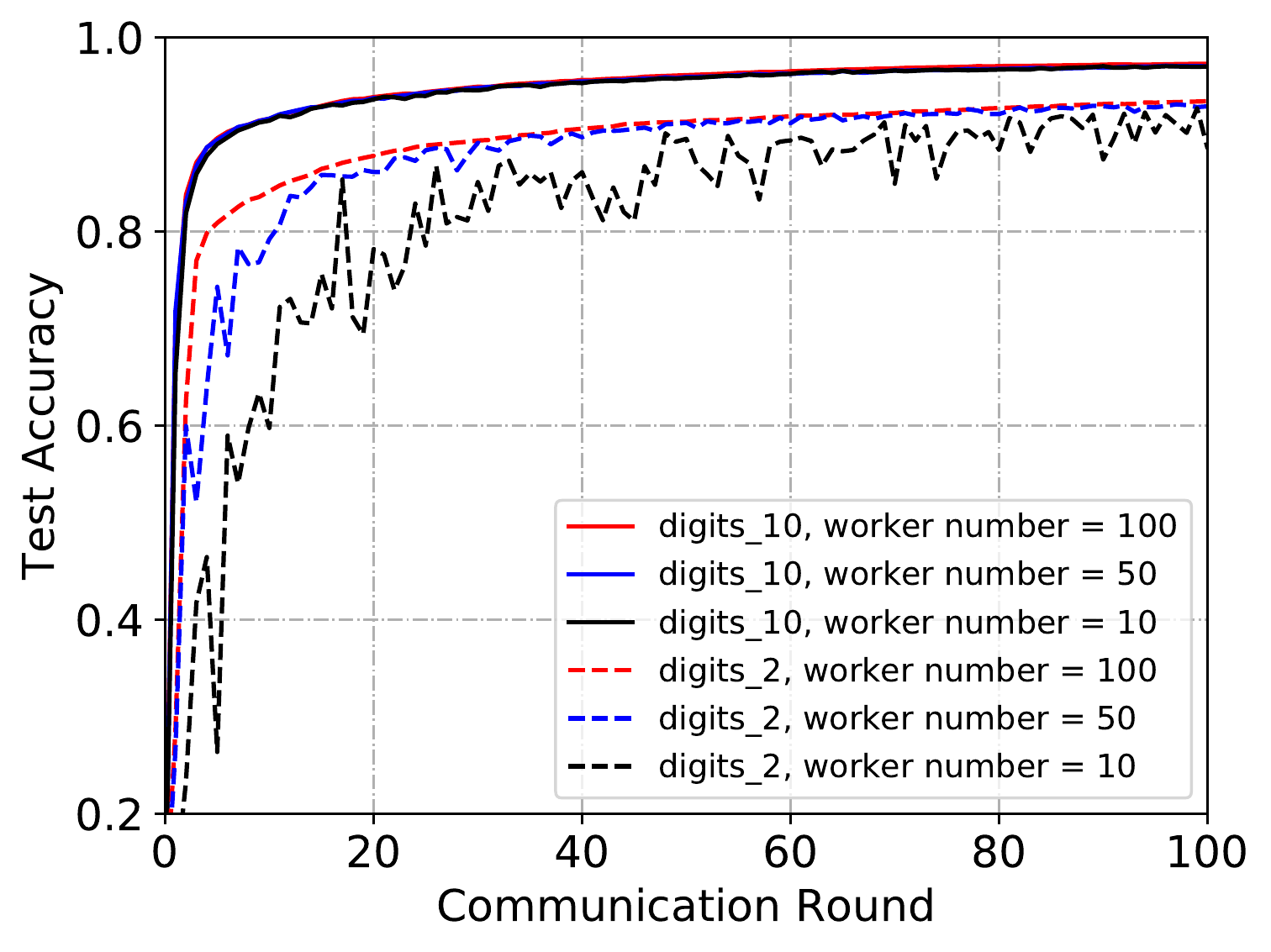}}
	\text{ {\addtolength{\leftskip}{10 mm}  (b) Impact of worker number.}}
	\end{minipage}
\hspace{0.009\textwidth}
	\begin{minipage}[t]{0.32\linewidth}
	\centering
	{\includegraphics[width=1.\textwidth]{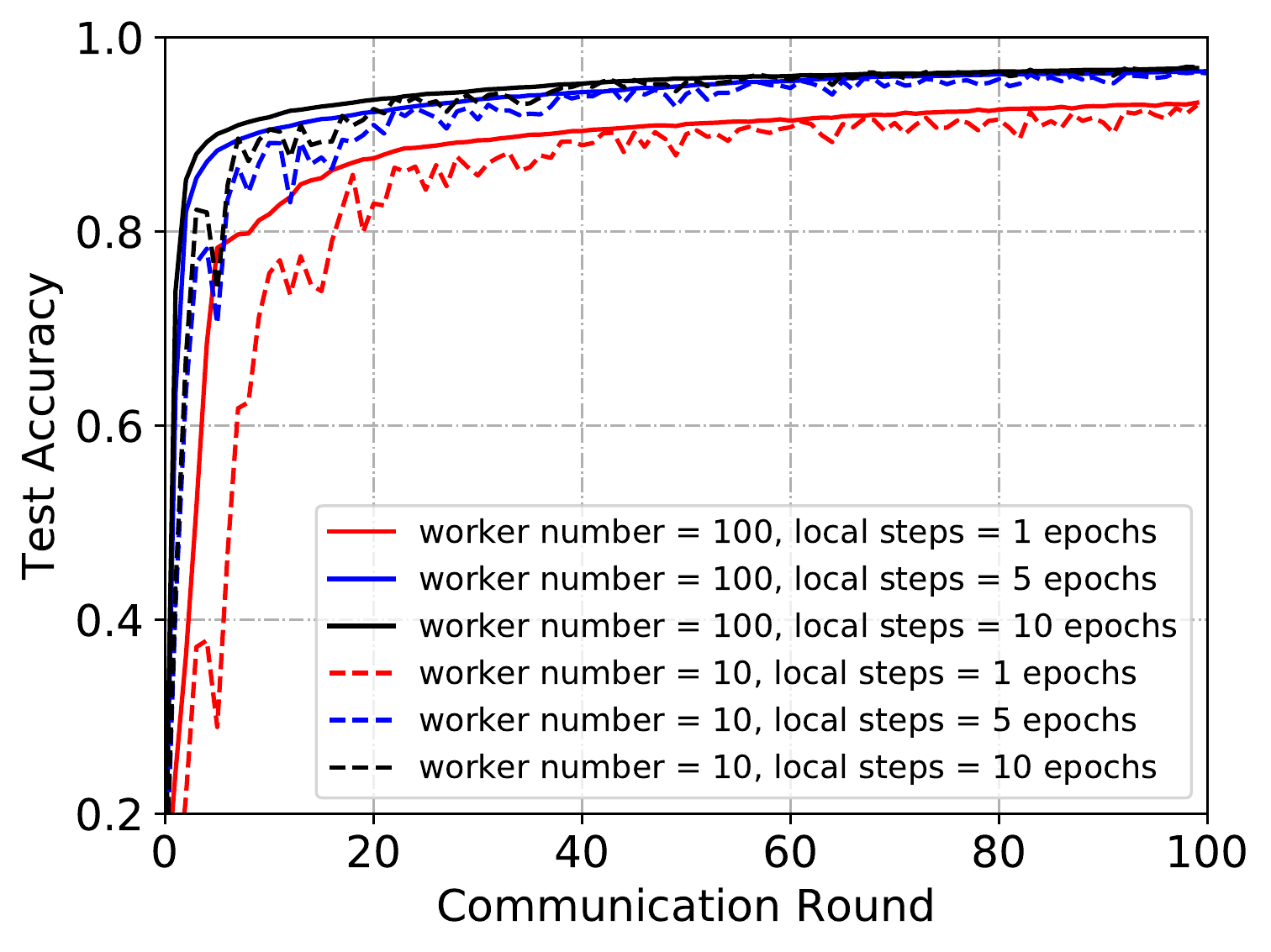}}
	\text{(c) Impact of local steps}
	\end{minipage}
\caption{Training loss (top) and test accuracy (bottom) for the 2NN model with hyper-parameters setting: local learning rate 0.1, global learning rate 1.0: (a) worker number 100, local steps 5 epochs; (b) local steps 5 epochs; (c) 5 digits in each worker's dataset.}
\label{fig1}
\vspace{-.1in}
\end{figure*}%

% We show part of the main observations here.
% and the detail is in Appendix. %~\ref{appdx2}.
In this section, we elaborate the results under non-i.i.d. MNIST datasets for the 2NN.
We distribute the MNIST dataset among $m= 100$ workers randomly and evenly in a digit-based manner such that the local dataset for each worker contains only a certain class of digits.
The number of digits in each worker's dataset represents the non-i.i.d. degree.
For $digits\_10$, each worker has training/testing samples with ten digits from $0$ to $9$, which is essentially an i.i.d. case.
For $digits\_1$, each worker has samples only associated with one digit, which leads to highly non-i.i.d. datasets among workers.
For partial worker participation, we set the number of workers $n=10$ in each communication round.
% In terms of worker participation, one is full worker participation $n=100$ and another is partial worker participation $n=10$.

\textbf{Impact of non-i.i.d. datasets:}
As shown in Figure~\ref{fig1}(a), for the 2NN model with full worker participation, the top-row figures are for training loss versus communication round and the bottom-row are for test accuracy versus communication round. 
We can see that the generalized FedAvg algorithm converges under non-i.i.d. datasets with a proper learning rate choice in both cases.
For five digits ($digits\_5$) in each worker's dataset with full (partial) worker participation in Figure~\ref{fig1}(a), the generalized FedAvg algorithm achieves a convergence speed comparable to that of the i.i.d. case ($digits\_10$). 
Another key observation is that non-i.i.d. datasets slow down the convergence under the same learning rate settings for both cases.
The higher the non-i.i.d. degree, the slower the convergence speed.
As the non-i.i.d. degree increases (from case $digits\_10$ to case $digits\_1$), it is obvious that the training loss is increasing and test accuracy is decreasing.
This trend is more obvious from the zigzagging curves for partial worker participation.
These two observations can also be verified for other models as shown in the supplementary material, which confirms our theoretical analysis.

\textbf{Impact of worker number:}
As shown in Figure~\ref{fig1}(b), we compare the training loss and test accuracy between full worker participation $n=100$ and partial worker participation $n=10$ with the same hyper-parameters. 
Compared with full worker participation, partial worker participation introduces another source of randomness, which leads to zigzagging convergence curves and slower convergence.
This problem is more prominent for highly non-i.i.d. datasets.
For full worker participation, it can neutralize the the system heterogeneity in each communication round.
However, it might not be able to neutralize the gaps among different workers for partial worker participation.
That is, the datasets' distribution does not approximate the overall distribution well.
Specifically, it is not unlikely that the digits in these datasets among all active workers are only a proper subset of the total 10 digits in the original MNIST dataset, especially with highly non-i.i.d. datasets.
This trend is also obvious for complex models and complicated datasets as shown in the supplementary material.
The sampling strategy here is random sampling with equal probability without replacement.
In practice, however, the actual sampling of the workers in FL could be more complex, which requires further investigations.

\textbf{Impact of local steps:}
One open question of FL is that whether the local steps help the convergence or not.
In Figure~\ref{fig1}(c), we show that the local steps could help the convergence for both full and partial worker participation. 
These results verify our theoretical analysis.
However,
\citet{li2019convergence} showed that the local steps may hurt the convergence, which was demonstrated under unbalanced non-i.i.d. MNIST datasets.
% They only confirmed it under unbalanced non-i.i.d. MNIST datasets.
We believe that this may be due to the combined effect of unbalanced datasets and local steps rather than just the use of local steps only.

\vspace{-.05in}
\begin{table}[htbp]
	\centering
	\addtolength{\tabcolsep}{-2.55pt}
	\setlength\extrarowheight{1.5pt}
	\caption{Comparison with SCAFFOLD.}
	\label{tab:addlabel}
	\scriptsize 
	\begin{tabular}{|c|c|c|c|c|c|c|c|c|c|}
		\hline
		\multirow{2}[4]{*}{Dataset} & \multirow{2}[4]{*}{ \makecell {IID or \\ Non-IID }} & \multicolumn{1}{c|}{\multirow{2}[4]{*}{\makecell {Worker \\ selected } }} & \multirow{2}[4]{*}{Model} & \multicolumn{3}{c|}{SCAFFOLD} & \multicolumn{3}{c|}{This paper} \\
		\cline{5-10}          &       &       &       & \# of Round & \makecell {Communication \\ cost (MB)} & \makecell {Wall-clock \\ time (s)} & \# of Round & \makecell {Communication \\ cost (MB)} & \makecell {Wall-clock \\ time (s) }  \\
		\hline
		\hline
		\multirow{12}{*}{MNIST} 
		& \multirow{6}{*}{IID} 
		& \multirow{3}{*}{$n=10$} 
		& Logistic &  3     &   0.36    &  0.32   &   3   &  0.18    & 0.22 \\
		\cline{4-10}    &       &       
		& 2NN   &  3    &  9.12    &   0.88  & 3    &  4.56     & 0.56 \\
		\cline{4-10}          &       &       
		& CNN   &  3    &  26.64  &  2.23   & 3   &  13.32    &  1.57\\
		\cline{3-10}          &       
		& \multirow{3}{*}{$n=100$} 
		& Logistic & 5   &  0.60   &  0.53   &  5 &  0.30   & 0.42  \\
		\cline{4-10}          &       &       
		& 2NN   &   5   &  15.20  &   1.51   & 8     &  12.16    & 1.49 \\
		\cline{4-10}          &       &       
		& CNN   &  1  & 8.88    &  0.79   &  1 &  4.44   & 0.50 \\
		\cline{2-10}          
		& \multirow{6}{*}{Non-IID} 
		& \multirow{3}{*}{$n=10$} 
		& Logistic &   14   & 1.68     &  1.48    &   14   &  0.84   & 1.16  \\
		\cline{4-10}          &       &       
		& 2NN   &   14   & 42.55    &  4.23     & 14   &  21.28  & 2.46 \\
		\cline{4-10}          &       &       
		& CNN   & 14   & 124.34   &  11.12  & 10   & 44.41     & 4.92 \\
		\cline{3-10}          &       
		& \multirow{3}{*}{$n=100$} 
		& Logistic &   7 &  0.84    & 0.72    & 11    &  0.66   & 0.91\\
		\cline{4-10}          &       &       
		& 2NN   & 7    &  21.28   &  2.11    &  17   & 25.84    &3.16 \\
		\cline{4-10}          &       &       
		& CNN   & 17   &  150.98   &  13.50   &  7  & 31.08      & 3.51  \\
		\hline
		\multirow{2}{*}{CIFAR-10} 
		& IID   & $n=10$  & Resnet18 & 56      &   9548.07    &   583.24   &  44     &   3751.03    & 256.63 \\
		\cline{2-10}          
		& Non-IID & $n=10$  & Resnet18 &  52     &  8866.06    &  539.50    &  61     &    5200.29   & 358.22  \\
		\hline
	\end{tabular}%
\begin{tablenotes}
	\item[1] Bandwidth = 20MB/s.
\end{tablenotes}
\vspace{-.05in}
\end{table}%

{\bf Comparison with SCAFFOLD:}
Lastly, we compare with the SCAFFOLD algorithm \citep{karimireddy2019scaffold} since it also achieves the same linear speedup effect under non-i.i.d. datasets.
We compare communication rounds, total communication load, and estimated wall-clock time under the same settings to achieve certain test accuracy, and the results are reported in Table~\ref{tab:addlabel}.
The non-i.i.d. dataset is $digits\_2$ and the i.i.d. dataset is $digits\_10$.
The learning rates are $\eta_L = 0.1, \eta = 1.0$, and number of local steps $K$ is $5$ epochs.
We set the target accuracy $\epsilon = 95\%$ for MNIST and $\epsilon = 75\%$ for CIFAR-10.
Note that the total training time contains two parts: i) the computation time for training the local model at each worker and ii) the communication time for information exchanges between the workers and the server.
We assume the bandwidth $20$ MB/s for both uplink and downlink connections.
For MNIST datasets, we can see that our algorithm is similar to or outperforms SCAFFOLD.
This is because the numbers of communication rounds of both algorithms are relatively small for such simple tasks.
% The required communication round is approximately the same for these two algorithms, however, the communication load in SCAFFOLD is about twice as that in FedAvg since extra variable is needed in each communication round for variance reduction.
For non-i.i.d. CIFAR-10, the SCAFFOLD algorithm takes slightly fewer number of communication rounds than our FedAvg algorithm to achieve $\epsilon = 75\%$ thanks to its variance reduction.
However, it takes more than {\em 1.5 times} of communication cost and wall-clock time compared to those of our FedAvg algorithm.
% which severely exacerbates the communication bottleneck in FL especially for large-scale deep learning models.
Due to space limitation, we relegate the results of time proportions for computation and communication to Appendix~\ref{exp} (see Figure~\ref{fig7}).
% indicating the communication time accounts for large proportion of the total time.
% As shown in Figure~\ref{fig2}, SCAFFOLD reduces the computation time due to the variance reduction technique but increases the communication time.
% In practice, the bottleneck of communication in FL is getting worse using SCAFFOLD.

% \begin{figure*}[!ht]
% 	\centering
% 	{\includegraphics[width=0.8\textwidth]{Figure/2.pdf}} 
% 	\label{fig2}
% \end{figure*}

% !TEX root = FedAvg.tex
\section{Conclusions and future work} \label{sec:conclusion}
% Federated learning is a distributed machine learning paradigm enabling a large number of clients to jointly learn a model with decentralized data.
% However, it lacks theoretical guarantees due to the key challenges: non-iid dataset across clients (statistical heterogeneity) and variability of client participation (systems heterogeneity).
In this paper, we analyzed the convergence of a generlized FedAvg algorithm with two-sided learning rates on non-i.i.d. datasets for general non-convex optimization.
We proved that the generalized FedAvg algorithm achieves a linear speedup for convergence under full and partial worker participation.
We showed that the local steps in FL could help the convergence and we improve the maximum number of local steps to $T/m$.
While our work sheds light on theoretical understanding of FL, it also opens the doors to many new interesting questions in FL, such as how to sample optimally in partial worker participation, and how to deal with active participant sets that are both time-varying and size-varying across communication rounds.
We hope that the insights and proof techniques in this paper can pave the way for many new research directions in the aforementioned areas.

\section*{Acknowledgements}
This work is supported in part by NSF grants CAREER CNS-1943226, CIF-2110252, ECCS-1818791, CCF-1934884, ONR grant ONR N00014-17-1-2417, and a Google Faculty Research Award.
\newpage
\bibliographystyle{iclr2021_conference}
\bibliography{BIB/FederatedLearning,BIB/CommunicationEfficiency}

\appendix
% !TEX root = FedAvg.tex
\newpage
\section{Appendix I: Proofs} \label{proof}
\allowdisplaybreaks
In this section,
%  we present the proofs for these theorems in the paper.
% First, we restate the assumptions in Section~\ref{assum}.
% Then 
we give the proofs in detail for full and partial worker participation in Section~\ref{proof_full} and Section~\ref{proof_partial}, respectively.

% \subsection{Assumptions} \label{assum}
% \begin{assum}($L$-Lipschitz Continuous Gradient) \label{a_smooth}
% 	There exists a constant $L > 0$, such that $ \| \nabla F_i(\x) - \nabla F_i(\y) \| \leq L \| \x - \y \|,
% 	\forall \x, \y \in \mathbb{R}^d, and \ i \in [m]$.
% \end{assum}
% \begin{assum}(Unbiased Local Gradient Estimator) \label{a_unbias}
% 	Let $\xi_t^i$ be a random local data sample in the $t$ step at the $i$-th worker.
% 	The local gradient estimator is unbiased, i.e.,
% 	$\mathbb{E} [\nabla F_i(\x_t, \xi_t^i)] = \nabla F_i(\x_t)$, $\forall i \in [m].$
% \end{assum}
% \begin{assum}(Bounded Local and Global Variance) \label{a_variance}
% 	There exits two constants $\sigma_L > 0$ and $\sigma_G > 0$, such that  the variance of each local gradient estimator is bounded by
% 	$\mathbb{E} [|| \nabla F_i(\x_t, \xi_t^i) -  \nabla F_i(\x_t) ||^2] \leq \sigma_{L}^2$, $\forall i \in [m]$,
% 	and the global variability of the local gradient of the cost function is bouned by \\
% 	$\| \nabla F_i(\x_t) - \nabla f(\x_t) \|^2 \leq \sigma_{G}^2$, $\forall i \in [m]$.
% \end{assum}

%--------------------------------------------
\subsection{Proof of Theorem~\ref{thm:FedAvg}} \label{proof_full}

% \begin{restatable} {theorem}{FedAvg}
% \label{thm:FedAvg}
% Choose constant local and global learning rates $\eta_L$ and $\eta$ as $\eta_L \leq \frac{1}{8LK}$ and $\eta \eta_L < \frac{1}{K L}$.
% Under Assumptions~\ref{a_smooth}--\ref{a_variance} and with full worker participation, the sequence of outputs $\{ \x_k \}$ generated by Algorithm 1 satisfies:
% \begin{align*}
% \min_{t \in [T]} \mathbb{E} [\|\nabla f(\x_t)\|_2^2] \leq \frac{f_0 - f_{*}}{c \eta \eta_L K T} + \Phi,
% \end{align*}
% where $\Phi \Deltaq \frac{1}{c} [\frac{L \eta \eta_L}{2m} \sigma_L^2 + \frac{5K \eta_L^2 L^2}{2} (\sigma_L^2 + 6K \sigma_G^2)]$ and $c$ is a constant.
% \end{restatable}

% {\em Proof of Theorem~\ref{thm:FedAvg}.} 
\FedAvg*
\begin{proof}
For convenience, we define $\bar{\Delta}_t \triangleq \frac{1}{m} \sum_{i=1}^{m} \Delta_{t}^{i}$.
Under full device participation (i.e., $S_t = [m]$), it is clear that $\Delta_{t} = \frac{1}{m} \sum_{i=1}^{m} \Delta_t^i = \bar{\Delta}_t$. 

Due to the smoothness in Assumption \ref{a_smooth}, taking expectation of $f(\x_{t+1})$ over the randomness at communication round $t$, we have:
\begin{align}
	\mathbb{E}_t [f(\x_{t+1})] &\leq f(\x_t) + \big< \nabla f(\x_t), \mathbb{E}_t [\x_{t+1} - \x_t] \big> + \frac{L}{2} \mathbb{E}_t [\| \x_{t+1} - \x_t \|^2] \nonumber \\
	&= f(\x_t) \!+\! \big< \nabla f(\x_t), \mathbb{E}_t [\eta \bar{\Delta}_t + \eta \eta_L K \nabla f(\x_t) - \eta \eta_L K \nabla f(\x_t)] \big> \!+\! \frac{L}{2} \eta^2 \mathbb{E}_t [ \| \bar{\Delta}_t \|^2 ] \nonumber \\
	&= f(\x_t) \!-\! \eta \eta_L K \| \nabla f(\x_t) \|^2 \!\!+\! \eta \underbrace { \big< \nabla f(\x_t), \mathbb{E}_t [\bar{\Delta}_t \!+\! \eta_L K \nabla f(\x_t)] \big>}_{A_1} \!+\! \frac{L}{2} \eta^2 \underbrace{ \mathbb{E}_t [\| \bar{\Delta}_t \|^2] }_{A_2}. \label{ineq_smooth}
\end{align}

Note that the term $A_1$ in (\ref{ineq_smooth}) can be bounded as follows:

\begin{align}
	&A_1 = \big< \nabla f(\x_t), \mathbb{E}_t [\bar{\Delta}_t + \eta_L K \nabla f(\x_t)] \big> \nonumber \\
	&= \bigg< \nabla f(\x_t), \mathbb{E}_t \bigg[- \frac{1}{m}\sum_{i = 1}^m \sum_{k=0}^{K-1} \eta_L \g_{t,k}^i + \eta_L K \nabla f(x_t) \bigg] \bigg> \nonumber \\
	&= \bigg< \nabla f(\x_t), \mathbb{E}_t \bigg[- \frac{1}{m}\sum_{i = 1}^m \sum_{k=0}^{K-1} \eta_L \nabla F_i(\x_{t,k}^i) + \eta_L K \frac{1}{m}\sum_{i = 1}^m \nabla F_i(\x_t) \bigg] \bigg> \nonumber \\
	&= \bigg< \sqrt{\eta_L K} \nabla f(\x_t), - \frac{\sqrt{\eta_L}}{m \sqrt{K}} \mathbb{E}_t \sum_{i = 1}^m \sum_{k=0}^{K-1} ( \nabla F_i(\x_{t,k}^i) - \nabla F_i(\x_t)) \bigg> \nonumber \\
	&\overset{(a1)}{=} \!\! \frac{\eta_L K}{2} \| \nabla f(\x_t) \|^2 \!+\! \frac{\eta_L }{2K m^2} \mathbb{E}_t \! \bigg\| \! \sum_{i = 1}^m \! \sum_{k=0}^{K-1} \nabla F_i(\x_{t,k}^i) \!-\! \nabla F_i(\x_t) \bigg\|^2 \!\!\!\!-\!\! \frac{\eta_L}{2K m^2} \mathbb{E}_t \bigg\|\! \sum_{i=1}^{m} \!\! \sum_{k=0}^{K-1}\! \nabla F_i(\x_{t,k}^i) \bigg\|^2 \nonumber \\
	&\overset{(a2)}{\leq}\!\! \frac{\eta_L K}{2} \| \nabla f(\x_t) \|^2 \!+\! \frac{\eta_L}{2m}  \sum_{i = 1}^m \! \sum_{k=0}^{K-1} \mathbb{E}_t \| \nabla F_i(\x_{t,k}^i) \!-\! \nabla F_i(\x_t) \|^2 \!-\! \frac{\eta_L}{2K m^2} \mathbb{E}_t \bigg\| \sum_{i=1}^{m}\! \sum_{k=0}^{K-1} \nabla F_i(\x_{t,k}^i) \bigg\|^2 \nonumber \\
	&\overset{(a3)}{\leq} \!\!\frac{\eta_L K}{2} \| \nabla f(\x_t) \|^2 + \frac{\eta_L L^2}{2m} \sum_{i = 1}^m \sum_{k=0}^{K-1} \mathbb{E}_t \|  \x_{t,k}^i - \x_t \|^2 - \frac{\eta_L}{2K m^2} \mathbb{E}_t \bigg\| \sum_{i=1}^{m} \sum_{k=0}^{K-1} \nabla F_i(\x_{t,k}^i) \bigg\|^2 \nonumber \\
	&\overset{(a4)}{\leq} \!\!\! \eta_L K (\frac{1}{2} \!\!+\!\! 15 K^2 \eta_L^2 L^2) \| \nabla f(x_t) \|^2 \!\!+\! \frac{5 K^2 \eta_L^3 L^2}{2} (\sigma_L^2 \!+\! 6 K \sigma_G^2) \!\!-\!\! \frac{\eta_L}{2K m^2} \mathbb{E}_t \bigg\|\! \sum_{i=1}^{m} \!\sum_{k=0}^{K-1}\! \nabla F_i(\x_{t,k}^i) \bigg\|^2, \label{ineq_A1}
\end{align}
where $(a1)$ follows from that $\big<\x, \y \big> = \frac{1}{2} [ \| \x \|^2 + \| \y \|^2 - \| \x - \y \|^2 ]$  for $\x = \sqrt{\eta_L K} \nabla f(\x_t)$ and $\y = - \frac{\sqrt{\eta_L}}{m \sqrt{K}} \sum_{i = 1}^m \sum_{k=0}^{K-1} ( \nabla F_i(\x_{t,k}^i) - \nabla F_i(\x_t))$,
$(a2)$ is due to that $\mathbb{E}[\|x_1 + \cdots + x_n \|^2] \leq n \mathbb{E}[\|x_1\|^2 + \cdots + \| x_n \|^2]$ ,
$(a3)$ is due to Assumption~\ref{a_smooth} 
and $(a4)$ follows from Lemma~\ref{lem: aux_bounded_x_t}.

The term $A_2$  in (\ref{ineq_smooth}) can be bounded as:
\begin{align}
A_2 &= \mathbb{E}_t [\| \bar{\Delta}_t \|^2] \nonumber \\
&= \mathbb{E}_t \bigg[\bigg\| \frac{1}{m} \sum_{i=1}^{m} \Delta_t^i \bigg\|^2 \bigg] \nonumber \\
&\leq \frac{1}{m^2} \mathbb{E}_t \bigg[ \bigg\| \sum_{i=1}^{m} \Delta_t^i \bigg\|^2 \bigg] \nonumber \\
&= \frac{\eta_L^2}{m^2} \mathbb{E}_t \bigg[ \bigg\| \sum_{i=1}^{m} \sum_{k=0}^{K-1} \g_{t,k}^i \bigg\|^2 \bigg] \nonumber \\
&\overset{(a5)}{=} \frac{\eta_L^2}{m^2} \mathbb{E}_t \bigg[ \bigg\| \sum_{i=1}^{m} \sum_{k=0}^{K-1} (\g_{t,k}^i - \nabla F_i(\x_{t,k}^i)) \bigg\|^2 \bigg] + \frac{\eta_L^2}{m^2} \mathbb{E}_t \bigg\| \sum_{i=1}^{m} \sum_{k=0}^{K-1} \nabla F_i(\x_{t,k}^i) \bigg\|^2 \nonumber \\
&\overset{(a6)}{\leq} \frac{K \eta_L^2}{m} \sigma_L^2 + \frac{\eta_L^2}{m^2} \mathbb{E}_t \bigg\| \sum_{i=1}^{m} \sum_{k=0}^{K-1} \nabla F_i(\x_{t,{}k}^i) \bigg\|^2, \label{ineq_A2}
\end{align}
where $(a5)$ follows from the fact that $\mathbb{E}[\| \x \|^2] = \mathbb{E}[\| \x - \mathbb{E}[\x] \|^2] + \| \mathbb{E}[\x] \|^2]$ and $(a6)$ is due to the bounded variance assumption in Assumption~\ref{a_variance} and the fact that $\mathbb{E} [\|x_1 + \cdots + x_n \|^2] = \mathbb{E}[\|x_1\|^2 + \cdots + \| x_n \|^2]$ if $x_i^{'}$s are independent with zero mean and $\mathbb{E} [\g_{t, j}^{i}] = \nabla F_i(\x_{t, j}^i)$.

Substituting the inequalities in (\ref{ineq_A1}) of $A_1$  and (\ref{ineq_A2}) of $A_2$ into inequality (\ref{ineq_smooth}), we have:
\begin{align*}
	\mathbb{E}_t [f(\x_{t+1})] &\leq f(\x_t) \!-\! \eta \eta_L K \| \nabla f(\x_t) \|^2 \!+\! \eta \underbrace { < \nabla f(\x_t), \mathbb{E}_t [\bar{\Delta}_t \!+\! \eta_L K \nabla f(\x_t)] >}_{A_1} \!+\! \frac{L}{2} \eta^2 \underbrace{ \mathbb{E}_t [\| \bar{\Delta}_t \|^2] }_{A_2} \\
	&\leq f(\x_t) - \eta \eta_L K (\frac{1}{2} - 15 K^2 \eta_L^2 L^2) \| \nabla f(\x_t) \|^2 + \frac{L K \eta^2 \eta_L^2}{2m} \sigma_L^2 \\
	& \quad + \frac{5 \eta K^2 \eta_L^3 L^2}{2} (\sigma_L^2 + 6K \sigma_G^2) - (\frac{\eta \eta_L}{2K m^2} - \frac{L \eta^2 \eta_L^2}{2m^2}) \mathbb{E}_t \bigg\| \sum_{i=1}^{m} \sum_{k=0}^{K-1} \nabla F_i(\x_{t,k}^i) \bigg\|^2 \\
	&\overset{(a7)}{\leq}\!\! f(\x_t) \!-\! \eta \eta_L K (\frac{1}{2} \!\!-\!\! 5 K^2 \eta_L^2 L^2) \| \nabla f(\x_t) \|^2 \!\!+\!\! \frac{L K \eta^2 \eta_L^2}{2m} \sigma_L^2 \!\!+\!\! \frac{5 \eta K^2 \eta_L^3 L^2}{2} (\sigma_L^2 \!\!+\!\! 6K \sigma_G^2) \\
	&\overset{(a8)}{\leq} f(\x_t) - c \eta \eta_L K \| \nabla f(\x_t) \|^2 + \frac{L K \eta^2 \eta_L^2}{2m} \sigma_L^2 + \frac{5 \eta K^2 \eta_L^3 L^2}{2} (\sigma_L^2 + 6K \sigma_G^2), \\
\end{align*}
where $(a7)$ follows from $(\frac{\eta \eta_L}{2K m^2} - \frac{L \eta^2 \eta_L^2}{2m^2}) \geq 0$ if $\eta \eta_L \leq \frac{1}{K L}$, 
$(a8)$ holds because there exists a constant $c > 0$ satisfying $(\frac{1}{2} - 15 K^2 \eta_L^2 L^2) > c > 0$ if $\eta_L < \frac{1}{\sqrt{30} K L}$.

Rearranging and summing from $t=0, \cdots, T-1$, we have:
\begin{align*}
	\sum_{t=0}^{T-1} c \eta \eta_L K \mathbb{E} [\nabla f(\x_t)] &\leq f(\x_0) - f(\x_T) + T (\eta \eta_L K) \bigg[ \frac{L \eta \eta_L}{2m} \sigma_L^2 + \frac{5K \eta_L^2 L^2}{2} (\sigma_L^2 + 6K \sigma_G^2) \bigg]
\end{align*}
which implies,
$$
	\min_{t \in [T]} \mathbb{E}\|\nabla f(\x_t)\|_2^2 \leq \frac{f_0 - f_{*}}{c \eta \eta_L K T} + \Phi,
$$
where $\Phi = \frac{1}{c} [\frac{L \eta \eta_L}{2m} \sigma_L^2 + \frac{5K \eta_L^2 L^2}{2} (\sigma_L^2 + 6K \sigma_G^2)]$.
This completes the proof.
\end{proof}

%--------------------------------------------
\subsection{Proof of Theorem~\ref{thm:FedAvg_partial}} \label{proof_partial}
% Partial workers participation in each round is more practical than full worker participation.
% We consider two sampling strategy proposed by [??] and [??]: 
% let $S_t$ be the device participation index set at step $t$ with cardinality $n$, each of which is sampled from $[m]$ independently with replacement (strategy 1) or without replacement (strategy 2).
\FedAvgPartial*
\begin{proof}
Let $\bar{\Delta}_{t}$ be defined the same as in the proof of Theorem~\ref{thm:FedAvg}. 
Under partial device participation, note that $\bar{\Delta}_t \neq \Delta_t$ (recall that $\bar{\Delta}_t \triangleq \frac{1}{m} \sum_{i=1}^{m} \Delta_t^i$, $\Delta_t = \frac{1}{n} \sum_{i \in S_t} \Delta_t^i$, and $| S_t | = n$).
The randomness for partial worker participation contains two parts: the random sampling and the stochastic gradient.
We still use $\mathbb{E}_{t}[\cdot]$ to represent the expectation with respect to both types of randomness.
% It then follows that $\bar{\Delta}_t = \frac{1}{m} \sum_{i=1}^{m} \Delta_t^i$ and $\Delta_t = \frac{1}{n} \sum_{i \in S_t} \Delta_t^i, where | S_t | = n$.

% \begin{restatable} {theorem}{c}
% \label{thm:FedAvg_partial}
% Under Assumptions~\ref{a_smooth}--\ref{a_variance} with partial worker participation, the sequence of outputs $\{ \x_k \}$ generated by Algorithm 1 with constant learning rates $\eta$ and $\eta_L$ satisfies:\\
% $$\min_{t \in [T]} \mathbb{E}[\|\nabla f(\x_t)\|_2^2] \leq \frac{f_0 - f_{*}}{c \eta \eta_L K T} + \Phi.$$
% For sampling Strategy 1 with $\eta_L \leq \frac{1}{8LK}$ and $\eta \eta_L K L < 1$, it holds that
% $$\Phi \Deltaq \frac{1}{c} \bigg[\frac{L \eta \eta_L}{2n} \sigma_L^2 + \frac{5K \eta_L^2 L^2}{2} (\sigma_L^2 + 6K \sigma_G^2) \bigg] .$$
% For sampling Strategy 2 with $\eta_L \leq \frac{1}{8LK} $ and $\eta \eta_L K L \leq \frac{n(m-1)}{m(n-1)}$, it holds that 
% $$\Phi \Deltaq \frac{1}{c} \bigg[\frac{L \eta \eta_L}{2n} \sigma_L^2 + 3 L \eta \eta_L \frac{m-n}{2n(m-1) } \sigma_G^2 + (\frac{5 K \eta_L^2 L^2}{4} + 15 K \eta \eta_L^3 L^3 \frac{m-n}{2n(m-1)}) (\sigma_L^2 + 6K \sigma_G^2) \bigg].$$
% \end{restatable}

Due to the smoothness assumption in Assumption~\ref{a_smooth}, taking expectation of $f(\x_{t+1})$ over the randomness at communication round t:
\begin{align}
	\mathbb{E}_t [f(\x_{t+1})] &\leq f(\x_t) + \big< \nabla f(\x_t), \mathbb{E}_t [\x_{t+1} - \x_t] \big> + \frac{L}{2} \mathbb{E}_t [\| \x_{t+1} - \x_t \|^2] \nonumber \\
	&= f(\x_t) + \big< \nabla f(\x_t), \mathbb{E}_t [\eta \Delta_t + \eta \eta_L K \nabla f(\x_t) - \eta \eta_L K \nabla f(\x_t)] \big> + \frac{L}{2} \eta^2 \mathbb{E}_t [\| \Delta_t \|^2] \nonumber \\
	&= f(\x_t) \!-\! \eta \eta_L K \| \nabla f(\x_t) \|^2 \!+\! \eta \underbrace { \big< \nabla f(\x_t), \mathbb{E}_t [\Delta_t \!+\! \eta_L K \nabla f(\x_t)] \big>}_{A^{'}_1} + \frac{L}{2} \eta^2 \underbrace{ \mathbb{E}_t [\| \Delta_t \|^2] }_{A^{'}_2} \label{ineq_partial_smooth}
\end{align}

The term $A^{'}_1$ in (\ref{ineq_partial_smooth}) can be bounded as follows: 
Since $\mathbb{E}_{S_t} [A^{'}_1] = A_1$ due to Lemma~\ref{lem: unbiased_smapling} for both sampling strategies, we have the same bound as in inequality~\ref{ineq_A1} for $A^{'}_1$:

\begin{multline}
	A_1^{'} \leq \eta_L K (\frac{1}{2} + 15 K^2 \eta_L^2 L^2) \| \nabla f(x_t) \|^2 + \frac{5 K^2 \eta_L^3 L^2}{2} (\sigma_L^2 + 6 K \sigma_G^2) \\
	- \frac{\eta_L}{2K m^2} \mathbb{E}_t \bigg\| \sum_{i=1}^{m} \sum_{k=0}^{K-1} \nabla F_i(\x_{t,k}^i) \bigg\|^2, \label{ineq_A1^{''}}
\end{multline}

{\bf For strategy 1:} We can bound $A^{'}_2$ in (\ref{ineq_partial_smooth}) as follows.

Note $S_t$ is an index set (multiset) for independent sampling (equal probability) with replacement in which some elements may have the same value.
Suppose $S_t = \{ l_1, \dots, l_n \}$. 

\begin{align*}
A_2^{'} &= \mathbb{E}_{t} [\| \Delta_t \|^2] \nonumber\\
&= \mathbb{E}_{t} \bigg[\bigg\| \frac{1}{n} \sum_{i \in S_t} \Delta_t^i \bigg\|^2 \bigg] \nonumber\\
&= \frac{1}{n^2} \mathbb{E}_{t} \bigg[ \bigg\| \sum_{i \in S_t} \Delta_t^i \bigg\|^2 \bigg] \nonumber\\
&= \frac{1}{n^2} \mathbb{E}_{t} \bigg[ \bigg\| \sum_{z=1}^{n} \Delta_t^{l_z} \bigg\|^2 \bigg] \nonumber\\
&\overset{(b1)}{=} \frac{\eta_L^2}{n^2} \mathbb{E}_{t} \bigg[ \bigg\| \sum_{z=1}^{n} \sum_{j=0}^{K-1} [\g_{t, j}^{l_z} - \nabla F_{l_z}(\x_{t, j}^{l_z})] \bigg\|^2 \bigg] + \frac{\eta_L^2}{n^2} \mathbb{E}_{t} \bigg[ \bigg\| \sum_{z=1}^{n} \sum_{j=0}^{K-1} \nabla F_{l_z}(\x_{t, j}^{l_z}) \bigg\|^2 \bigg] \nonumber\\
&\overset{(b2)}{\leq} \frac{K\eta_L^2}{n} \sigma_L^2 + \frac{\eta_L^2}{n^2} \mathbb{E}_{t} \bigg[ \bigg\| \sum_{z=1}^{n} \sum_{j=0}^{K-1} \nabla F_{l_z}(\x_{t, j}^{l_z}) \bigg\|^2 \bigg], \nonumber
% &= \frac{\eta_L^2}{n^2} \mathbb{E}_{t} \bigg[ \bigg\| \sum_{i=1}^{m} \mathbb{I} \{ i \in S_t \} \sum_{j=0}^{K-1} [\g_{t, j}^{i} - \nabla F_i(\x_{t, j}^i)] \bigg\|^2 \bigg] + \frac{\eta_L^2}{n^2} \mathbb{E}_{t} \bigg[ \bigg\| \sum_{i=1}^{m} \mathbb{I} \{ i \in S_t \} \sum_{j=0}^{K-1} \nabla F_i(\x_{t, j}^i)] \bigg\|^2 \bigg] \nonumber\\
% &= \frac{ \eta_L^2}{n^2} \mathbb{E}_{t} \bigg[ \bigg\| \sum_{i=1}^{m} \mathbb{P} \{ i \in S_t \} \sum_{j=0}^{K-1} [\g_{t, j}^{i} - \nabla F_i(\x_{t, j}^i)] \bigg\|^2 + \frac{\eta_L^2}{n^2} \bigg\| \sum_{i=1}^{m} \mathbb{I} \{ i \in S_t \} \sum_{j=0}^{K-1} \nabla F_i(\x_{t, j}^i) \bigg\|^2 \bigg] \nonumber\\
% &\overset{(b1)}{=} \frac{\eta_L^2}{nm} \mathbb{E}_{t} \bigg[ \sum_{i=1}^{m} \sum_{j=0}^{K-1} \bigg\| \g_{t, j}^{i} - \nabla F_i(\x_{t, j}^i) \bigg\|^2 \bigg] + \frac{\eta_L^2}{n^2} \mathbb{E}_{t} \bigg[\bigg\| \sum_{i=1}^{m} \mathbb{I} \{ i \in S_t \} \sum_{j=0}^{K-1} \nabla F_i(\x_{t, j}^i) \bigg\|^2 \bigg] \nonumber\\
% &\overset{(b2)}{\leq} \frac{K\eta_L^2}{n} \sigma_L^2 +  \frac{\eta_L^2}{n^2} \bigg\| \sum_{i=1}^{m} \mathbb{P} \{ i \in S_t \} \sum_{j=0}^{K-1} \nabla F_i(\x_{t, j}^i) \bigg\|^2, \label{ineq_A2^{''}}
\end{align*}
where $(b1)$ follows from the fact that $\mathbb{E}[\| \x \|^2] = \mathbb{E}[\| \x - \mathbb{E}[\x] \|^2] + \| \mathbb{E}[\x] \|^2]$ and $(b2)$ is due to the bounded variance assumption~\ref{a_variance} and $\mathbb{E} [\|x_1 + \cdots + x_n \|^2] \leq n \mathbb{E}[\|x_1\|^2 + \cdots + \| x_n \|^2]$.

By letting $\t_i = \sum_{j=0}^{K-1} \nabla F_i(\x_{t, j}^i)$, we have:
\begin{align*}
\mathbb{E}_{t} \bigg[ \bigg\| \sum_{z=1}^{n} \sum_{j=0}^{K-1} \nabla F_{l_z}(\x_{t, j}^{l_z}) \bigg\|^2 &= \mathbb{E}_{t} \bigg[ \bigg\| \sum_{z=1}^{n} \t_{l_z} \bigg\|^2 \bigg] \\
&= \mathbb{E}_{t} \bigg[ \sum_{z=1}^{n} \| \t_{l_z} \|^2 + \sum_{i \neq j; l_i, l_j \in S_t} \big< \t_{l_i}, \t_{l_j} \big> \bigg] \\
&\overset{(b3)}{=} \mathbb{E}_{t} \bigg[ n \| \t_{l_1} \|^2 + n(n-1) \big< \t_{l_1}, \t_{l_2} \big> \bigg] \\
&= \frac{n}{m} \sum_{i=1}^{m} \| \t_{i} \|^2 + \frac{n(n-1)}{m^2} \sum_{i, j \in [m]} \big< \t_{i}, \t_{j} \big> \\
&=  \frac{n}{m} \sum_{i=1}^{m} \| \t_{i} \|^2 + \frac{n(n-1)}{m^2} \| \sum_{i=1}^{m} \t_i \|^2, \\
\end{align*}
where $(b3)$ is due to the independent sampling with replacement. %which allows the index in $S_t$ be identical such as $l_1 = l_2$.

So we can bound $A^{'}_2$ as follows.
\begin{align}
A_2^{'} &= \mathbb{E}_{t} [\| \Delta_t \|^2] \nonumber\\
&\leq \frac{K\eta_L^2}{n} \sigma_L^2 + \frac{\eta_L^2}{mn} \sum_{i=1}^{m} \mathbb{E}_{t} \| \t_{i} \|^2 + \frac{(n-1) \eta_L^2}{m^2n} \mathbb{E}_{t} \bigg\| \sum_{i=1}^{m} \t_i \bigg\|^2 , \label{ineq_A2^{'}}
\end{align}

{\red
For $\t_i$, we have:
\begin{align}
\sum_{i=1}^{m} \mathbb{E}_t \| \t_i \|^2 &= \sum_{i=1}^{m} \mathbb{E}_t \bigg\| \sum_{j=0}^{K-1} \nabla F_i(\x_{t, j}^i) - \nabla F_i(\x_t) + \nabla F_i(\x_t) - \nabla f(\x_t) + \nabla f(\x_t) \bigg\|^2 \nonumber \\
&\overset{(b4)}{\leq} 3K L^2 \sum_{i=1}^{m} \sum_{j=0}^{K-1} \mathbb{E}_t \| \x_{t, j}^i - \x_t \|^2 + 3mK^2 \sigma_G^2 + 3mK^2 \| \nabla f(\x_t) \|^2  \nonumber\\
&\overset{(b5)}{\leq} 15m K^3 L^2 \eta_L^2 (\sigma_L^2 \!+\! 6K \sigma_G^2) \!+\! (90m K^4 L^2 \eta_L^2 + 3mK^2) \| \nabla f(\x_t) \|^2 \!+\! 3m K^2 \sigma_G^2, \label{t_i}
\end{align}
where $(b4)$ is due to the fact that $\mathbb{E}[\|x_1 + \cdots + x_n \|^2] \leq n \mathbb{E}[\|x_1\|^2 + \cdots + \| x_n \|^2]$ , Assumptions~\ref{a_variance} and \ref{a_smooth}, and
$(b5)$ follows from Lemma~\ref{lem: aux_bounded_x_t}.
}

Substituting the inequalities in (~\ref{ineq_A1^{''}}) of $A_1^{'}$  and (~\ref{ineq_A2^{'}}) of $A_2^{'}$ into inequality ~\eqref{ineq_partial_smooth}, we have:

\begin{align}
\mathbb{E}_t [f(\x_{t+1})] &\leq f(\x_t) \!-\! \eta \eta_L K \| \nabla f(\x_t) \|^2 \!+\! \eta \underbrace { \big< \nabla f(\x_t), \mathbb{E}_t [\Delta_t \!+\! \eta_L K \nabla f(\x_t)] \big>}_{A^{'}_1} \!+\! \frac{L}{2} \eta^2 \underbrace{ \mathbb{E}_t [\| \Delta_t \|^2] }_{A^{'}_2} \nonumber\\
&\leq f(\x_t) - \eta \eta_L K (\frac{1}{2} - 15 K^2 \eta_L^2 L^2) \| \nabla f(x_t) \|^2 + \frac{5\eta K^2 \eta_L^3 L^2}{2} (\sigma_L^2 + 6 K \sigma_G^2) \nonumber\\
	&\quad + \bigg[ \frac{(n-1) L \eta^2 \eta_L^2}{2m^2n} - \frac{\eta \eta_L}{2K m^2} \bigg] \mathbb{E}_t \bigg\| \sum_{i=1}^{m} \t_i \bigg\|^2 \!+\! \frac{LK \eta^2 \eta_L^2}{2n} \sigma_L^2 \!+\! \frac{L \eta^2 \eta_L^2}{2mn} \sum_{i=1}^{m} \mathbb{E}_{t} \| \t_{i} \|^2 \nonumber\\
&\overset{(b6)}{\leq} f(\x_t) - \eta \eta_L K (\frac{1}{2} - 15 K^2 \eta_L^2 L^2) \| \nabla f(x_t) \|^2 + \frac{5\eta K^2 \eta_L^3 L^2}{2} (\sigma_L^2 + 6 K \sigma_G^2) \nonumber\\
	& \qquad + \frac{LK \eta^2 \eta_L^2}{2n} \sigma_L^2 + \frac{L \eta^2 \eta_L^2}{2mn} \sum_{i=1}^{m} \mathbb{E}_{t} \| \t_{i} \|^2 \nonumber\\
&\overset{(b7)}{\leq} f(\x_t) - \eta \eta_L K (\frac{1}{2} - 15 K^2 \eta_L^2 L^2 - \frac{L \eta \eta_L}{2n} (90 K^3 L^2 \eta_L^2 + 3K)) \| \nabla f(x_t) \|^2 \nonumber\\
	&\qquad \!+\! \bigg[ \frac{5\eta K^2 \eta_L^3 L^2}{2} \!+\! \frac{15 K^3 L^3 \eta^2 \eta_L^4}{2n} \bigg] (\sigma_L^2 \!+\! 6 K \sigma_G^2) \!+\! \frac{LK \eta^2 \eta_L^2}{2n} \sigma_L^2 \!+\! \frac{3K^2L \eta^2 \eta_L^2}{2n} \sigma_G^2 \nonumber\\
&\overset{(b8)}{\leq} f(\x_t) - c \eta \eta_L K \| \nabla f(\x_t) \|^2 + \frac{L K \eta^2 \eta_L^2}{2n} \sigma_L^2 + \frac{3K^2L \eta^2 \eta_L^2}{2n}  \sigma_G^2 \nonumber \\
	& \qquad + \eta \eta_L K \bigg[ \frac{5 K \eta_L^2 L^2}{2} + \frac{ 15 K^2 \eta_L^3 \eta L^3}{2n} \bigg] (\sigma_L^2 + 6K \sigma_G^2), \label{partial_resut_1}
\end{align}
{\red
where $(b6)$ follows from $\frac{(n-1) L \eta^2 \eta_L^2}{2m^2n} - \frac{\eta \eta_L}{2K m^2} \leq 0$ if $\eta \eta_L K L \leq \frac{n-1}{n}$,
$(b7)$is due to inequality~\eqref{t_i} and $(b8)$ holds since there exists a constant $c > 0$ such that $[ \frac{1}{2} - 15 K^2 \eta_L^2 L^2 - \frac{L \eta \eta_L}{2n} (90 K^3 L^2 \eta_L^2 + 3K) ] > c > 0$ if $30 K^2 \eta_L^2 L^2 - \frac{L \eta \eta_L}{n} (90 K^3 L^2 \eta_L^2 + 3K) < 1$.
}

Note that the requirement of $|S_t| = n$ can be relaxed to $|S_t| \geq n$.
With $p_t \geq n$ workers in $t$-th communication round, \ref{partial_resut_1} is 
\begin{align*}
\mathbb{E}_t [f(\x_{t+1})] &\leq f(\x_t) - c \eta \eta_L K \| \nabla f(\x_t) \|^2 + \frac{L K \eta^2 \eta_L^2}{2p_t} \sigma_L^2 + \frac{3KL \eta^2 \eta_L^2}{2p_t}  \sigma_G^2 \nonumber \\
	& \quad + \eta \eta_L K \bigg[ \frac{5 K \eta_L^2 L^2}{2} + \frac{ 15 K \eta_L^3 \eta L^3}{2p_t} \bigg] (\sigma_L^2 + 6K \sigma_G^2) \\
&\leq f(\x_t) - c \eta \eta_L K \| \nabla f(\x_t) \|^2 + \frac{L K \eta^2 \eta_L^2}{2n} \sigma_L^2 + \frac{3K^2L \eta^2 \eta_L^2}{2n}  \sigma_G^2 \nonumber \\
	& \quad + \eta \eta_L K \bigg[ \frac{5 K \eta_L^2 L^2}{2} + \frac{ 15 K^2 \eta_L^3 \eta L^3}{2n} \bigg] (\sigma_L^2 + 6K \sigma_G^2).
\end{align*}
That is, the same convergence rate can be guaranteed if at least $n$ workers in each communication round (no need to be exactly $n$).

Rearranging and summing from $t=0, \cdots, T-1$, we have the convergence for partial device participation with sampling strategy 1 as follows:
$$
	\min_{t \in [T]} \mathbb{E} [ \|\nabla f(\x_t)\|_2^2] \leq \frac{f_0 - f_{*}}{c \eta \eta_L K T} + \Phi,
$$
where $\Phi = \frac{1}{c} \big[\frac{L \eta \eta_L}{2n} \sigma_L^2 +  \frac{3 K L \eta \eta_L}{2n} \sigma_G^2 + (\frac{5 K \eta_L^2 L^2}{2} + \frac{ 15 K^2 \eta \eta_L^3 L^3}{2n}) (\sigma_L^2 + 6K \sigma_G^2)  \big]$ and $c$ is a constant.

\textbf{For strategy 2:} Under the strategy of independent sampling with equal probability without replacement.
We bound $A^{'}_2$ as follows.
\begin{align}
A_2^{'} &= \mathbb{E}_{t} [\| \Delta_t \|^2] \nonumber\\
&= \mathbb{E}_{t} \bigg[\bigg\| \frac{1}{n} \sum_{i \in S_t} \Delta_t^i \bigg\|^2 \bigg] \nonumber\\
&= \frac{1}{n^2} \mathbb{E}_{t} \bigg[ \bigg\| \sum_{i \in S_t} \Delta_t^i \bigg\|^2 \bigg] \nonumber\\
&= \frac{1}{n^2} \mathbb{E}_{t} \bigg[ \bigg\| \sum_{i=1}^{m} \mathbb{I} \{ i \in S_t \} \Delta_t^i \bigg\|^2 \bigg] \nonumber \\ %\displaybreak[4]\\
&=\! \frac{\eta_L^2}{n^2} \mathbb{E}_{t} \bigg[ \bigg\|\! \sum_{i=1}^{m} \mathbb{I} \{ i \in S_t \} \!\sum_{j=0}^{K-1} [\g_{t, j}^{i} \!-\! \nabla F_i(\x_{t, j}^i)] \bigg\|^2 \bigg] \!+\! \frac{\eta_L^2}{n^2} \mathbb{E}_{t} \bigg[ \bigg\|\! \sum_{i=1}^{m} \mathbb{I} \{ i \in S_t \} \!\! \sum_{j=0}^{K-1} \! \nabla F_i(\x_{t, j}^i)] \bigg\|^2 \bigg] \nonumber\\
&= \frac{ \eta_L^2}{n^2} \mathbb{E}_{t} \bigg[ \bigg\| \sum_{i=1}^{m} \mathbb{P} \{ i \in S_t \} \sum_{j=0}^{K-1} [\g_{t, j}^{i} - \nabla F_i(\x_{t, j}^i)] \bigg\|^2 + \frac{\eta_L^2}{n^2} \bigg\| \sum_{i=1}^{m} \mathbb{I} \{ i \in S_t \} \sum_{j=0}^{K-1} \nabla F_i(\x_{t, j}^i) \bigg\|^2 \bigg] \nonumber\\
&\overset{(b9)}{=} \frac{\eta_L^2}{nm} \mathbb{E}_{t} \bigg[ \sum_{i=1}^{m} \sum_{j=0}^{K-1} \bigg\| \g_{t, j}^{i} - \nabla F_i(\x_{t, j}^i) \bigg\|^2 \bigg] + \frac{\eta_L^2}{n^2} \mathbb{E}_{t} \bigg[\bigg\| \sum_{i=1}^{m} \mathbb{I} \{ i \in S_t \} \sum_{j=0}^{K-1} \nabla F_i(\x_{t, j}^i) \bigg\|^2 \bigg] \nonumber\\
&\overset{(b10)}{\leq} \frac{K\eta_L^2}{n} \sigma_L^2 +  \frac{\eta_L^2}{n^2} \bigg\| \sum_{i=1}^{m} \mathbb{P} \{ i \in S_t \} \sum_{j=0}^{K-1} \nabla F_i(\x_{t, j}^i) \bigg\|^2, \label{ineq_A2^{''}}
\end{align}

where $(b9)$ is due to the fact that $\mathbb{E} [\|x_1 + \cdots + x_n \|^2] = \mathbb{E}[\|x_1\|^2 + \cdots + \| x_n \|^2]$ if $x_i^{'}$s are independent with zero mean, $\x_i = \g_{t, j}^{i} - \nabla F_i(\x_{t, j}^i)$ is independent random variable with mean zero, and $\mathbb{P}\{i \in S_t\} = \frac{n}{m}$.
$(b10)$ is due to bounded variance assumption in Assumption~\ref{a_variance}

Substituting the inequalities in (\ref{ineq_A1^{''}}) of $A_1^{'}$  and (\ref{ineq_A2^{''}}) of $A_2^{'}$ into inequality (\ref{ineq_partial_smooth}), we have:
\begin{align*}
&\mathbb{E}_t [f(\x_{t+1})] 
	\!\leq \! f(\x_t) \!-\! \eta \eta_L K \| \nabla f(\x_t) \|^2 \!\!+\! \eta \underbrace { \big< \nabla f(\x_t), \mathbb{E}_t [\Delta_t \!+\! \eta_L K \nabla f(\x_t)] \big>}_{A^{'}_1} \!+ \frac{L}{2} \eta^2 \underbrace{ \mathbb{E}_t [\| \Delta_t \|^2] }_{A^{'}_2} \\
	&\leq \nabla f(\x_t) - \eta \eta_L K (\frac{1}{2} - 15 K^2 \eta_L^2 L^2) \| \nabla f(\x_t) \|^2 + \frac{L K \eta^2 \eta_L^2}{2n} \sigma_L^2 + \frac{5 \eta K^2 \eta_L^3 L^2}{2} (\sigma_L^2 + 6K \sigma_G^2) \\
	& \quad + \underbrace{ \frac{L \eta^2 \eta_L^2}{2n^2} \mathbb{E}_t \bigg\| \sum_{i=1}^{m} \mathbb{P} \{ i \in S_t \} \sum_{j=0}^{K-1} \nabla F_i(\x_{t, j}^i) \bigg\|^2 - \frac{\eta \eta_L}{2K m^2} \mathbb{E}_t \bigg\| \sum_{i=1}^{m} \sum_{k=0}^{K-1} \nabla F_i(\x_{t,k}^i) \bigg\|^2 }_{A_3^{'}}.
\end{align*}

Then we bound $A_3^{'}$ as follows.

By letting $\t_i = \sum_{j=0}^{K-1} \nabla F_i(\x_{t, j}^i)$, we have:
\begin{align*}
\sum_{i=1}^{m} \mathbb{E}_t \| \t_i \|^2 &\leq 15m K^3 L^2 \eta_L^2 (\sigma_L^2 + 6K \sigma_G^2) + (90m K^4 L^2 \eta_L^2 + 3mK^2) \| \nabla f(\x_t) \|^2 + 3m K^2 \sigma_G^2.
\end{align*}
% where $(b3)$ is due to the fact that $\mathbb{E}[\|x_1 + \cdots + x_n \|^2] \leq n \mathbb{E}[\|x_1\|^2 + \cdots + \| x_n \|^2]$ , Assumptions~\ref{a_variance} and \ref{a_smooth}, and
% $(b4)$ follows from Lemma~\ref{lem: aux_bounded_x_t}.

% {\em For strategy 1}, we bound $A_3^{'}$ as follows.

% {\em For strategy 2}, we bound $A_3^{'}$ as follows.

It then follows that
\begin{align*}
\| \sum_{i=1}^{m} \t_i \|^2 &= \sum_{i \in [m]} \|  \t_i \|^2 + \sum_{i \neq j} <\t_i, \t_j> \\
&\overset{(b11)}{=} \sum_{i \in [m]} m \|  \t_i \|^2 - \frac{1}{2} \sum_{i \neq j} \| \t_i - \t_j \|^2 \\
\| \sum_{i=1}^{m} \mathbb{P} \{ i \in S_t \} \t_i \|^2 &= \sum_{i \in [m]} \mathbb{P} \{ i \in S_t \} \|  \t_i \|^2 + \sum_{i \neq j} \mathbb{P} \{ i, j \in S_t \} <\t_i, \t_j> \\
&\overset{(b12)}{=} \frac{n}{m} \sum_{i \in [m]} \| \t_i \|^2 + \frac{n(n-1)}{m(m-1)} \sum_{i \neq j} <\t_i, \t_j> \\
&\overset{(b13)}{=} \frac{n^2}{m} \sum_{i \in [m]} \| \t_i \|^2 - \frac{n(n-1)}{2m(m-1)} \sum_{i \neq j} \| \t_i - \t_j \|^2,
\end{align*}
where $(b11)$ and $(b13)$ are due to the fact that $\big<\x, \y \big> = \frac{1}{2} [ \| \x \|^2 + \| \y \|^2 - \| \x - \y \|^2] \leq \frac{1}{2} [ \| \x \|^2 + \| \y \|^2 ],$
$(b12)$ follows from the fact that $\mathbb{P} \{ i \in S_t \} = \frac{n}{m}$ and $\mathbb{P} \{ i, j \in S_t \} = \frac{n(n-1)}{m(m-1)}$.
Therefore, we have
\begin{align*}
A_3^{'} &= \frac{L \eta^2 \eta_L^2}{2n^2} \| \sum_{i=1}^{m} \mathbb{P} \{ i \in S_t \} \sum_{j=0}^{K-1} \nabla F_i(\x_{t, j}^i)] \|^2 - \frac{\eta \eta_L}{2K m^2} \| \sum_{i=1}^{m} \sum_{k=0}^{K-1} \nabla F_i(\x_{t,k}^i) \|^2 \\
&= (\frac{L \eta^2 \eta_L^2}{2m} - \frac{\eta \eta_L}{2Km}) \sum_{i=1}^{m} \| \t_i \|^2 + (\frac{\eta \eta_L}{4Km^2} - \frac{L \eta^2 \eta_L^2(n-1)}{4mn(m-1)}) \sum_{i \neq j} \| \t_i - \t_j \|^2 \\
&\overset{(b14)}{=} (\frac{L \eta^2 \eta_L^2}{2m} - \frac{L \eta^2 \eta_L^2(n-1)}{2n(m-1)}) \sum_{i=1}^{m} \| \t_i \|^2 - (\frac{\eta \eta_L}{2Km^2} - \frac{L \eta^2 \eta_L^2(n-1)}{2mn(m-1)}) \| \sum_{i \in [m]} \t_i\|^2 \\
&\overset{(b15)}{\leq} (\frac{L \eta^2 \eta_L^2}{2m} - \frac{L \eta^2 \eta_L^2(n-1)}{2n(m-1)}) \sum_{i=1}^{m} \| \t_i \|^2 \\
&= L \eta^2 \eta_L^2 \frac{m-n}{2mn(m-1)} \sum_{i=1}^{m} \| \t_i \|^2 ,
\end{align*}
where $(b14)$ follows from the fact that $ \| \sum_{i \in [m]} \t_i\|^2 = \sum_{i \in [m]} m \|  \t_i \|^2 - \frac{1}{2} \sum_{i \neq j} \| \t_i - \t_j \|^2$, and $(b15)$ is due to the fact that $(\frac{\eta \eta_L}{2Km^2} - \frac{L \eta^2 \eta_L^2(n-1)}{2mn(m-1)}) \geq 0$ if $\eta \eta_L K L \leq \frac{n(m-1)}{m(n-1)}$. 

Then we have
\begin{align}
\mathbb{E}_t [f(\x_{t+1})] 
	&\leq f(\x_t) \!-\! \eta \eta_L K (\frac{1}{2} \!-\! 15 K^2 \eta_L^2 L^2 \!-\! L \eta \eta_L \frac{m-n}{2n(m-1)} (90K^3 \eta_L^2L^2 \!+\! 3K)) \| \nabla f(\x_t) \|^2 \nonumber\\
	&\quad + \frac{L K \eta^2 \eta_L^2}{2n} \sigma_L^2  + 3 K^2 L \eta^2 \eta_L^2 \frac{m-n}{2n(m-1) } \sigma_G^2 \nonumber\\
	&\quad + \eta \eta_L K (\frac{5 K \eta_L^2 L^2}{2} + 15 K \eta \eta_L^3 L^3 \frac{m-n}{2n(m-1)}) (\sigma_L^2 + 6K \sigma_G^2) \nonumber\\
	&\overset{(b16)}{\leq} f(\x_t) - c \eta \eta_L K \| \nabla f(\x_t) \|^2 + \frac{L K \eta^2 \eta_L^2}{2n} \sigma_L^2  + 3 K L \eta^2 \eta_L^2 \frac{m-n}{2n(m-1) } \sigma_G^2 \nonumber\\
	&\quad + \eta \eta_L K (\frac{5 K \eta_L^2 L^2}{2} + 15 K^2 \eta \eta_L^3 L^3 \frac{m-n}{2n(m-1)}) (\sigma_L^2 + 6K \sigma_G^2), \label{partial_resut_2}
\end{align}
where $(b16)$ holds because there exists a constant $c>0$ satisfying $(\frac{1}{2} - 5 K^2 \eta_L^2 L^2 - L \eta \eta_L \frac{m-n}{2n(m-1)} (90K^3 \eta_L^2L^2 + 3K)) > c > 0$ if $10 K^2 \eta_L^2 L^2 - L \eta \eta_L \frac{m-n}{n(m-1)} (90K^3 \eta_L^2L^2 + 3K) < 1$.

Note that the requirement of $|S_t| = n$ can be relaxed to $|S_t| \geq n$.
With $p_t \geq n$ workers in $t$-th communication round, \ref{partial_resut_2} is 
\begin{align*}
\mathbb{E}_t [f(\x_{t+1})] &\leq f(\x_t) - c \eta \eta_L K \| \nabla f(\x_t) \|^2 + \frac{L K \eta^2 \eta_L^2}{2p_t} \sigma_L^2  + 3 K L \eta^2 \eta_L^2 \frac{m-p_t}{2p_t(m-1) } \sigma_G^2 \nonumber\\
	&\quad+ \eta \eta_L K (\frac{5 K \eta_L^2 L^2}{2} + 15 K^2 \eta \eta_L^3 L^3 \frac{m-p_t}{2p_t(m-1)}) (\sigma_L^2 + 6K \sigma_G^2) \nonumber \\
&\leq f(\x_t) - c \eta \eta_L K \| \nabla f(\x_t) \|^2 + \frac{L K \eta^2 \eta_L^2}{2n} \sigma_L^2  + 3 K L \eta^2 \eta_L^2 \frac{m-n}{2n(m-1) } \sigma_G^2 \nonumber\\
	&\quad+ \eta \eta_L K (\frac{5 K \eta_L^2 L^2}{2} + 15 K^2 \eta \eta_L^3 L^3 \frac{m-n}{2n(m-1)}) (\sigma_L^2 + 6K \sigma_G^2)
\end{align*}
That is, the same convergence rate can be guaranteed if at least $n$ workers in each communication round (no need to be exactly $n$).

Rearranging and summing from $t=0, \cdots, T-1$, we have the convergence for partial device participation with sampling strategy 2 as follows:
$$
	\min_{t \in [T]} \mathbb{E} [ \|\nabla f(\x_t)\|_2^2] \leq \frac{f_0 - f_{*}}{c \eta \eta_L K T} + \Phi,
$$
where $\Phi = \frac{1}{c} \big[\frac{L \eta \eta_L}{2n} \sigma_L^2 + 3 K L \eta \eta_L \frac{m-n}{2n(m-1) } \sigma_G^2 + (\frac{5 K \eta_L^2 L^2}{2} + 15 K^2 \eta \eta_L^3 L^3 \frac{m-n}{2n(m-1)}) (\sigma_L^2 + 6K \sigma_G^2)  \big]$ and $c$ is a constant.
This completes the proof.
\end{proof}

% Lemma used in partial worker participation
\subsubsection{Key Lemmas}
\begin{lem} [Unbiased Sampling] \label{lem: unbiased_smapling}
For strategies 1 and 2, the estimator $\Delta_t$ is unbiased, i.e.,
$$\mathbb{E}_{S_t} [\Delta_t] = \bar{\Delta}_t.$$
\end{lem}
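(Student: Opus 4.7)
The plan is to treat the two sampling strategies separately and show in each case that the indicator/selection randomness averages out to the uniform $1/m$ weighting that defines $\bar\Delta_t$. In both cases I would take the expectation with the local updates $\{\Delta_t^i\}_{i=1}^m$ held fixed (they depend only on the local stochastic gradient randomness, which is independent of the worker-sampling randomness $S_t$), so that $\mathbb{E}_{S_t}[\cdot]$ acts only on the selection.

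For strategy 1 (independent uniform sampling with replacement), I would write $S_t = \{l_1,\ldots,l_n\}$ with each $l_z$ drawn uniformly and independently from $[m]$. Then
\begin{align*}
\mathbb{E}_{S_t}[\Delta_t] \;=\; \mathbb{E}_{S_t}\!\left[\tfrac{1}{n}\sum_{z=1}^n \Delta_t^{l_z}\right] \;=\; \tfrac{1}{n}\sum_{z=1}^n \tfrac{1}{m}\sum_{i=1}^m \Delta_t^i \;=\; \bar\Delta_t,
\end{align*}
which is a one-line computation using linearity of expectation and the uniform marginal of each $l_z$.

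For strategy 2 (uniform sampling without replacement of a set of size $n$), I would rewrite $\Delta_t = \tfrac{1}{n}\sum_{i=1}^m \mathbb{I}\{i\in S_t\}\,\Delta_t^i$ and use the classical fact that under this scheme $\mathbb{P}(i\in S_t) = n/m$ for every $i\in[m]$. Linearity of expectation then gives
\begin{align*}
\mathbb{E}_{S_t}[\Delta_t] \;=\; \tfrac{1}{n}\sum_{i=1}^m \mathbb{P}(i\in S_t)\,\Delta_t^i \;=\; \tfrac{1}{n}\sum_{i=1}^m \tfrac{n}{m}\,\Delta_t^i \;=\; \bar\Delta_t.
\end{align*}

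There is essentially no obstacle here: the statement is a direct consequence of the uniform inclusion probabilities in both schemes, and the only mild care needed is to note independence between the sampling randomness and the local-gradient randomness (so that the $\Delta_t^i$ can be pulled out of the selection expectation) and to use the correct marginal probability per scheme. This lemma is exactly what justifies invoking the bound on $A_1$ from the full-participation analysis when handling $A_1'$ in the partial-participation proof.
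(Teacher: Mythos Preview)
Your proposal is correct and essentially matches the paper's argument; the only cosmetic difference is that the paper handles both strategies in a single line by observing that in either scheme each sampled index has the same uniform marginal over $[m]$ (so $\mathbb{E}_{S_t}[\Delta_t^{t_i}]=\bar\Delta_t$ for every $i$), whereas you split into two cases and use the indicator representation for strategy~2. Both routes are equivalent and rely on exactly the same fact about uniform inclusion/marginal probabilities.
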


{\em Proof of Lemma~\ref{lem: unbiased_smapling}.} \\
Let $S_t = \{ t_1, \cdots, t_n \}$ with size $n$.
Both for sampling strategies 1 and 2, each sampling distribution is identical.
Then we have:
\begin{align*}
\mathbb{E}_{S_t} [\Delta_t] = \frac{1}{n} \mathbb{E}_{S_t} [\sum_{t_i \in S_t} \Delta_t^{t_i}] 
= \frac{1}{n} \mathbb{E}_{S_t} [\sum_{i=1}^{n} \Delta_t^{t_i} ]
= \mathbb{E}_{S_t} [\Delta_t^{t_1}]
= \frac{1}{m} \sum_{i=1}^{m} \Delta_t^i 
= \bar{\Delta}_t.
\end{align*}

\subsection{Auxiliary Lemmas}

\begin{lem} [Lemma 4 in \citet{reddi2020adaptive}]\label{lem: aux_bounded_x_t}
For any step-size satisfying $\eta_L \leq \frac{1}{8LK}$, we can have the following results:
$$\frac{1}{m} \sum_{i=1}^{m} \mathbb{E} [\| \x_{t, k}^i - \x_t \|^2] \leq 5 K \eta_L^2 (\sigma_L^2 + 6 K \sigma_G^2) + 30 K^2 \eta_L^2 \| \nabla f(\x_t) \|^2.$$
\end{lem}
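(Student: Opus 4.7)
The target inequality bounds the average squared local drift $U_k := \frac{1}{m}\sum_{i=1}^m \mathbb{E}\|\x_{t,k}^i - \x_t\|^2$ by a constant‐plus‐gradient expression that is linear in $K\eta_L^2$ (for the noise and heterogeneity terms) and quadratic in $K\eta_L^2$ (for the drift induced by $\nabla f(\x_t)$). My plan is to derive a one‐step recursion of the form $U_k \le (1+\alpha)\,U_{k-1} + D$ for some $\alpha = O(1/K)$ and some $D$ independent of $k$, then unroll it over at most $K$ steps.

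Step 1 (one‐step decomposition). Starting from the local update $\x_{t,k}^i = \x_{t,k-1}^i - \eta_L\g_{t,k-1}^i$ and subtracting $\x_t$, I would use the unbiasedness $\mathbb{E}[\g_{t,k-1}^i \mid \mathcal{F}_{t,k-1}] = \nabla F_i(\x_{t,k-1}^i)$ together with Assumption~\ref{a_variance} to separate the noise from the drift:
\begin{equation*}
\mathbb{E}\|\x_{t,k}^i - \x_t\|^2
= \mathbb{E}\|\x_{t,k-1}^i - \x_t - \eta_L \nabla F_i(\x_{t,k-1}^i)\|^2 + \eta_L^2\,\mathbb{E}\|\g_{t,k-1}^i - \nabla F_i(\x_{t,k-1}^i)\|^2,
\end{equation*}
with the second term bounded by $\eta_L^2\sigma_L^2$.

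Step 2 (Young's inequality and gradient decomposition). For the first term I would apply $\|\a+\b\|^2 \le (1+\tfrac{1}{2K-1})\|\a\|^2 + 2K\|\b\|^2$, then bound $\|\nabla F_i(\x_{t,k-1}^i)\|^2$ by adding and subtracting $\nabla F_i(\x_t)$ and $\nabla f(\x_t)$:
\begin{equation*}
\|\nabla F_i(\x_{t,k-1}^i)\|^2 \le 3L^2\|\x_{t,k-1}^i - \x_t\|^2 + 3\|\nabla F_i(\x_t) - \nabla f(\x_t)\|^2 + 3\|\nabla f(\x_t)\|^2,
\end{equation*}
using $L$-smoothness (Assumption~\ref{a_smooth}). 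After averaging over $i \in [m]$, the middle term becomes $3\sigma_G^2$ by the bounded global variability assumption. This produces the recursion
\begin{equation*}
U_k \le \Bigl(1 + \tfrac{1}{2K-1} + 6K L^2 \eta_L^2\Bigr) U_{k-1} + \eta_L^2\sigma_L^2 + 6K\eta_L^2\sigma_G^2 + 6K\eta_L^2\|\nabla f(\x_t)\|^2.
\end{equation*}

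Step 3 (calibration and unrolling). The key calibration is the step‐size condition $\eta_L \le \tfrac{1}{8LK}$, which makes $6KL^2\eta_L^2 \le \tfrac{3}{32K}$ so the multiplicative factor is at most $1 + \tfrac{c}{K}$ for a small absolute constant $c$. Then $\prod_{j=0}^{k-1}(1+c/K) \le e^{c}$ for $k \le K$, so unrolling the geometric recursion yields $U_k \le (e^{c}\, K)\cdot\bigl[\eta_L^2\sigma_L^2 + 6K\eta_L^2\sigma_G^2 + 6K\eta_L^2\|\nabla f(\x_t)\|^2\bigr]$; collecting terms with constants chosen so that $e^{c}\,K \le 5K$ on the noise piece and $6\cdot e^{c}\,K^2 \le 30K^2$ on the heterogeneity and gradient pieces produces exactly the stated bound $5K\eta_L^2(\sigma_L^2 + 6K\sigma_G^2) + 30K^2\eta_L^2\|\nabla f(\x_t)\|^2$.

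The main obstacle I expect is the constant bookkeeping in Step 3: one must pick the Young's inequality weight and interpret the step‐size condition $\eta_L \le 1/(8LK)$ tightly enough that (i) the effective contraction factor $1+\alpha$ satisfies $\alpha \le O(1/K)$ so that $K$ iterations only inflate by a bounded constant, and (ii) that bounded constant is small enough to land at the $5K$ and $30K^2$ prefactors rather than something larger. A secondary subtlety is that the recursion must be set up on the averaged quantity $U_k$ (because the $\sigma_G^2$ bound only holds on average over workers), so the Young's inequality and gradient decomposition should be performed before the average is taken.
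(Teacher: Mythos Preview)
Your proposal is correct and follows essentially the same route as the paper's proof: derive the one-step recursion $U_k \le \bigl(1 + \tfrac{1}{2K-1} + 6K\eta_L^2 L^2\bigr)U_{k-1} + \eta_L^2\sigma_L^2 + 6K\eta_L^2\sigma_G^2 + 6K\eta_L^2\|\nabla f(\x_t)\|^2$ via unbiasedness, Young's inequality with weight $\tfrac{1}{2K-1}$, smoothness, and bounded heterogeneity, then use $\eta_L \le \tfrac{1}{8LK}$ to cap the multiplicative factor and unroll over $K$ steps. The only cosmetic differences are that the paper keeps the recursion per worker and averages at the very end (its $\sigma_G$ bound is stated pointwise, so either order works), and it bounds the factor explicitly by $1+\tfrac{1}{K-1}$ and evaluates the geometric sum as $(K-1)\bigl[(1+\tfrac{1}{K-1})^K - 1\bigr] \le 5K$ to obtain the exact constants $5$ and $30$.
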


\begin{proof}
In order for this paper to be self-contained, we restate the proof of Lemma 4 in \citep{reddi2020adaptive} here.

For any worker $i \in [m]$ and $k \in [K]$, we have:
\begin{align*}
&\mathbb{E} [\|\x_{t, k}^i - \x_t \|^2] = \mathbb{E} [\|\x_{t, k-1}^i - \x_t -\eta_L g_{t, k-1}^t \|^2] \\
&\!\leq \!\! \mathbb{E} [\|\x_{t, k-1}^i \!\!-\!\! \x_t \!\!-\!\! \eta_L (g_{t, k-1}^t \!\!-\!\! \nabla F_i(\x_{t, k-1}^i) \!\!+\!\! \nabla F_i(\x_{t, k-1}^i) \!\!-\!\! \nabla F_i(\x_{t}) \!\!+\!\! \nabla F_i(\x_{t}) \!\!-\!\! \nabla f(\x_t) \!\!+\!\! \nabla f(\x_t)) \|^2 ] \\
&\leq (1 + \frac{1}{2K-1}) \mathbb{E} [\| \x_{t, k-1}^i - \x_t \|^2] + \mathbb{E} [\| \eta_L (g_{t, k-1}^t - \nabla F_i(\x_{t, k-1}^i)) \|^2 ] \\
&\quad \!\!\!\!+\! 6K \mathbb{E} [\| \eta_L (\nabla F_i(\x_{t, k-1}^i) \!-\! \nabla F_i(\x_{t})) \|^2] \!+\! 6K \mathbb{E} [\| \eta_L (\nabla F_i(\x_{t}) \!-\! \nabla f(\x_t))) \|^2] \!+\! 6K \| \eta_L \nabla f(\x_t) \|^2 \\
&\!\leq \!\! (1 \!+\! \frac{1}{2K \!-\! 1}) \mathbb{E} [\| \x_{t, k-1}^i \!\!\!-\! \x_t \|^2] \!+\! \eta_L^2 \sigma_L^2 \!\!+\! 6K \eta_L^2 L^2 \mathbb{E} [\| \x_{t, k-1}^i \!\!\!-\! \x_{t} \|^2] \!+\! 6K \eta_L^2 \sigma_G^2 \!\!+\!\! 6K \| \eta_L \! \nabla f(\x_t) \|^2 \\
&= (1 + \frac{1}{2K-1} + 6K \eta_L^2 L^2) \mathbb{E} [\| \x_{t, k-1}^i - \x_t \|^2] + \eta_L^2 \sigma_L^2 + 6K \eta_L^2 \sigma_G^2 + 6K \| \eta_L \nabla f(\x_t) \|^2 \\
&\leq (1 + \frac{1}{K-1}) \mathbb{E} [\| \x_{t, k-1}^i - \x_t \|^2] + \eta_L^2 \sigma_L^2 + 6K \eta_L^2 \sigma_G^2 + 6K \| \eta_L \nabla f(\x_t) \|^2
\end{align*}
Unrolling the recursion, we get:
\begin{align*}
\frac{1}{m} \sum_{i=1}^{m} \mathbb{E} [\|\x_{t, k}^i - \x_t \|^2] &\leq \sum_{p=0}^{k-1}(1+\frac{1}{K-1})^p [\eta_L^2 \sigma_L^2+6K\sigma_G^2 + 6K\eta_L^2 \| \eta_L \nabla f(\x_t)) \|^2] \\
&\leq (K-1) [(1 + \frac{1}{K-1})^K - 1] [\eta_L^2 \sigma_L^2+6K\sigma_G^2 + 6K\eta_L^2 \| \eta_L \nabla f(\x_t)) \|^2] \\
&\leq  5 K \eta_L^2 (\sigma_L^2 + 6 K \sigma_G^2) + 30 K^2 \eta_L^2 \| \nabla f(\x_t) \|^2
\end{align*}
This completes the proof.
\end{proof}

% \begin{lem} \label{lem: aux_n_x_i}
% $\mathbb{E}[\|x_1 + \cdots + x_n \|^2] \leq n \mathbb{E}[\|x_1\|^2 + \cdots + \| x_n \|^2]$ 
% \end{lem}

% \begin{lem} \label{lem: aux_x_i}
% $\mathbb{E} [\|x_1 + \cdots + x_n \|^2] = \mathbb{E}[\|x_1\|^2 + \cdots + \| x_n \|^2]$ if $x_i^{'}$s are independent with zero mean.
% \end{lem}

% \begin{lem} \label{lem: aux_exp}
% $\mathbb{E}[\| \x \|^2] = \mathbb{E}[\| \x - \mathbb{E}[\x] \|^2] + \| \mathbb{E}[\x] \|^2]$ 
% \end{lem}

% \begin{lem} \label{lem: aux_time}
% $\big<\x, \y \big> = \frac{1}{2} [ \| \x \|^2 + \| \y \|^2 - \| \x - \y \|^2] \leq \frac{1}{2} [ \| \x \|^2 + \| \y \|^2 ]$ 
% \end{lem}

% !TEX root = FedAvg.tex

\section{Appendix II: Experiments} \label{exp}
We provide the full detail of the experiments.
We uses non-i.i.d. versions for MNIST and CIFAR-10, which are described as follows:

\subsection{MNIST}
We study image classification of handwritten digits 0-9 in MNIST and modify the MNIST dataset to a non-i.i.d. version.

To impose statistical heterogeneity, we split the data based on the digits ($p$) they contain in their dataset.
We distribute the data to $m = 100$ workers such that each worker contains only a certain class of digits with the same number of training/test samples.
For example, for $p = 1$, each worker only has training/testing samples with one digit, which causes heterogeneity among different workers.
For $p = 10$, each worker has samples with 10 digits, which is essentially i.i.d. case.
In this way, we can use the digits in worker's local dataset to represent the non-i.i.d. degree qualitatively.
In each communication round, 100 workers run $K$ epochs locally in parallel and then the server samples $n$ workers for aggregation and update.
We make a grid-search experiments for the hyper-parameters as shown in Table ~\ref{tab: 2}.

\begin{table} [!h]
\begin{center}
\caption{Hyper-parameters Tuning.}
\begin{tabular}{c c} 
	\hline
	Server Learning Rate & $\eta \in \{ 1, 10 \}$ \\
	% \hline
	Client Learning Rate & $\eta_L \in \{ 0.001, 0.01, 0.1 \}$ \\
	% \hline
	Local Epochs & $ K \in \{ 1, 5, 10 \}$ \\
	% \hline
	Clients Partition Number & $ n \in \{ 10, 50, 100 \}$ \\
	% \hline
	Non-i.i.d. Degree & $ p \in \{ 1, 2, 5, 10 \}$ \\
	\hline
\end{tabular}
\label{tab: 2}
\end{center}
\end{table}

We run three models: multinomial logistic regression,  fully-connected network with two hidden layers (2NN) (two 200 neurons hidden layers with ReLU followed by an output layer), convolutional neural network (CNN), as shown in Table~\ref{tab: 3}.
The results are shown in Figures~\ref{fig01}, ~\ref{fig2} and ~\ref{fig3}.

\begin{table} [!htb]
\begin{center}
\caption{CNN Architecture for MNIST.}
\begin{tabular}{c c} 
	\hline
	Layer Type & Size \\
	\hline
	Convolution + ReLu & $5 \times 5 \times 32$ \\
	% \hline
	Max Pooling & $2 \times 2$\\
	% \hline
	Convolution + ReLu & $5 \times 5 \times 64$ \\
	% \hline
	Max Pooling & $2 \times 2$\\
	% \hline
	Fully Connected + ReLU & $1024 \times 512$\\
	% \hline
	Fully Connected & $512 \times 10$ \\
	\hline
\end{tabular}
\label{tab: 3}
\end{center}
\end{table}

% \begin{figure*}[!ht]
% 	\begin{minipage}[t]{0.32\linewidth}
% 	\centering
% 	{\includegraphics[width=1.\textwidth]{}} 
% 	\end{minipage}
% \hspace{0.009\textwidth}
% 	\begin{minipage}[t]{0.32\linewidth}
% 	\centering
% 	{\includegraphics[width=1.\textwidth]{}}
% 	\end{minipage}
% \hspace{0.009\textwidth}
% 	\begin{minipage}[t]{0.32\linewidth}
% 	\centering
% 	{\includegraphics[width=1.\textwidth]{}}
% 	\end{minipage}

% 	\begin{minipage}[t]{0.32\linewidth}
% 	\centering
% 	{\includegraphics[width=1.\textwidth]{}}
% 	\text{(a) Impact of non-i.i.d. datasets.} 
% 	\end{minipage}
% \hspace{0.009\textwidth}
% 	\begin{minipage}[t]{0.32\linewidth}
% 	\centering
% 	{\includegraphics[width=1.\textwidth]{}}
% 	\text{ {\addtolength{\leftskip}{10 mm}  (b) Impact of worker number.}}
% 	\end{minipage}
% \hspace{0.009\textwidth}
% 	\begin{minipage}[t]{0.32\linewidth}
% 	\centering
% 	{\includegraphics[width=1.\textwidth]{}}
% 	\text{(c) Impact of local steps}
% 	\end{minipage}
% \caption{Training loss (top) and test accuracy (bottom) for the CNN model with hyper-parameters setting: local learning rate 0.1, global learning rate 1.0: (a) worker number 10, local steps 5 epochs; (b) local steps 5 epochs; (c) 2 digits in each worker's dataset.}
% \label{fig0}
% \end{figure*}%

\begin{figure*}[!htb]
	\begin{minipage}[t]{0.32\linewidth}
	\centering
	{\includegraphics[width=1.\textwidth]{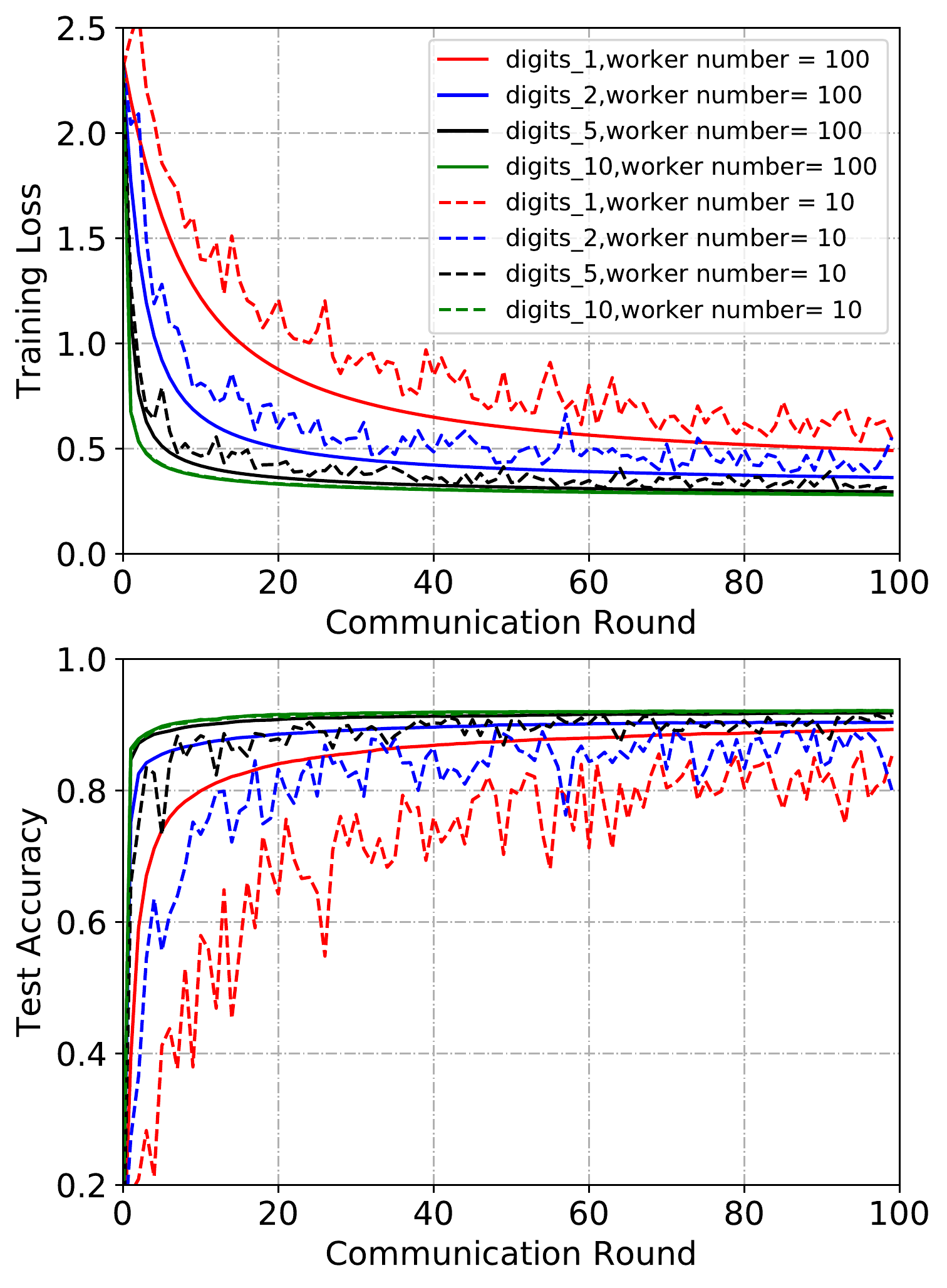}}
	\text{(a) LR}
	\end{minipage}
\hspace{0.009\textwidth}
	\begin{minipage}[t]{0.32\linewidth}
	\centering
	{\includegraphics[width=1.\textwidth]{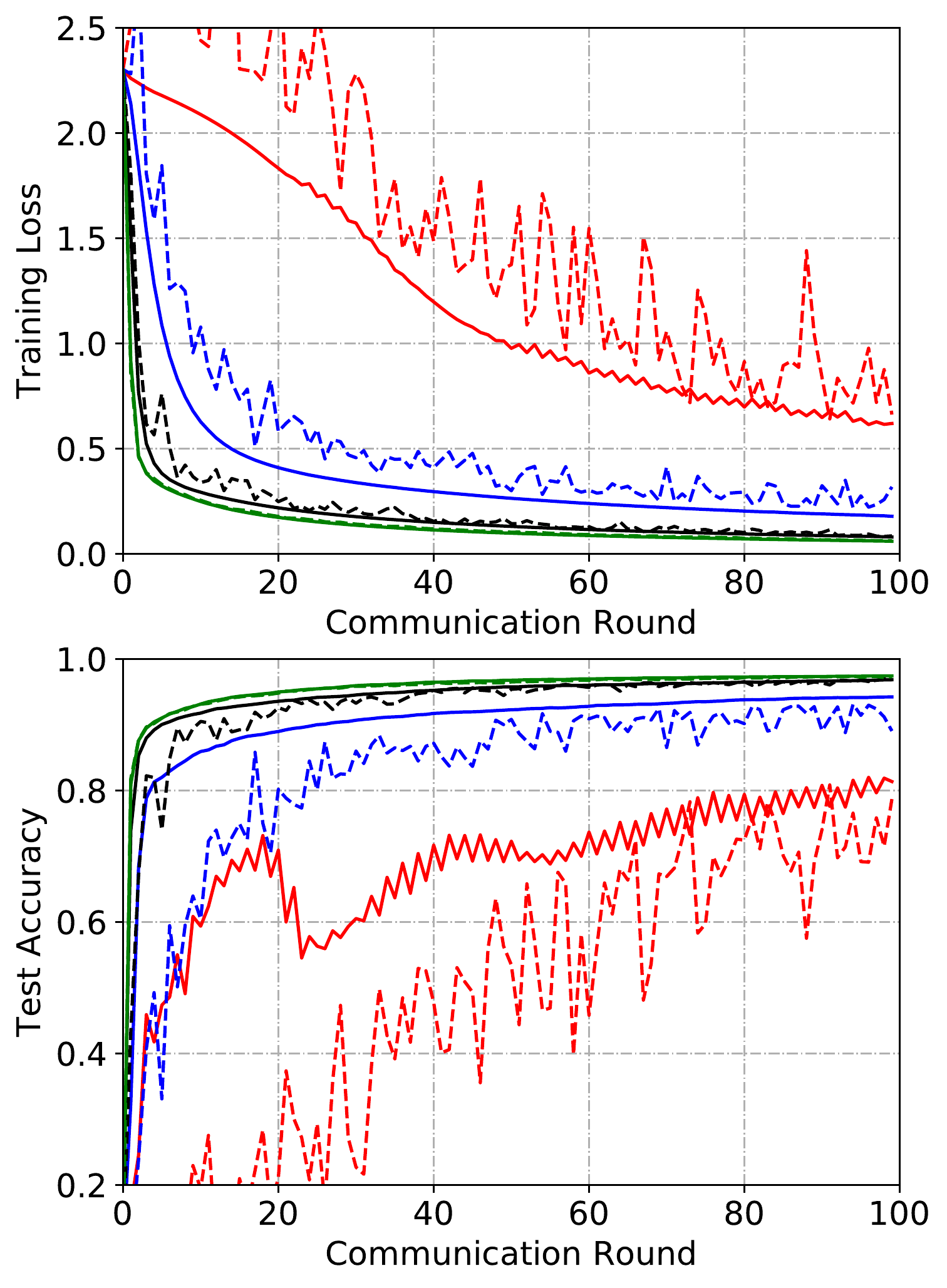}}
	\text{(b) 2NN}
	\end{minipage}
\hspace{0.009\textwidth}
	\begin{minipage}[t]{0.32\linewidth}
	\centering
	{\includegraphics[width=1.\textwidth]{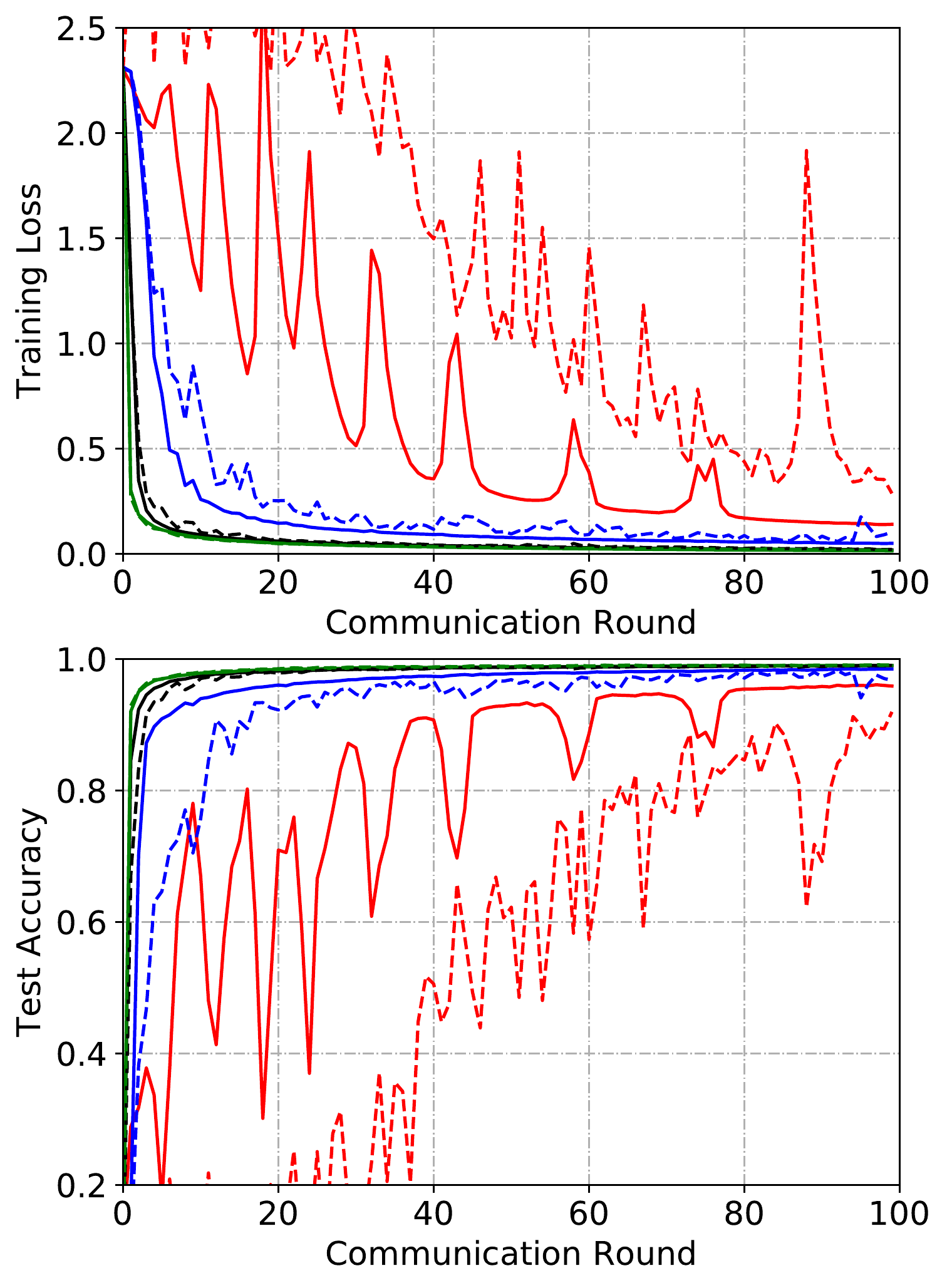}}
	\text{(c) CNN}
	\end{minipage}
\caption{Training loss (top) and test accuracy (bottom) for three models on MNIST with hyperparameters setting: local learning rate 0.1, global learning rate 1.0, local steps 5 epochs.}
\label{fig01}
\end{figure*}%

\begin{figure*}[!htb]
	\begin{minipage}[t]{0.32\linewidth}
	\centering
	{\includegraphics[width=1.\textwidth]{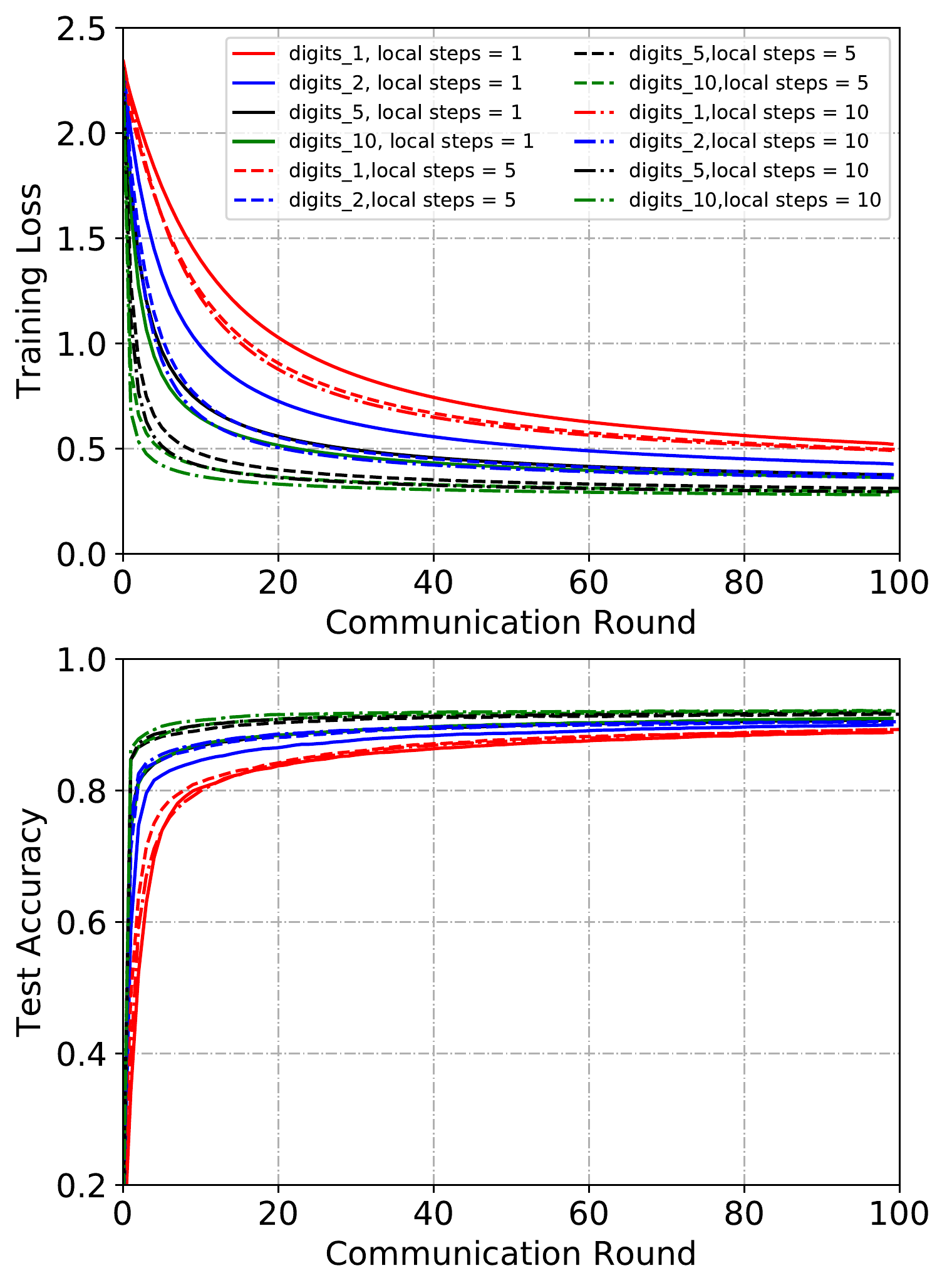}}
	\text{(a) LR}
	\end{minipage}
\hspace{0.009\textwidth}
	\begin{minipage}[t]{0.32\linewidth}
	\centering
	{\includegraphics[width=1.\textwidth]{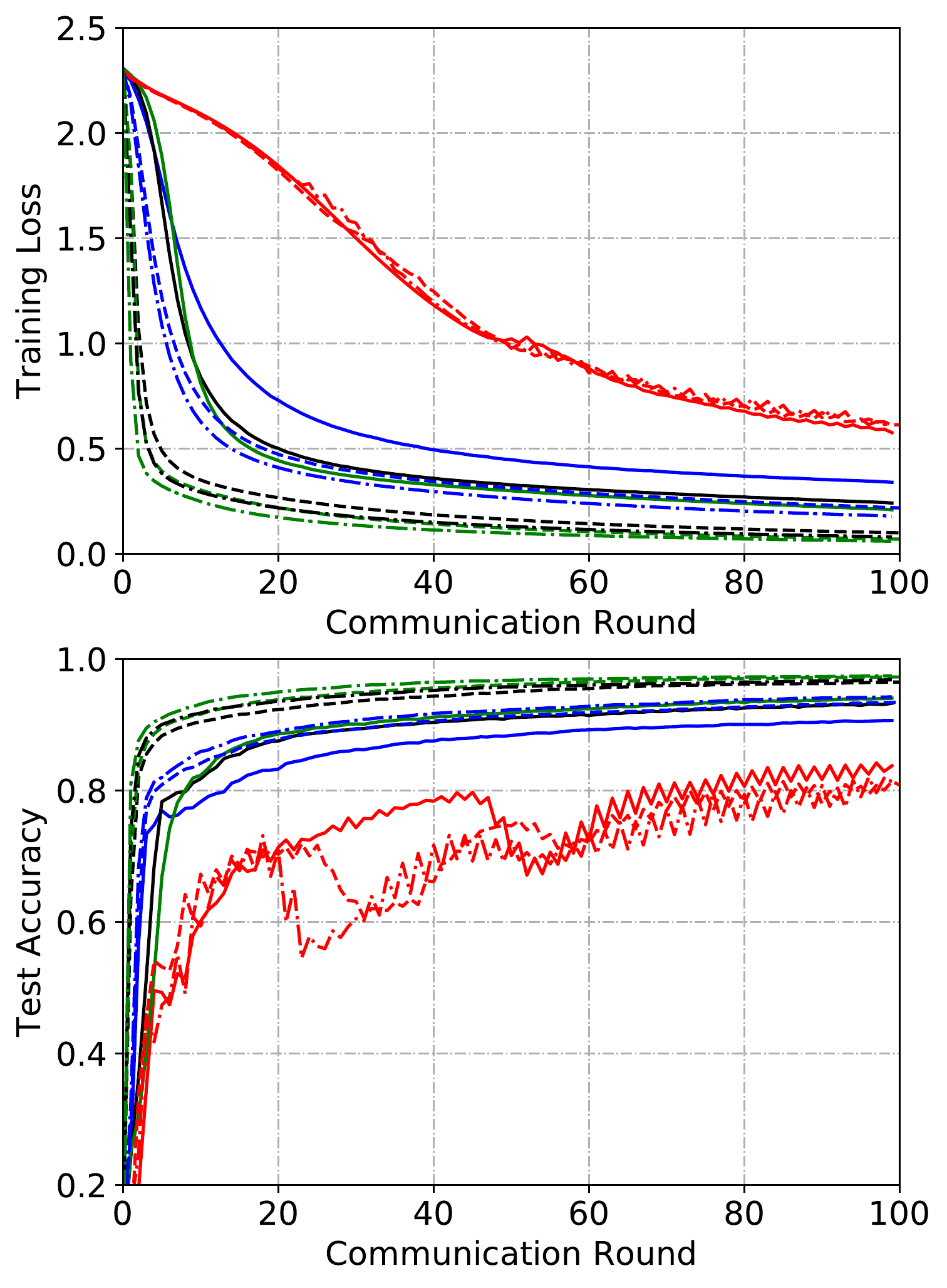}}
	\text{(b) 2NN}
	\end{minipage}
\hspace{0.009\textwidth}
	\begin{minipage}[t]{0.32\linewidth}
	\centering
	{\includegraphics[width=1.\textwidth]{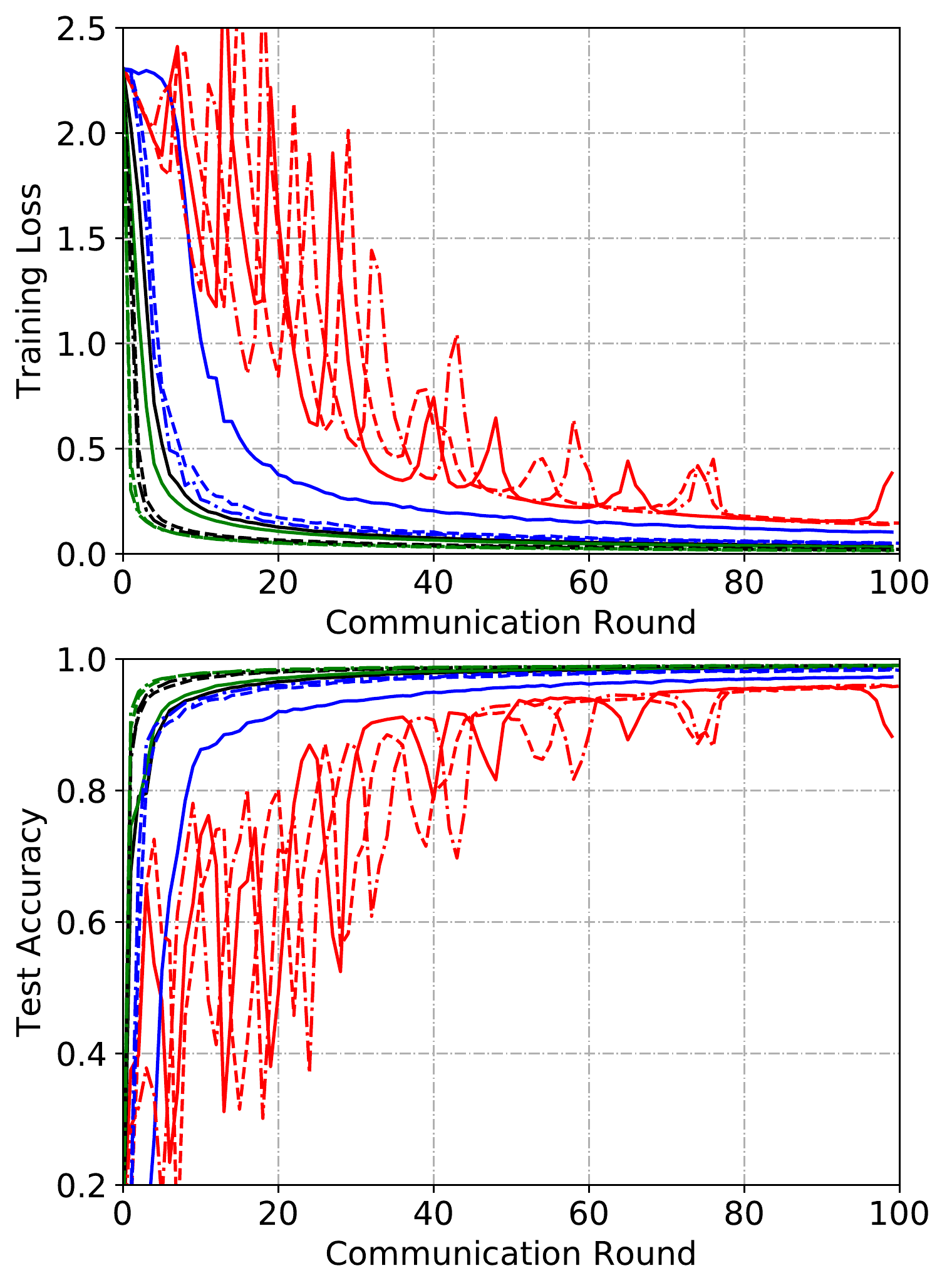}}
	\text{(c) CNN}
	\end{minipage}
\caption{Training loss (top) and test accuracy (bottom) for three models on MNIST with hyperparameters setting: local learning rate 0.1, global learning rate 1.0, worker number 100.}
\label{fig2}
\end{figure*}%

\begin{figure*}[t!]
	\begin{minipage}[t]{0.32\linewidth}
	\centering
	{\includegraphics[width=1.\textwidth]{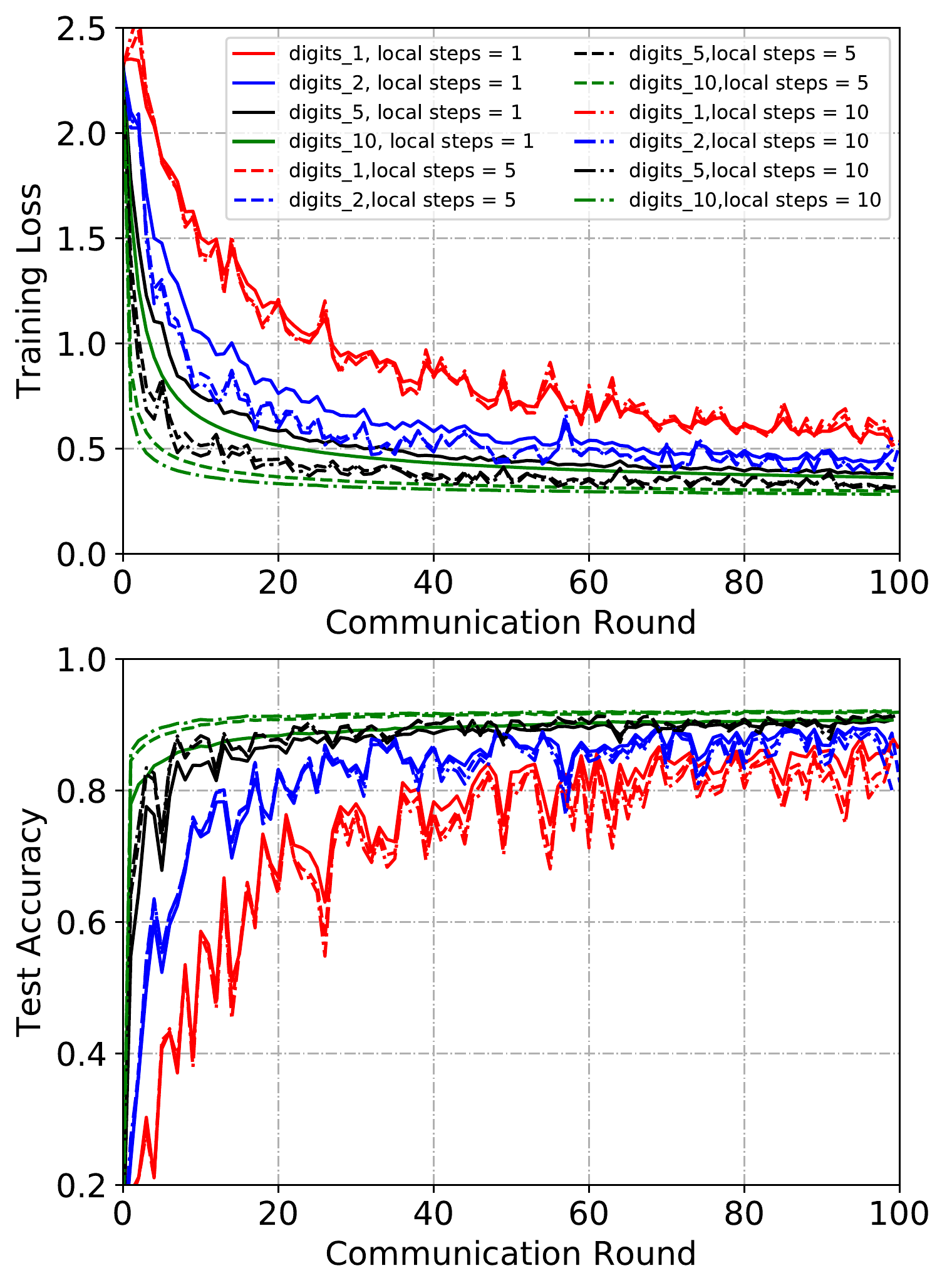}}
	\text{(a) LR}
	\end{minipage}
\hspace{0.009\textwidth}
	\begin{minipage}[t]{0.32\linewidth}
	\centering
	{\includegraphics[width=1.\textwidth]{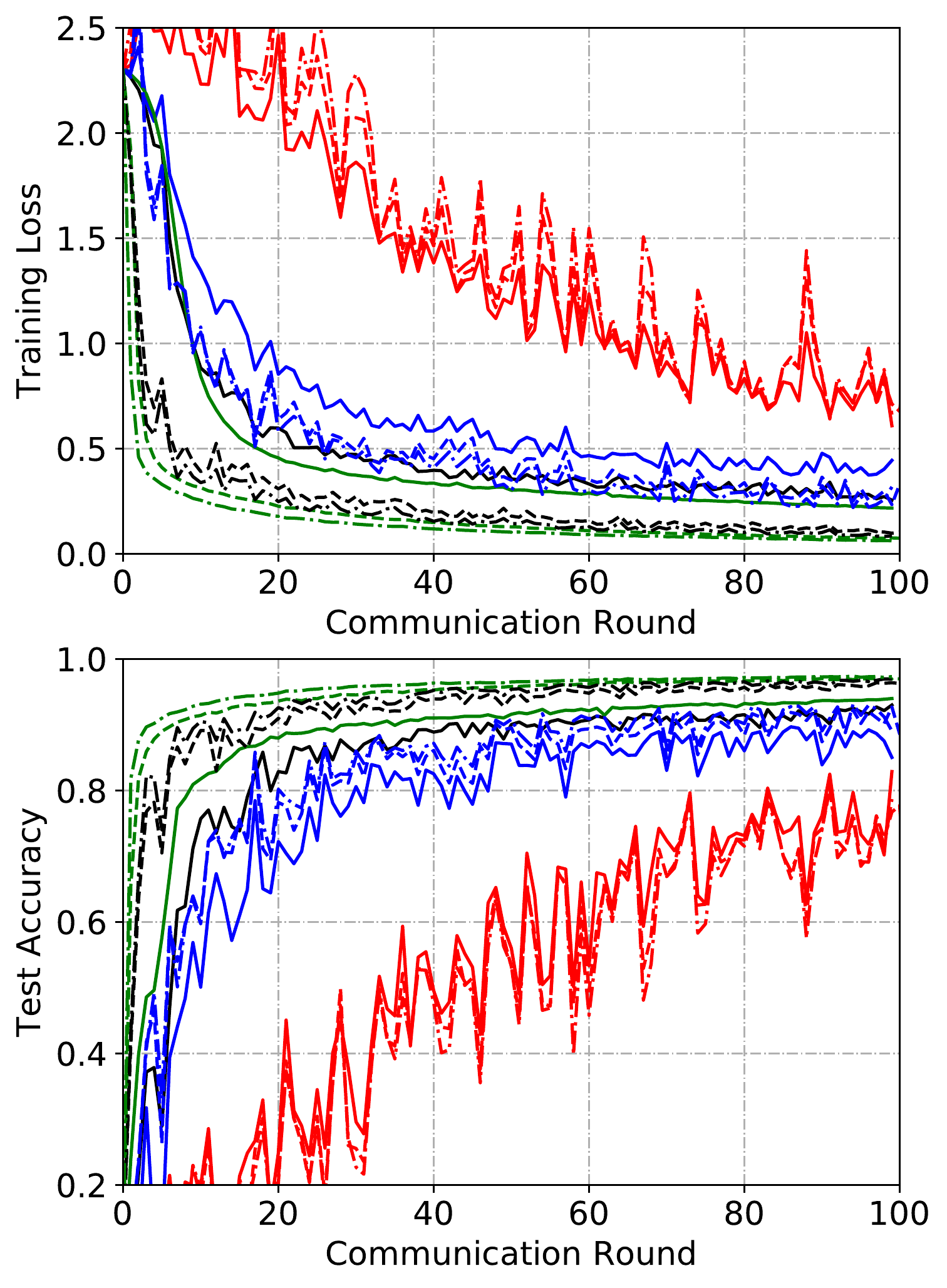}}
	\text{(b) 2NN}
	\end{minipage}
\hspace{0.009\textwidth}
	\begin{minipage}[t]{0.32\linewidth}
	\centering
	{\includegraphics[width=1.\textwidth]{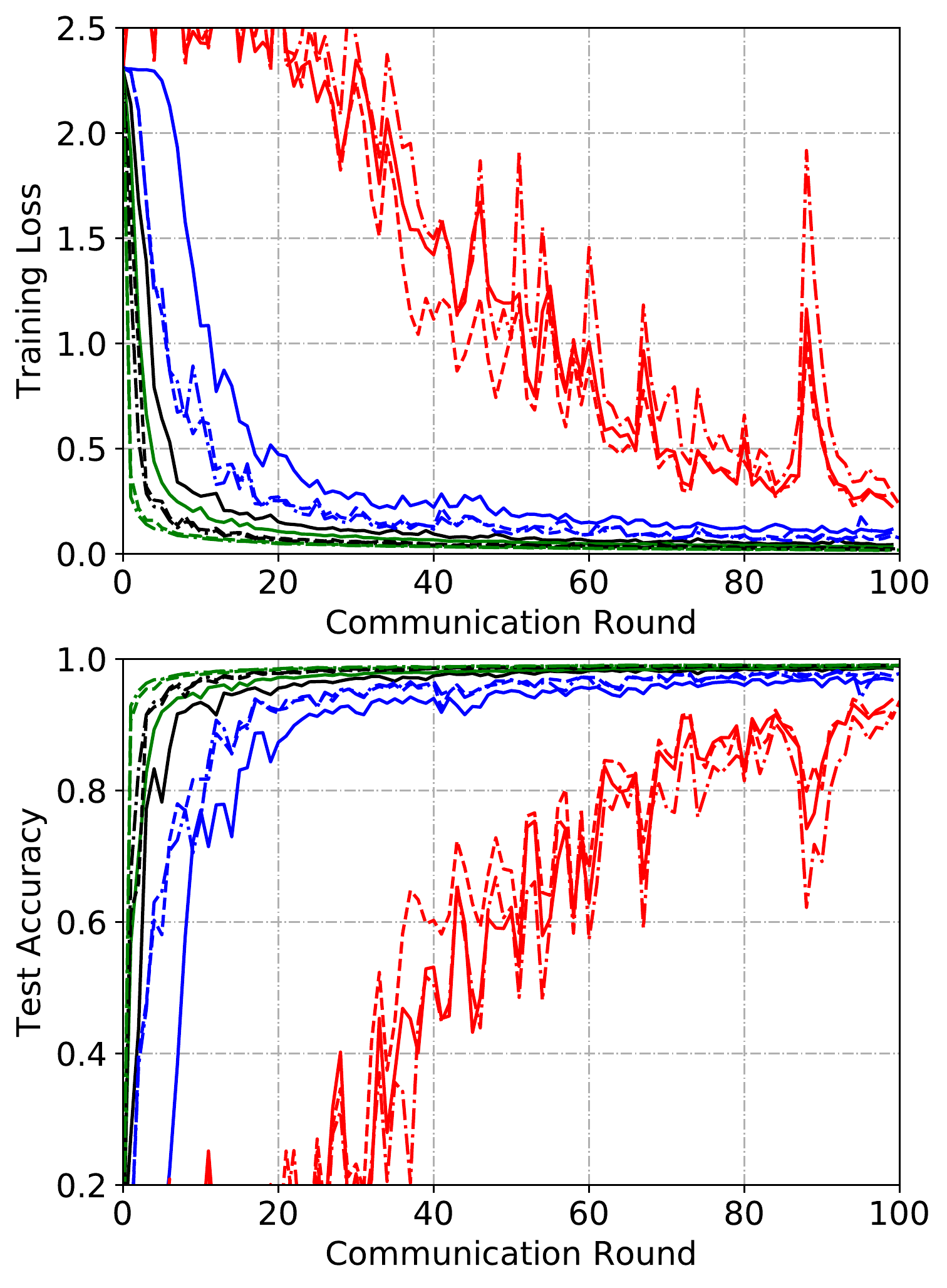}}
	\text{(c) CNN}
	\end{minipage}
\caption{Training loss (top) and test accuracy (bottom) for three models on MNIST with hyperparameters setting: local learning rate 0.1, global learning rate 1.0, worker number 10.}
\label{fig3}
\end{figure*}%

\subsection{CIFAR-10}
Unless stated otherwise, we use the following default parameter setting: the server learning rate and client learning rate are set to $\eta = 1.0$ and $\eta_L = 0.1$, respectively. 
The local epochs is set to $K=10$.
The total number of clients is set to 100, and the clients partition number is set to $n = 10$.
We use the same strategy to distribute the data over clients as suggested in \citet{mcmahan2016communication}. 
% For the IID setting, we evenly partition all the training data among all clients, i.e., each client observes 600 data;
% for the non-IID setting, we first sort the training data by digit label, then divide all the training data into 200 shards of size 300, and randomly assign 2 shards to each client.
% For the CIFAR-10 dataset, we use the same default parameter setting as MNIST data in Section \ref{MNIST_IID_non_IID}, that is, we let $\eta_g = 1.0$, $\eta_L = 0.1$, $K=10$ and $N = 10$. The total number of clients is set to 100.
For the i.i.d. setting, we evenly partition all the training data among all clients, i.e., each client observes 500 data;
for the non-i.i.d. setting, we first sort the training data by label, then divide all the training data into 200 shards of size 250, and randomly assign two shards to each client.
For the CIFAR-10 dataset, we train our classifier with the ResNet model.
The results are shown in Figure~\ref{fig4} and Figure~\ref{fig5}.

\begin{figure*}[!t]
	\centering
	\begin{subfigure}[b]{0.4\textwidth}
		\includegraphics[width=\textwidth]{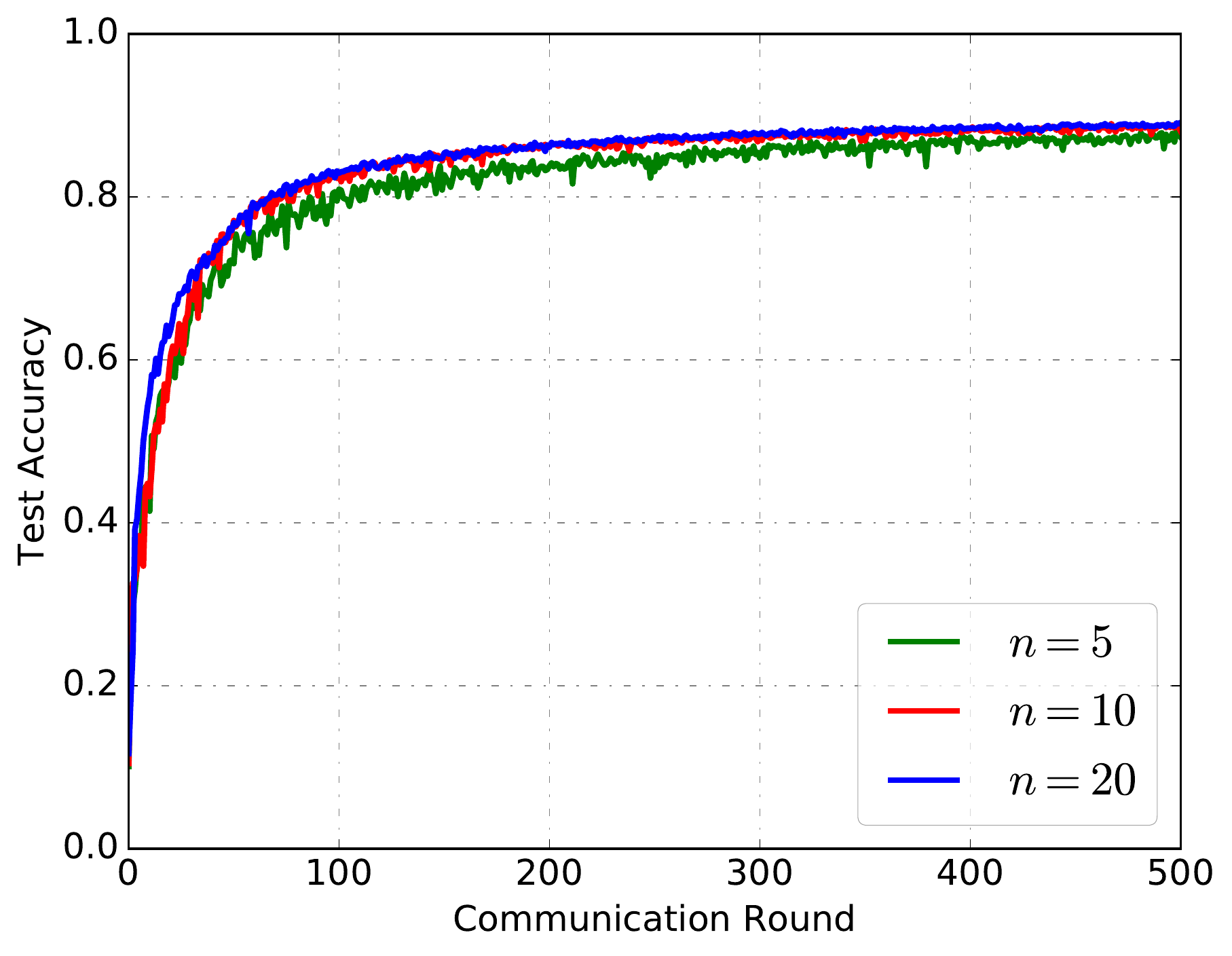}
		\caption{IID.}
	\end{subfigure}
	\hskip1em
	\begin{subfigure}[b]{0.4\textwidth}
		\includegraphics[width=\textwidth]{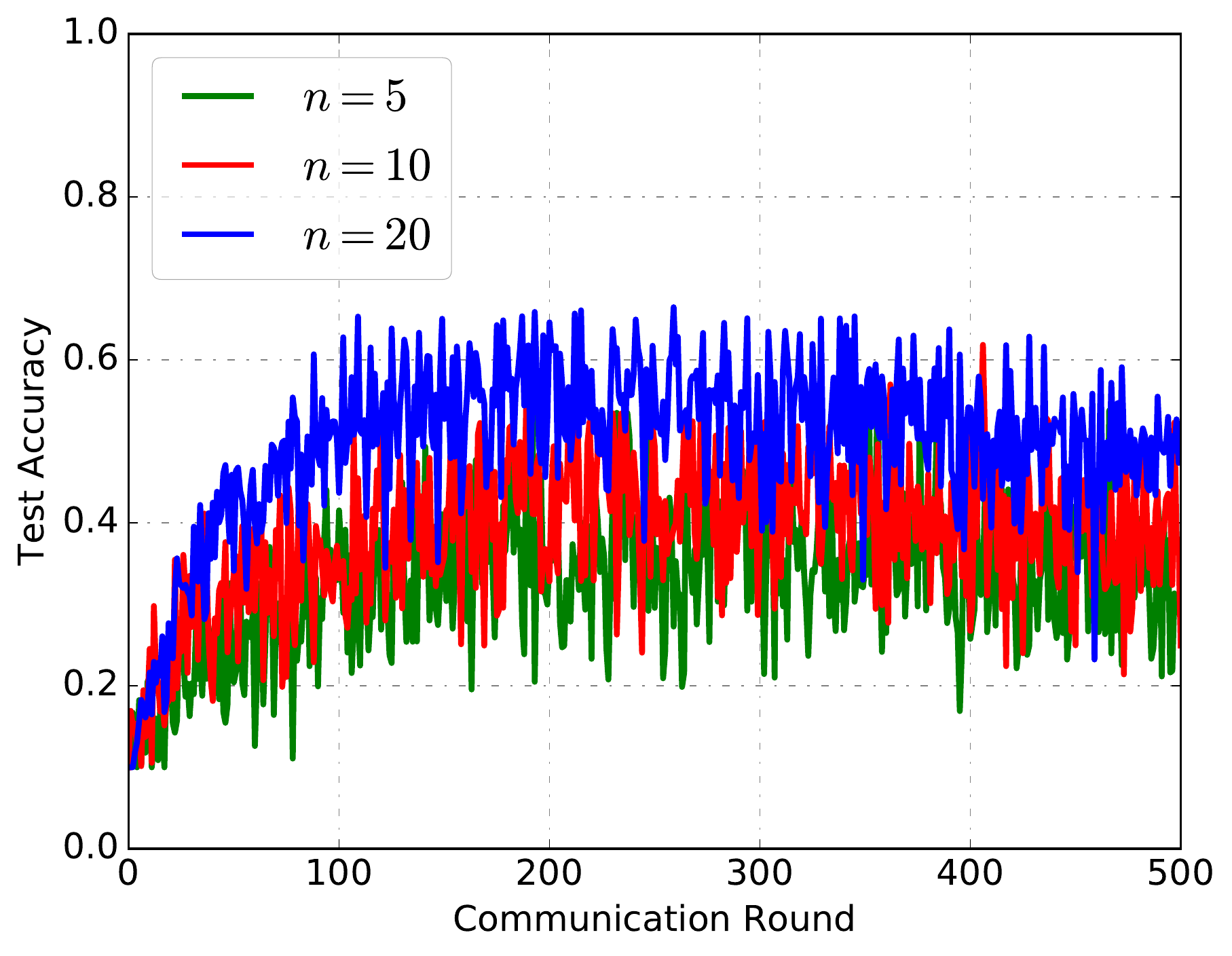}
		\caption{Non-IID.}
	\end{subfigure}
	\caption{Test accuracy with respect to worker number on CIFAR-10 dataset.} 
	\label{fig4}
\end{figure*}

\begin{figure*}[!t]
	\centering
	\begin{subfigure}[b]{0.4\textwidth}
		\includegraphics[width=\textwidth]{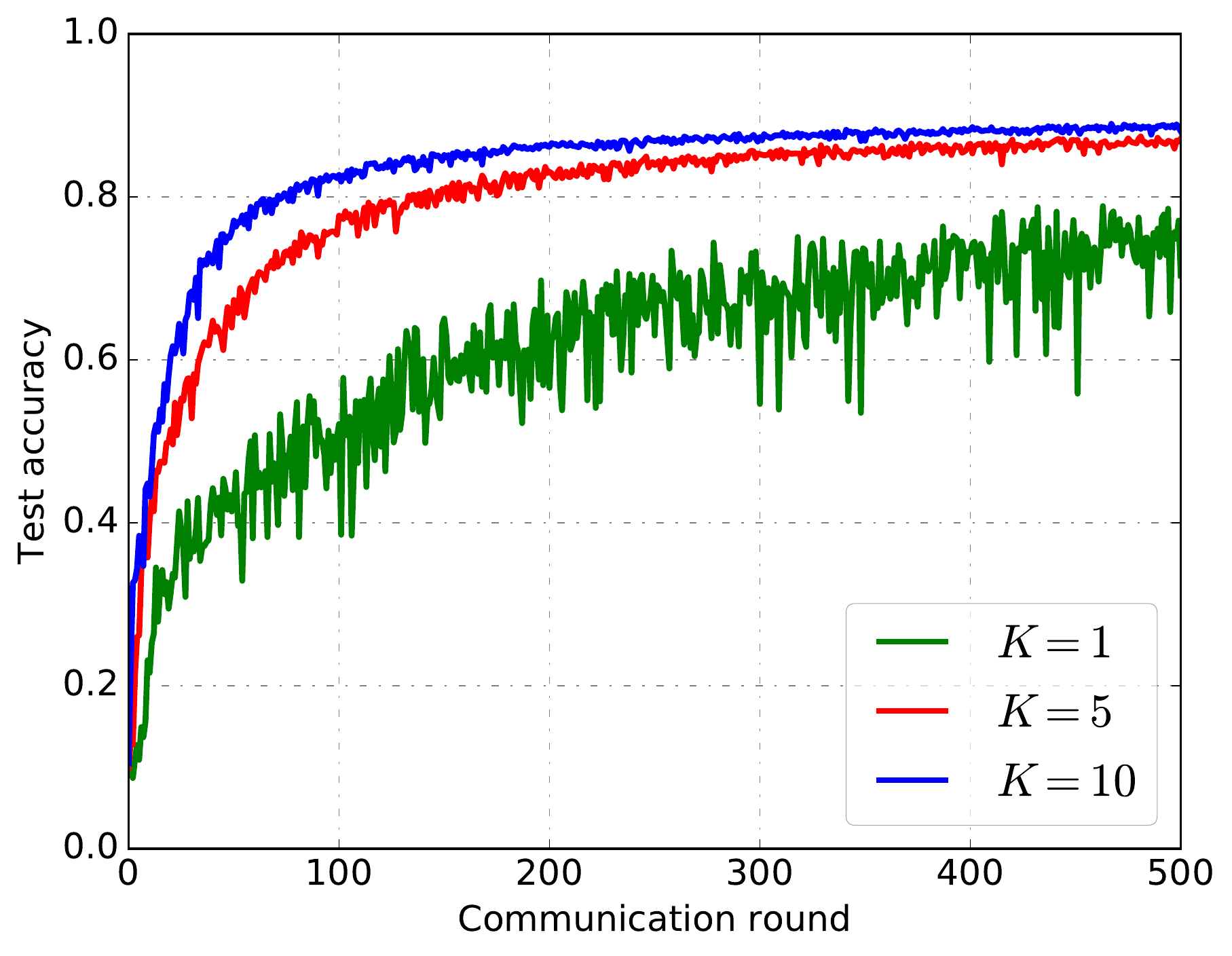}
		\caption{IID.}
	\end{subfigure}
	\hskip1em
	\begin{subfigure}[b]{0.4\textwidth}
		\includegraphics[width=\textwidth]{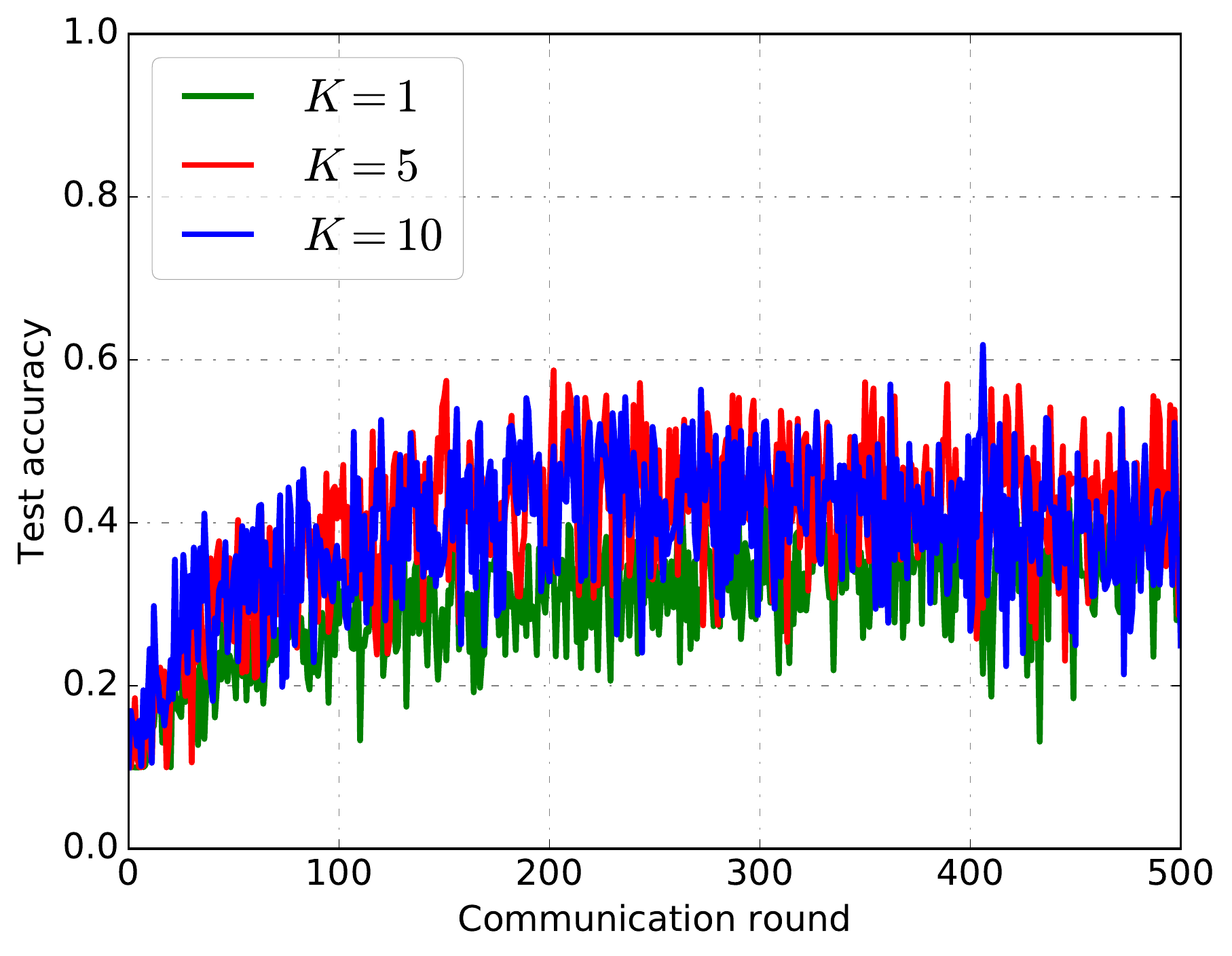}
		\caption{Non-IID.}
	\end{subfigure}
	\caption{Test accuracy with respect to different local steps on CIFAR-10 dataset.} 
	\label{fig5}
\end{figure*}

\subsection{Discussion}
\textbf{Impact of non-i.i.d. datasets:}
Figure~\ref{fig01} shows the results of training loss (top) and test accuracy (bottom) for three models under different non-i.i.d. datasets with full and partial worker participation on MNIST.
We can see that the FedAvg algorithm converges under non-i.i.d. datasets with a proper learning rate choice in these cases.
We believe that the major challenge in FL is the non-i.i.d. datasets.
For these datasets with a lower degree of non-i.i.d., the FedAvg algorithm can achieve a good result compared with the i.i.d. case.
For example, when the local dataset in each worker has five digits ($p = 5$) with full (partial) worker participation, the FedAvg algorithm achieves a convergence speed comparable with that of the i.i.d. case ($p = 10$).
This result can be observed in Figure~\ref{fig01} for all three models.
As the degree of non-i.i.d. datasets increases, its negative impact on the convergence is becoming more obvious.
The higher the degree of non-i.i.d., the slower the convergence speed.
As the non-i.i.d. degree increases (from case $p = 10$ to case $p = 1$), it is obvious that the training loss is increasing and test accuracy is decreasing.
For these with high degree of non-i.i.d., the convergence curves oscillate and are highly unstable.
This trend is more obvious for complex models such for CNN in Figure~\ref{fig01}(c).

\textbf{Impact of worker number:}
For full worker participation, the server can have an accurate estimation of the system heterogeneity after receiving the updates for all workers and neutralize this heterogeneity in each communication round.
However, partial worker participation introduces another source of randomness, which leads to zigzagging convergence curves and slower convergence.
In each communication round, the server can only receive a subset of workers based on the sampling strategy.
So the server could only have a coarse estimation of the system heterogeneity and might not be able to neutralize the heterogeneity among different workers for partial worker participation.
This problem is more prominent for highly non-i.i.d. datasets.
It is not unlikely that the digits in these datasets among all active workers are only a proper subset of the total $10$ digits in the original MNIST dataset, especially with highly non-i.i.d. datasets.
For example, for $p = 1$ with $10$ workers in each communication round, it is highly likely that the datasets formed by these ten workers only includes certain small number of digits (say, $4$ or $5$) rather than total $10$ digits.
But for $p = 5$, it is the opposite, that is, the digits in these datasets among these $10$ workers are highly likely to be $10$.
So in each communication round, the server can mitigate system heterogeneity since it covers the training samples with all $10$ digits.
This trend is more obvious for complex models and datasets given the dramatic drop of test accuracy in the result of CIFAR-10 in Figure~\ref{fig4}.

The sample strategy here is random sampling with equal probability without replacement.
In practice, the workers need to be in certain states in order to be able to participate in FL (e.g., in charging or idle states, etc.\citep{eichner2019semi}).
Therefore, care must be taken in sampling and enlisting workers in practice.
We believe that the joint design of sampling schemes, number of workers and the FedAvg algorithm will have a significant impact on the convergence, which needs further investigations.

\textbf{Impact of local steps:}
Figure~\ref{fig2} and Figure~\ref{fig3} shows the results of training loss (top) and test accuracy (bottom) for three models under different local steps with full and partial worker participation respectively.
Figure~\ref{fig5} shows the impact of local steps in CIFAR-10.
One open question of FL is that whether the local steps help the convergence or not.
\citet{li2019convergence} showed a convergence rate $\mathcal{O}(\frac{K}{T})$, i.e., the local steps may hurt the convergence for full and partial worker participation.
In this two figures, we can see that local steps could help the convergence for both full and partial worker participation. 
However, it only has a slight effect on the convergence compared to the effects of non-i.i.d. datasets and number of workers.

\textbf{Comparison with SCAFFOLD:}
We compare SCAFFOLD \citep{karimireddy2019scaffold} with the generalized FedAVg algorithm in this paper in terms of communication rounds, total communication overloads and estimated wall-clock time to achieve certain test accuracy in Table~\ref{tab:addlabel}.
We run the experiments using the same GPU (NVIDIA V100) to ensure the same conditions.
Here, we give a specific comparison for these two algorithms under exact condition.
% The non-i.i.d. dataset is for $digits\_2$ and i.i.d. is for $digits\_10$.
% The learning rate is $\eta_L = 0.1, \eta = 1.0$, and number of local steps $K$ is $5$ epochs.
% We set the target accuracy $\epsilon = 95\%$ for MNIST and $\epsilon = 75\%$ for CIFAR-10.
Note that we divide the total training time to two parts: the computation time when the worker trains the local model and the communication time when information exchanges between the worker and server.
We only compare the computation time and communication time with a fixed bandwidth $20MB/s$ for both uploading and downloading connections.
As shown in Figure~\ref{fig7}, to achieve $\epsilon=75\%$, SCAFFOLD performs less communication round due to the variance reduction techniques.
That is, it spends less time on computation.
However, it needs to communicates as twice as the FedAvg since the control variate to perform variance reduction in each worker needs to update in each round.
In this way, the communication time would be largely prolonged.
% Interestingly, we find that SCAFFOLD is less stable in partial worker participation or large local steps case.
% Note that for variance reduction, the true gradient is needed, which is usually measured by averaging the gradient of all the data samples.
% In order to save the complexity, this true gradient is calculated once for several steps.
% We suppose that in order to get an accurate estimate of the true gradient to perform variance reduction, this true gradient should be synchronized pretty often.
% In partial worker participation or large local steps case, the estimated true gradient is not accurate enough so that unstable convergence occurs.

\begin{figure*}[!ht]
	\centering
	{\includegraphics[width=0.8\textwidth]{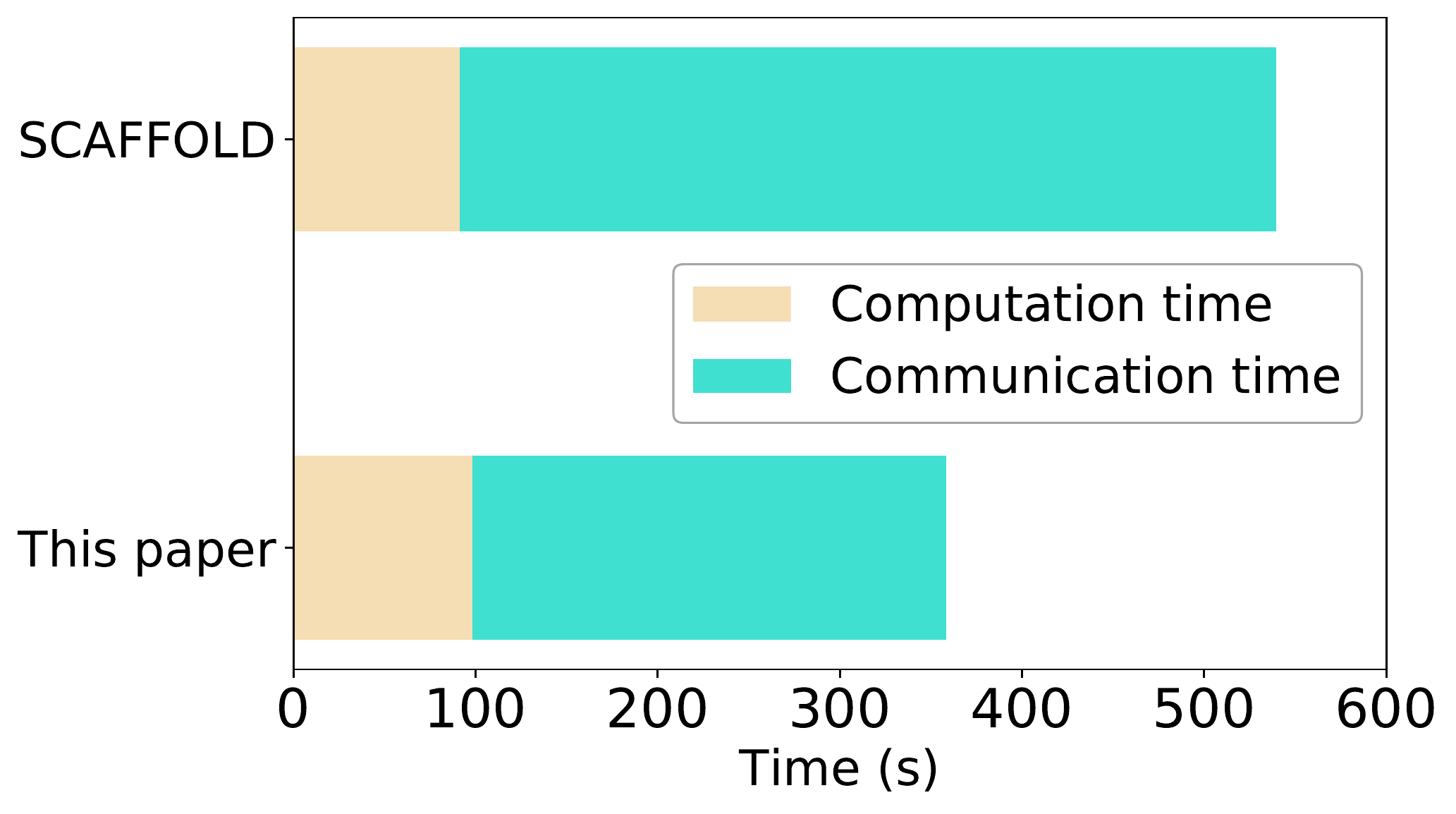}} 
	\caption{Wall-clock time to achieve test accuracy $\epsilon=75\%$ on CIFAR-10 dataset.} 
	\label{fig7}
\end{figure*}

\end{document}